\documentclass[11pt,a4paper]{article}

\title{A Single Stepsize Suffices for Unprojected Linear TD(0):\\Simultaneous Robust and Fast Rates via Polyak--Ruppert Averaging}

\author{Wei-Cheng Lee\\
\small King Abdullah University of Science and Technology (KAUST)\\
\small Thuwal, 23955-6900, Kingdom of Saudi Arabia\\
\small \texttt{weicheng.frank.lee@gmail.com}
\and
Francesco Orabona\\
\small King Abdullah University of Science and Technology (KAUST)\\
\small Thuwal, 23955-6900, Kingdom of Saudi Arabia\\
\small \texttt{francesco@orabona.com}}
\date{}

\usepackage{times, mathtools, dsfont}
\usepackage{algorithm, algorithmic}
\usepackage{titlesec}
\usepackage{tikz}

\usetikzlibrary{arrows.meta, positioning}
\usepackage{booktabs}
\usepackage{xcolor}
\usepackage{colortbl}
\usepackage{makecell}
\usepackage{subcaption}
\usepackage{etoolbox}
\usepackage{amsfonts, amsmath, amssymb}
\usepackage{amsthm}
\usepackage{bm}
\usepackage{natbib}
\usepackage{url}
\usepackage{a4wide}
\usepackage{hyperref}
\AtBeginEnvironment{assumption}{\phantomsection}

\renewcommand{\Pr}{\field{P}}

\newcommand{\bb}{\boldsymbol{b}}

\newcommand{\be}{\boldsymbol{e}}
\newcommand{\bg}{\boldsymbol{g}}

\newcommand{\bx}{\boldsymbol{x}}

\newcommand{\bA}{\boldsymbol{A}}

\newcommand{\bD}{\boldsymbol{D}}

\newcommand{\bI}{\mathbf{I}}
\newcommand{\bL}{\boldsymbol{L}}

\newcommand{\bP}{\boldsymbol{P}}

\newcommand{\bxi}{\boldsymbol{\xi}}

\newcommand{\bv}{\boldsymbol{v}}
\newcommand{\bzero}{\boldsymbol{0}}

\newcommand{\bdelta}{\boldsymbol{\delta}}

\newcommand{\bzeta}{\boldsymbol{\zeta}}
\newcommand{\bSigma}{\boldsymbol{\Sigma}}
\newcommand{\btheta}{\boldsymbol{\theta}}

\newcommand{\bphi}{\boldsymbol{\phi}}
\newcommand{\bPhi}{\boldsymbol{\Phi}}

\newcommand{\bV}{\boldsymbol{V}}

\newcommand{\field}[1]{\mathbb{#1}}

\newcommand{\R}{\field{R}}

\newcommand{\E}{\field{E}}

\newcommand{\mixing}{\tau_{\mathrm{mix}}}
\newcommand{\F}{\mathcal{F}}

\newcommand{\norm}[1]{\left\|{#1}\right\|}

\newcommand{\indicator}{\mathbf{1}}

\newcommand{\Ip}[2]{\langle {#1}, {#2} \rangle}

\NewDocumentCommand{\Ec}{o m o}{
  \IfNoValueTF{#3}{\mathbb{E}}{\mathbb{E}_{#3}}
  \IfNoValueTF{#1}
    {\!\left[#2\right]}
    {\!\left[#2\;\middle|\;#1\right]}
}

\usepackage{pifont}

\newcommand{\rbase}{\ensuremath{R_{\mathrm{base}}}}
\newcommand{\rbases}{\ensuremath{R_{\mathrm{base}}^2}}

\newcommand{\rmax}{\ensuremath{R_{\mathrm{max}}}}
\newcommand{\rmaxs}{\ensuremath{R_{\mathrm{max}}^2}}
\newcommand{\rinf}{\ensuremath{r_{\infty}}}
\newcommand{\phiinf}{\ensuremath{\phi_{\infty}}}

\newcommand{\phiinfs}{\ensuremath{\phi_\infty^2}}
\newcommand{\exit}{T_{\mathrm{exit}}}

\theoremstyle{plain}
\newtheorem{theorem}{Theorem}[section]
\newtheorem{lemma}[theorem]{Lemma}
\newtheorem{corollary}[theorem]{Corollary}
\theoremstyle{definition}
\newtheorem{assumption}[theorem]{Assumption}
\theoremstyle{remark}
\newtheorem{remark}[theorem]{Remark}

\begin{document}

\maketitle

\begin{abstract}
We study linear TD(0) under Markovian sampling, where data are generated along a single trajectory.
We provide high-probability guarantees for a plain \emph{unprojected} TD(0) algorithm with Polyak--Ruppert (PR) averaging, using a \emph{single} stepsize schedule $\eta_t \propto 1/(\mixing\log(t)\,\sqrt{t})$ that depends on mixing time but requires \emph{no prior knowledge of the curvature parameter $\omega$}.
Our first result shows that such a choice of the stepsize guarantees that the TD(0) iterates are automatically and uniformly bounded \emph{with high probability}, without projections and without any stability argument based on $\omega$.
Building on this result, we establish a simultaneous high-probability convergence guarantee for the PR average: the same stepsize yields both a robust curvature-free $\widetilde{\mathcal O}(\mixing/\sqrt{T})$ rate and a fast curvature-dependent $\widetilde{\mathcal O}(\mixing^2/(\omega T))$ rate, with the bound taking the minimum of the two.
The core technical ingredient is a Poisson-equation toolkit for geometrically mixing Markov chains, which decomposes Markov noise into a martingale term plus a controlled remainder and enables a new self-bounding inductive argument for pathwise stability.

\end{abstract}

\medskip
\noindent\textbf{Keywords:} Reinforcement Learning, Temporal Difference Learning, Finite-Time Analysis, Markovian Noise, Stochastic Approximation
\medskip

\section{Introduction}

Temporal-difference (TD) learning is one of the central algorithmic primitives in reinforcement learning, used for policy evaluation and as a building block for control methods such as actor--critic. When the state space is large, TD is typically combined with function approximation; in the linear case, the resulting TD(0) recursion admits a clean stochastic-approximation interpretation and converges
asymptotically under standard conditions~\citep{sutton1988learning,tsitsiklis1996analysis,kushner2010stochastic}.

Despite this classical theory, obtaining \emph{finite-time}, \emph{high-probability} guarantees for TD(0) remains challenging, especially in the practically relevant \emph{Markovian sampling} regime where data are generated along a single trajectory.

A major obstacle is that TD(0) is not a standard stochastic gradient method:
Even in the linear setting, the update is generally biased and non-symmetric, and the noise is temporally correlated. Moreover, the magnitude of the TD(0) updates depends on the magnitude of the iterates.
Therefore, existing non-asymptotic analyses rely on additional structure to control the iterates. One common approach is to enforce boundedness via \emph{projections} onto a known ball~\citep[e.g.,][]{bhandari2018finite}, which simplifies concentration arguments but changes the algorithm and requires \emph{a priori} knowledge
of a suitable radius (often tied implicitly to the problem curvature).
A second approach is to exploit a \emph{contractive/curvature} structure of the mean dynamics to argue stability without projections; however, this typically yields stepsize conditions or stability bounds that depend on an unknown curvature parameter, which becomes arbitrarily slow when the curvature is arbitrarily small.

Ideally, one would like to use a \emph{single} stepsize rule that \emph{simultaneously} allows us to obtain rates that are
(i) \emph{robust} (curvature-free) and (ii) \emph{fast} when curvature is favorable. Such a stepsize would adapt to the better of the robust and fast high-probability upper bounds, without prior knowledge of the curvature.

\paragraph{Robust and fast rates.}
To explain the distinction between robust and fast rates, we first briefly describe the potential function we study~\citep{ollivier2018approximate,liu2021temporal} (formally introduced later in~\eqref{eq:dirichlet}),
which naturally captures the TD fixed point error under Markovian sampling.
This potential always has positive curvature $\omega>0$, so
one can hope for a ``fast'' rate of order\footnote{The notation $\widetilde{\mathcal{O}}$ suppresses logarithmic factors..} $\widetilde{\mathcal{O}} (\frac{1}{\omega T})$ (up to mixing and logarithmic factors).
On the other hand, even without curvature assumptions, one can always achieve a ``robust'' rate of order $\widetilde{\mathcal{O}}(\frac{1}{\sqrt{T}})$, again up to mixing and logs. It is obvious that, in the finite-time regime, one rate can be better than the other depending on $\omega$, so both rates should be pursued in different situations.

Now, prior work typically achieves these two regimes with \emph{different} stepsize choices and additional algorithmic modifications~\citep[e.g.,][]{bhandari2018finite}. Moreover, the projection-free high-probability analyses under Markovian noise that we are aware of rely on stability arguments whose constants deteriorate with $\omega$~\citep[e.g.,][]{samsonov2024improved,Durmus2025}.

As far as we know, \emph{obtaining both rates simultaneously with high probability with a single stepsize choice that is independent of $\omega$ and without projections has not previously been established for TD(0)}.
In particular, the key technical difficulty in achieving such a result is to control the random iterates, $\btheta_t$, \emph{pathwise}. Without projections, the update magnitude scales with $\|\btheta_t\|$, and a naive concentration analysis becomes circular: to bound the error one needs bounded updates, but bounded updates require bounded iterates. Moreover, the need to obtain robust rates (i.e., independent of $\omega$) rules out any strategy that shows that the iterates are bounded through a contractive argument based on $\omega$.

\paragraph{Our approach: a self-bounding argument via Poisson equations.}

In this paper, we show that a simple stepsize, proportional to $\frac{1}{\mixing\log t\sqrt{t}}$ and independent of $\omega$, guarantees that TD(0) achieves both fast and robust rates with high probability.

Our main technical novelty to circumvent the above issues is a new \emph{self-bounding} argument that closes this loop \emph{with high probability} \emph{without} appealing to curvature $\omega$ and \emph{without} artificial projections.
Concretely, we develop a Poisson-equation toolkit for geometrically mixing Markov chains that decomposes additive Markov noise into a martingale term plus a controlled remainder.
This decomposition allows us to control the Markovian bias terms that appear in the basic potential expansion of TD, and to run an induction in which boundedness at times $\le t-1$ implies boundedness at time $t$.
The resulting uniform bound on $\sup_t \ \|\btheta_t\|$ holds with high probability and is independent of the curvature $\omega$.

Once boundedness is established, we can analyze the convergence guarantee.
The bounded-iterates event provides the missing ingredient needed to control the quadratic variation of the martingale terms (again through the Poisson decomposition), yielding high-probability bounds for the averaged iterate $\bar{\btheta}_T$.
Importantly, the \emph{same} stepsize delivers:
\vspace{-0.1cm}
\begin{itemize}
\item a curvature-free robust rate of $\widetilde{\mathcal{O}}(\frac{1}{\sqrt{T}})$, through a classic averaged-stochastic-gradient-descent-like analysis, as one would expect from the choice of the stepsize;
\vspace{-0.2cm}
\item a curvature-dependent fast rate $\widetilde{\mathcal O}(1/(\omega T))$, without requiring $\omega$ to set the stepsize, thanks to a Polyak--Ruppert averaging analysis.
\end{itemize}
\vspace{-0.1cm}

\section{Related Work}
\label{sec:rel}

In this section, we briefly survey the main results on the convergence guarantees for TD learning with linear function approximation, as well as the main tools we use in our proofs. Later, in Section~\ref{sec:comparison}, we will give a detailed comparison in terms of convergence rates.

Early convergence results for TD learning with linear function approximation trace back to \citet{tsitsiklis1996analysis}, who interpreted TD updates through the lens of stochastic approximation~\citep{kushner2010stochastic}. While foundational, that theory is largely asymptotic and does not yield explicit non-asymptotic rates. Finite-time guarantees were later developed in a sequence of works~\citep{korda2015td,lakshminarayanan2018linear,dalal2018finite}, but these analyses typically assume i.i.d.\ samples drawn from the stationary distribution. This assumption sidesteps the temporal dependence present in most reinforcement-learning pipelines, where data are generated sequentially along a single Markov-chain trajectory. Accounting for such Markovian correlations substantially complicates the analysis, even for TD(0).

The seminal work of \citet{bhandari2018finite} provided the first finite-time treatment under Markovian sampling. They obtained both fast and robust rates, using two different stepsizes. However, their arguments (and even subsequent refinements for the robust rates such as \citet{liu2021temporal}) rely on an explicit projection step to keep the iterates controlled.

The need for a projection is removed in subsequent work in the fast regime by exploiting the contractive nature of the update due to the presence of the curvature.
In particular, using a control-theoretic framework, \citet{srikant2019finite} derived finite-time error bounds for linear TD with Markovian data without projections by establishing a contraction-like behavior via Lyapunov theory. However, their stepsize depends on the curvature of the potential function, which is generally unknown in practice. Building on this direction, \citet{patil2023finite} eliminated the need to know the curvature parameter when choosing stepsizes, at the cost of introducing a data-dropping modification of TD. Related progress has also focused on simplifying proofs and relaxing algorithmic modifications: \citet{mitra2025simple} proposed a streamlined inductive two-step analysis, \citet{li2025towards} used exponentially decaying stepsizes to avoid data dropping, and \citet{sun2022finite} extended fast-rate analyses to neural networks in the NTK regime. More recently, \citet{samsonov2024improved} strengthened the analysis of \citet{patil2023finite} and obtained high-probability bounds without using projections. Closest to the high-probability projection-free line, \citet{ChandakTD0Concentration2025} obtained all-time high-probability control for unprojected TD($0$) without data dropping. Their analysis, however, remains contractive in nature, and the resulting constants and burn-in depend on the curvature.

The only result that removes the need for projections with stepsizes independent of the curvature of the potential function, and without using curvature in the analysis, is \citet{lee2025finite}, which proves that the iterates of TD(0) are bounded in expectation. Our approach is inspired by their method, but differs substantially because we need high-probability bounds.

Several closely related works study cheap linear-update methods that are informative but not direct head-to-head baselines for plain TD($0$). \citet{raj2022faster} analyze linear composition optimization and gradient TD methods for off-policy MSPBE minimization, using primal--dual updates rather than the semi-gradient TD($0$) recursion studied here, and obtain adaptive finite-time guarantees but require bounded iterates or prior knowledge of the curvature. In the tabular setting, \citet{li2020sample} show that asynchronous Q-learning, and hence tabular TD, can have a leading statistical term matching the i.i.d.\ sampling complexity, with mixing entering only as an additive transient cost; they also provide data-driven stepsizes that avoid prior knowledge of $\mixing$.  These results address the same broad practical question of obtaining fast, stable, cheap linear updates under limited tuning information, but they do not directly apply to unprojected linear TD(0).

As far as we know, there is no prior work that analyzes the possibility of getting both fast and robust rates with a stepsize schedule independent of the curvature and without using projections for linear TD learning.

The use of Polyak--Ruppert averaging in the analysis of TD learning can be traced back at least to \citet{konda2002actorcritic}. To the best of our knowledge, \citet{korda2015td} were the first to leverage this technique to establish finite-time guarantees. In particular, Polyak--Ruppert (tail) averaging has been employed to enable stepsize choices that avoid explicit dependence on the curvature parameter $\omega$~\citep[see, e.g.,][]{lakshminarayanan2018linear,patil2023finite,samsonov2024improved}.

Finally, we make use of the Poisson equation to obtain high-probability bounds. This is a standard tool, used widely in the literature on TD learning and stochastic approximation with Markovian noise~\citep[see, e.g.,][]{ChandakTD0Concentration2025, blaser2024asymptotic}.

\section{Setting and Assumptions}
\label{sec:assumptions}

We work directly with the finite discounted Markov reward process (MRP) induced by a fixed policy $\mu$. Thus the action variables have already been averaged under $\mu$. Let $\mathcal{S}\coloneqq\{1,\dots,n\}$ be a finite state space, let $P^\mu:\mathcal{S}\times\mathcal{S}\to[0,1]$ be the induced transition kernel, and let $r:\mathcal{S}\times\mathcal{S}\to\mathbb{R}$ be the one-step reward function. Given an initial state $s_0$, the state process satisfies $\Pr(s_{t+1}=j\mid s_t=i)=P^\mu(i,j)$ for $i,j\in\mathcal{S}$. We write $\bP^\mu\in\mathbb{R}^{n\times n}$ for the corresponding transition matrix, with entries $\bP^\mu(i,j)=P^\mu(i,j)$, and fix a discount factor $0<\gamma<1$.

We are interested in the \emph{policy evaluation} problem in reinforcement learning \citep{sutton1998reinforcement,MannorMT-RLbook}. For the MRP induced by $\mu$, the value function is
\[
V^\mu(s) \coloneqq \Ec[s_0=s]{\sum_{t=0}^\infty\gamma^t r(s_t,s_{t+1})}.
\]
The value function $\bV^\mu$ can be viewed as a vector in $\mathbb{R}^n$ and is the unique fixed point of the Bellman expectation operator $T^\mu:\mathbb{R}^n\to\mathbb{R}^n$ defined by
\[
(T^\mu \bV)(s) \coloneqq r^\mu(s)+\gamma\sum_{s'\in\mathcal{S}}P^\mu(s,s')\,\bV(s'),\qquad s\in\mathcal{S},
\]
where $r^\mu(s)\coloneqq\sum_{s'\in\mathcal{S}}P^\mu(s,s')\,r(s,s')$.

When $n$ becomes large, solving the Bellman expectation equation via matrix inversion becomes infeasible. Instead, we consider Temporal-Difference (TD) learning with linear function approximation \citep{sutton1988learning}. Let $\bphi:\mathcal{S}\to\mathbb{R}^d$ be a fixed feature mapping with $d\ll n$, and let $\btheta\in\mathbb{R}^d$. We approximate $\bV^\mu$ by $\bV_{\btheta}(s)\coloneqq\btheta^\top\bphi(s)$, or in vector form $\bV_{\btheta}=\bPhi\btheta$, where the feature matrix $\bPhi\in\mathbb{R}^{n\times d}$ has row $\bphi(s)^\top$ corresponding to state $s$.

\paragraph{Structural assumption.}
Throughout the paper, we impose the following standard condition on the induced MRP and the feature matrix.
\begin{assumption}[Ergodic induced MRP and full-rank features]
\label{ass:mrp-ergodic-full-rank}
The transition matrix $\bP^\mu$ is irreducible and aperiodic, and the feature matrix $\bPhi$ has full column rank.
\end{assumption}

Since $\bP^\mu$ is finite, irreducible, and aperiodic, it admits a unique stationary distribution $\pi$. We write $\bD\coloneqq\mathrm{diag}(\pi)$ and use the value-space norm $\norm{\bv}_{\bD}\coloneqq\sqrt{\bv^\top\bD\bv}$ for $\bv\in\mathbb{R}^n$. We also define $\Sigma\coloneqq\bPhi^\top\bD\bPhi$. Under Assumption~\ref{ass:mrp-ergodic-full-rank}, $\pi(s)>0$ for every $s\in\mathcal{S}$, and hence $\Sigma\succ0$.

For any $\btheta_0\in\mathbb{R}^d$, the TD($0$) algorithm is defined for $t\ge0$ by
\[
\btheta_{t+1}=\btheta_t+\eta_t\bg(\btheta_t,Z_t),\qquad Z_t\coloneqq(s_t,s_{t+1}),
\]
where $\eta_t>0$ is the stepsize and
\[
\bg(\btheta,Z_t)\coloneqq\bigl(r(s_t,s_{t+1})+\gamma\bphi(s_{t+1})^\top\btheta-\bphi(s_t)^\top\btheta\bigr)\bphi(s_t).
\]
When the dependence on randomness and iterates is clear, we write $\bg_t\coloneqq\bg(\btheta_t,Z_t)$.

The TD($0$) update can equivalently be written as $\btheta_{t+1}=\btheta_t+\eta_t(\bb_{Z_t}-\bA_{Z_t}\btheta_t)$, where
\[
\bA_{Z_t}\coloneqq\bphi(s_t)\bigl(\bphi(s_t)-\gamma\bphi(s_{t+1})\bigr)^\top\in\mathbb{R}^{d\times d},
\qquad
\bb_{Z_t}\coloneqq r(s_t,s_{t+1})\bphi(s_t)\in\mathbb{R}^d.
\]
Let $\mathcal{Z}\coloneqq\{(i,j)\in\mathcal{S}\times\mathcal{S}:P^\mu(i,j)>0\}$ be the state space of the transition chain $Z_t=(s_t,s_{t+1})$. Its Markov kernel is
\[
P_Z\bigl((i,j),(j',k)\bigr)=
\begin{cases}
P^\mu(j,k), & j'=j,\\
0, & j'\neq j,
\end{cases}
\qquad (i,j),(j',k)\in\mathcal{Z},
\]
and its stationary distribution is $\pi_Z(i,j)\coloneqq\pi(i)P^\mu(i,j)$ for $(i,j)\in\mathcal{Z}$. Define the population TD matrix and vector by $\bA\coloneqq\E_{Z\sim\pi_Z}[\bA_Z]$ and $\bb\coloneqq\E_{Z\sim\pi_Z}[\bb_Z]$.

For $z\in\mathcal{Z}$, define
\[
\bxi(z)\coloneqq \bb_z-\bA_z\btheta^*,
\qquad
\bdelta(z)\coloneqq \bA-\bA_z.
\]
When $z=Z_t$, we write
\[
\bxi_t\coloneqq\bxi(Z_t),
\qquad
\bdelta_t\coloneqq\bdelta(Z_t).
\]

As shown in Lemma~\ref{lem:pos-stable} below, Assumption~\ref{ass:mrp-ergodic-full-rank} implies that $\bA$ is nonsingular. Hence the expected TD linear system $\bA\btheta^*=\bb$ has a unique solution $\btheta^*\in\mathbb{R}^d$. It is known that the corresponding value approximation $\bV_{\btheta^*}=\bPhi\btheta^*$ satisfies the projected Bellman equation \citep{tsitsiklis1996analysis}
\[
\bPhi\btheta^*=\Pi_{\bD}T^\mu(\bPhi\btheta^*),
\]
where $\Pi_{\bD}$ is the orthogonal projection operator onto the subspace $\{\bPhi\bx:\bx\in\mathbb{R}^d\}$ with respect to $\norm{\cdot}_{\bD}$. Moreover,
\[
\norm{\bV^\mu-\bV_{\btheta^*}}_{\bD}
\le
\frac{1}{1-\gamma}\norm{\bV^\mu-\Pi_{\bD}\bV^\mu}_{\bD}.
\]

The \emph{mean-path TD update} is defined as
\[
\bar\bg(\btheta)\coloneqq
\E_{Z\sim\pi_Z}\bigl[\bigl(r(s,s')+\gamma\bphi(s')^\top\btheta-\bphi(s)^\top\btheta\bigr)\bphi(s)\bigr].
\]
Using $\bA\btheta^*=\bb$, the mean-path TD update is linear in $\be\coloneqq\btheta-\btheta^*$ for all $\btheta\in\mathbb{R}^d$:
\begin{equation}
\label{eq:bar-g-linear}
\bar\bg(\btheta)=\bb-\bA\btheta=-\bA(\btheta-\btheta^*),\qquad \bar\bg(\btheta^*)=\boldsymbol{0}.
\end{equation}

To characterize the convergence of TD iterates $\btheta_t$ to the TD fixed point $\btheta^*$, we introduce the Dirichlet semi-norm $\|\bx\|_{\mathrm{Dir}}^2\coloneqq\bx^\top\bL_{\mathrm{Dir}}\bx$, where
\[
\bL_{\mathrm{Dir}}\coloneqq\bD-\tfrac12\bigl(\bD\bP^\mu+(\bP^\mu)^\top\bD\bigr).
\]
By Lemma~\ref{lem:L-dir-properties}, $\|\cdot\|_{\mathrm{Dir}}$ is indeed a semi-norm since $\bL_{\mathrm{Dir}}$ is positive semidefinite. The potential function $f$ that we study \citep{ollivier2018approximate,liu2021temporal} is defined as
\begin{equation}
\label{eq:dirichlet}
f(\btheta)\coloneqq(1-\gamma)\,\|\bV_{\btheta}-\bV_{\btheta^*}\|_{\bD}^2+\gamma\,\|\bV_{\btheta}-\bV_{\btheta^*}\|_{\mathrm{Dir}}^2.
\end{equation}
In particular, for $\be=\btheta-\btheta^*$, since $\be^\top\bA\be=\be^\top\bA^\top\be$, a direct calculation yields
\begin{equation}
\label{eq:f-quadratic}
f(\btheta)-f(\btheta^*)=\tfrac12\be^\top(\bA+\bA^\top)\be=\be^\top\bA\be.
\end{equation}
Moreover, using~\eqref{eq:bar-g-linear}, we have
\begin{equation}
\label{eq:dirichlet-monotone}
\langle\bar\bg(\btheta),\btheta^*-\btheta\rangle
=\langle-\bA\be,-\be\rangle
=\be^\top\bA\be
=f(\btheta)-f(\btheta^*).
\end{equation}
Since $f(\btheta)-f(\btheta^*)\ge0$ for $\btheta\neq\btheta^*$, the mean-path update direction $\bar\bg(\btheta)$ is aligned with the direction $\btheta^*-\btheta$. In particular, TD($0$) under i.i.d.\ sampling from $\pi_Z$ acts as a descent method for $f$. Additionally, any convergence guarantee on $f$ can be translated to a guarantee in the projected value norm:
\begin{equation}
\label{eq:potential_transform}
\|\btheta-\btheta^*\|_{\bSigma}^2=\|\bV_{\btheta}-\bV_{\btheta^*}\|_{\bD}^2\le\frac{f(\btheta)}{1-\gamma}.
\end{equation}

\paragraph{Constants and structural consequences.}
We next collect the constants and matrix properties used in the high-probability analysis. The boundedness constants are fixed by the finite state space, while the mixing and curvature constants follow from Assumption~\ref{ass:mrp-ergodic-full-rank}.

\begin{lemma}[Feature and reward bounds]
\phantomsection\label{lem:bounded}
There exist finite constants $\phi_\infty,r_\infty<\infty$ such that $\|\bphi(s)\|\le\phi_\infty$ for all $s\in\mathcal{S}$ and $|r(s,s')|\le r_\infty$ for all $s,s'\in\mathcal{S}$. For $z=(s,s')\in\mathcal{Z}$, set
\[
\varepsilon_z\coloneqq r(s,s')+\gamma\bphi(s')^\top\btheta^*-\bphi(s)^\top\btheta^*.
\]
Then $\bxi(z)=\varepsilon_z\bphi(s)$ and
\[
\|\bxi(z)\|\le r_\infty\phi_\infty+2\phi_\infty^2\|\btheta^*\|.
\]
\end{lemma}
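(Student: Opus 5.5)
The statement has two parts: the existence of the constants $\phi_\infty,r_\infty$, and the structure and norm bound for $\bxi(z)$. Both follow from finiteness of $\mathcal{S}$ and direct algebra.

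For the first part, $\mathcal{S}=\{1,\dots,n\}$ is finite, so I set $\phi_\infty\coloneqq\max_{s\in\mathcal{S}}\|\bphi(s)\|$ and $r_\infty\coloneqq\max_{s,s'\in\mathcal{S}}|r(s,s')|$. These are maxima over finitely many real numbers, so they are finite, and the two uniform bounds hold by definition.

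For the identity $\bxi(z)=\varepsilon_z\bphi(s)$, fix $z=(s,s')\in\mathcal{Z}$ and expand the definitions $\bb_z=r(s,s')\bphi(s)$ and $\bA_z=\bphi(s)(\bphi(s)-\gamma\bphi(s'))^\top$. This gives
\[
\bA_z\btheta^*=\bphi(s)\bigl(\bphi(s)^\top\btheta^*-\gamma\bphi(s')^\top\btheta^*\bigr),
\]
where we use that $(\bphi(s)-\gamma\bphi(s'))^\top\btheta^*$ is a scalar. Therefore
\[
\bxi(z)=\bb_z-\bA_z\btheta^*=\bigl(r(s,s')+\gamma\bphi(s')^\top\btheta^*-\bphi(s)^\top\btheta^*\bigr)\bphi(s)=\varepsilon_z\bphi(s).
\]

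For the norm bound, write $\|\bxi(z)\|=|\varepsilon_z|\,\|\bphi(s)\|\le\phi_\infty|\varepsilon_z|$. By the triangle inequality and Cauchy--Schwarz,
\[
|\varepsilon_z|\le r_\infty+\gamma\phi_\infty\|\btheta^*\|+\phi_\infty\|\btheta^*\|\le r_\infty+2\phi_\infty\|\btheta^*\|,
\]
where the last step uses $\gamma<1$. Multiplying by $\phi_\infty$ gives $\|\bxi(z)\|\le r_\infty\phi_\infty+2\phi_\infty^2\|\btheta^*\|$.

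There is no real obstacle here. The only point to check is that $\btheta^*$ is well defined and finite. This holds because $\bA$ is nonsingular by Lemma~\ref{lem:pos-stable}, as stated earlier in the paper. Also, membership in $\mathcal{Z}$ is not needed for the bound: it holds for every pair $(s,s')$.
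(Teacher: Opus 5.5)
Your proposal is correct and follows the same route as the paper, which simply states that the lemma is immediate from the finiteness of $\mathcal{S}$ and the triangle inequality. You spell that out: you take maxima over the finite state space, expand $\bb_z-\bA_z\btheta^*$ to get $\varepsilon_z\bphi(s)$, and bound $|\varepsilon_z|\le r_\infty+(1+\gamma)\phi_\infty\|\btheta^*\|\le r_\infty+2\phi_\infty\|\btheta^*\|$.
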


\begin{lemma}[Geometric mixing of $(Z_t)$]
\label{lem:mixing}
The transition chain $Z_t=(s_t,s_{t+1})$ has stationary distribution $\pi_Z$. Moreover, there exists a finite constant $\mixing\ge1$ such that
\[
\sup_{z\in\mathcal{Z}}\|P_Z^k(z,\cdot)-\pi_Z\|_{\mathrm{TV}}
\le
4\cdot2^{-k/\mixing},
\qquad k\ge0,
\]
where $P_Z^k(z,z')\coloneqq\Pr(Z_k=z'\mid Z_0=z)$.
\end{lemma}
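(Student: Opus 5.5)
The statement to prove is Lemma~\ref{lem:mixing}. The plan has two parts. First, lift geometric ergodicity of the finite chain $(s_t)$ to the transition chain $(Z_t)$. Second, convert the resulting bound into the form $4\cdot 2^{-k/\mixing}$ by choosing $\mixing$ appropriately.

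\emph{Stationarity.} For $(j,k)\in\mathcal{Z}$ we compute
\[
\sum_{(i,j')}\pi_Z(i,j')P_Z((i,j'),(j,k))=\sum_i \pi(i)P^\mu(i,j)P^\mu(j,k)=\pi(j)P^\mu(j,k)=\pi_Z(j,k),
\]
using the stationarity $\pi^\top\bP^\mu=\pi^\top$. So $\pi_Z$ is stationary for $P_Z$.

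\emph{Geometric convergence of $(s_t)$.} Since $\bP^\mu$ is finite, irreducible and aperiodic (Assumption~\ref{ass:mrp-ergodic-full-rank}), it is primitive. Hence there is $m\ge1$ with $(\bP^\mu)^m(i,j)\ge \beta>0$ for all $i,j$. This is a Doeblin minorization: $(\bP^\mu)^m(i,\cdot)\ge \beta n\,\nu(\cdot)$ with $\nu$ uniform. The standard coupling/contraction argument then gives
\[
\max_i\|(\bP^\mu)^k(i,\cdot)-\pi\|_{\mathrm{TV}}\le (1-n\beta)^{\lfloor k/m\rfloor}\le C\rho^k,
\]
with $\rho=(1-n\beta)^{1/m}<1$ and $C=(1-n\beta)^{-1}$. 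One could equally cite the convergence theorem for ergodic finite chains.

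\emph{Lifting to $Z$.} Start from $Z_0=z=(i,j)$. For $k\ge1$ the law of $Z_k=(s_k,s_{k+1})$ is $P_Z^k(z,(a,b))=(\bP^\mu)^{k-1}(j,a)P^\mu(a,b)$, while $\pi_Z(a,b)=\pi(a)P^\mu(a,b)$. Both measures share the same conditional kernel for the second coordinate, so
\[
\|P_Z^k(z,\cdot)-\pi_Z\|_{\mathrm{TV}}=\|(\bP^\mu)^{k-1}(j,\cdot)-\pi\|_{\mathrm{TV}}\le C\rho^{k-1}.
\]
For $k=0$ the distance is at most $1\le 4$.

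\emph{Choosing $\mixing$.} We need $C\rho^{k-1}\le 4\cdot 2^{-k/\mixing}$ for all $k\ge1$. Write $\rho=2^{-1/\tau_0}$. If $\mixing\ge \tau_0$, then $C\rho^{k-1}\le (C/\rho)\,2^{-k/\mixing}$. The constant $C/\rho$ may exceed $4$. To absorb it, note that $2^{-k/\mixing}\ge 2^{-k/\tau_0}$ for $\mixing \ge \tau_0$. Hence
\[
C\rho^{-1}2^{-k/\tau_0}\le 4\cdot 2^{-k/\mixing}
\quad\Longleftrightarrow\quad
k\Bigl(\tfrac1{\tau_0}-\tfrac1{\mixing}\Bigr)\ge \log_2\tfrac{C}{4\rho}.
\]
This holds for all $k\ge k_0$ once we take, say, $\mixing=2\tau_0$, since then $k_0=2\tau_0\log_2(C/(4\rho))$.

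For $k<k_0$ the TV distance is trivially at most $1$. It therefore suffices that $4\cdot 2^{-k_0/\mixing}\ge1$, i.e.\ $k_0\le 2\mixing$. We can arrange this by enlarging $\mixing$ to $\max\{1,\,2\tau_0,\,2\tau_0\log_2(C/(4\rho))\}$ and re-checking both regimes. Increasing $\mixing$ only weakens the required inequality in each regime, so a finite choice exists.

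This constant bookkeeping is the only mildly fiddly step; everything else is standard finite-chain ergodic theory.
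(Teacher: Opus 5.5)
Your proof is correct. The paper does not really prove this lemma: it cites the convergence theorem for finite irreducible aperiodic chains, \citet[Theorem~4.9]{levin2017markov}, whose proof is essentially your minorization/contraction argument. It leaves implicit both the stationarity of $\pi_Z$ and how that theorem yields the specific form $4\cdot 2^{-k/\mixing}$ with $\mixing\ge1$ for the transition chain.

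The one place where your route genuinely differs is the transfer to $(Z_t)$. Applying the convergence theorem to $P_Z$ directly would require checking that $P_Z$ is irreducible and aperiodic on $\mathcal{Z}$. You instead work only with $\bP^\mu$, which is what Assumption~\ref{ass:mrp-ergodic-full-rank} actually controls. You then use the exact identity $\|P_Z^k(z,\cdot)-\pi_Z\|_{\mathrm{TV}}=\|(\bP^\mu)^{k-1}(j,\cdot)-\pi\|_{\mathrm{TV}}$ for $k\ge1$, which follows from the shared second-coordinate kernel. Your route gives a self-contained argument and shows that the mixing constant of $(Z_t)$ is that of $(s_t)$ up to a one-step delay. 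The citation gives brevity.

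Your constant bookkeeping is also fine. Since $4\cdot 2^{-k/\mixing}$ is nondecreasing in $\mixing$ for every $k\ge0$, it suffices to exhibit one admissible value, and $\mixing=\max\{1,2\tau_0,2\tau_0\log_2(C/(4\rho))\}$ works:
\begin{itemize}
\item For $k\ge\max\{1,\,2\tau_0\log_2(C/(4\rho))\}$ you get $C\rho^{k-1}\le 4\cdot 2^{-k/(2\tau_0)}\le 4\cdot2^{-k/\mixing}$.
\item For smaller $k$ you have $k<\mixing$, so $4\cdot 2^{-k/\mixing}\ge 2$, which exceeds any total variation distance.
\end{itemize}

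One trivial edge case remains. If $n\beta=1$, then $\rho=0$ and $C=(1-n\beta)^{-1}$ is undefined. Replacing $\beta$ by $\beta/2$ removes this, since the minorization still holds.
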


\begin{lemma}[Curvature of the TD matrix]
\label{lem:pos-stable}
The symmetric part of $\bA$ is positive definite. More precisely,
\[
\frac{\bA+\bA^\top}{2}
=(1-\gamma)\bPhi^\top\bD\bPhi+\gamma\bPhi^\top\bL_{\mathrm{Dir}}\bPhi
\succeq
(1-\gamma)\lambda_{\min}(\bPhi^\top\bD\bPhi)\bI_d.
\]
Consequently, one may take $\omega=(1-\gamma)\lambda_{\min}(\bPhi^\top\bD\bPhi)>0$, and
\[
f(\btheta)-f(\btheta^*)
=\be^\top\frac{\bA+\bA^\top}{2}\be
\ge
\omega\|\be\|^2
=\omega\|\btheta-\btheta^*\|^2.
\]
Thus $f$ is $\omega$ star-strongly convex around $\btheta^*$.
\end{lemma}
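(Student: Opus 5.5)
We need to prove Lemma~\ref{lem:pos-stable}. The plan is to compute the symmetric part of $\bA$ directly from its definition as a stationary expectation, identify it as a combination of $\bPhi^\top\bD\bPhi$ and $\bPhi^\top\bL_{\mathrm{Dir}}\bPhi$, and then drop the Dirichlet term using positive semidefiniteness of $\bL_{\mathrm{Dir}}$ (Lemma~\ref{lem:L-dir-properties}).

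\textbf{Step 1: closed form of $\bA$.} Since $\pi_Z(i,j)=\pi(i)P^\mu(i,j)$, we have
\[
\bA=\sum_{i,j}\pi(i)P^\mu(i,j)\,\bphi(i)\bigl(\bphi(i)-\gamma\bphi(j)\bigr)^\top=\bPhi^\top\bD\bPhi-\gamma\,\bPhi^\top\bD\bP^\mu\bPhi .
\]
The first term uses $\sum_j P^\mu(i,j)=1$, and the second is exactly $\sum_{i,j}\pi(i)P^\mu(i,j)\bphi(i)\bphi(j)^\top$.

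\textbf{Step 2: symmetrize and regroup.} The symmetric part is
\[
\tfrac12(\bA+\bA^\top)=\bPhi^\top\bD\bPhi-\tfrac{\gamma}{2}\bPhi^\top\bigl(\bD\bP^\mu+(\bP^\mu)^\top\bD\bigr)\bPhi .
\]
Split $\bPhi^\top\bD\bPhi=(1-\gamma)\bPhi^\top\bD\bPhi+\gamma\bPhi^\top\bD\bPhi$. Grouping the $\gamma$ terms gives $\gamma\bPhi^\top\bigl(\bD-\tfrac12(\bD\bP^\mu+(\bP^\mu)^\top\bD)\bigr)\bPhi=\gamma\bPhi^\top\bL_{\mathrm{Dir}}\bPhi$. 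This yields the claimed identity.

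\textbf{Step 3: the lower bound.} By Lemma~\ref{lem:L-dir-properties}, $\bL_{\mathrm{Dir}}\succeq0$. If I instead had to prove this inline, I would write $\bv^\top\bL_{\mathrm{Dir}}\bv=\tfrac12\sum_{i,j}\pi(i)P^\mu(i,j)(v_i-v_j)^2$. This identity uses stationarity $\pi^\top\bP^\mu=\pi^\top$ to rewrite $\sum_i\pi(i)v_i^2$ as $\tfrac12\sum_{i,j}\pi(i)P^\mu(i,j)(v_i^2+v_j^2)$, and it is the only mildly nontrivial point.

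It follows that $\gamma\bPhi^\top\bL_{\mathrm{Dir}}\bPhi\succeq0$, hence
\[
\tfrac12(\bA+\bA^\top)\succeq(1-\gamma)\Sigma\succeq(1-\gamma)\lambda_{\min}(\Sigma)\bI_d .
\]
Here $\lambda_{\min}(\Sigma)>0$ because $\Sigma\succ0$ under Assumption~\ref{ass:mrp-ergodic-full-rank}: we have $\pi>0$ and $\bPhi$ has full column rank. Therefore $\omega>0$ and the symmetric part of $\bA$ is positive definite.

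\textbf{Step 4: the curvature inequality.} Positive definiteness of the symmetric part also gives nonsingularity of $\bA$: if $\bA\bx=0$, then $\bx^\top\bA\bx=0$, which forces $\bx=0$. So $\btheta^*$ is well defined. The inequality $f(\btheta)-f(\btheta^*)=\be^\top\tfrac{\bA+\bA^\top}{2}\be\ge\omega\|\be\|^2$ then follows from~\eqref{eq:f-quadratic} together with the bound from Step 3.

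No step is a real obstacle. The only care needed is the stationarity identity in Step 3, and that is already supplied by Lemma~\ref{lem:L-dir-properties}.
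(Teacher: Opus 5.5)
Your proposal is correct and follows the paper's own argument: expand $\bA=\bPhi^\top\bD(\bI-\gamma\bP^\mu)\bPhi$, symmetrize and regroup into $(1-\gamma)\bPhi^\top\bD\bPhi+\gamma\bPhi^\top\bL_{\mathrm{Dir}}\bPhi$, then use $\bL_{\mathrm{Dir}}\succeq0$ (Lemma~\ref{lem:L-dir-properties}) and $\bPhi^\top\bD\bPhi\succ0$ (Assumption~\ref{ass:mrp-ergodic-full-rank}). Your Step~2 identity, combined with the definition of $f$ in \eqref{eq:dirichlet}, is exactly the ``direct calculation'' behind \eqref{eq:f-quadratic}, so Step~4 also goes through without relying on anything outside your own argument.
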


The first lemma is immediate from the finiteness of $\mathcal{S}$ and the triangle inequality. Lemma~\ref{lem:mixing} is the standard geometric mixing bound for finite irreducible and aperiodic Markov chains; see, e.g., \citet[Theorem~4.9]{levin2017markov}. For Lemma~\ref{lem:pos-stable}, expand $\bA=\bPhi^\top\bD(\bI-\gamma\bP^\mu)\bPhi$ and symmetrize. Then Lemma~\ref{lem:L-dir-properties} gives $\bL_{\mathrm{Dir}}\succeq0$, while Assumption~\ref{ass:mrp-ergodic-full-rank} gives $\bPhi^\top\bD\bPhi\succ0$.

\begin{remark}
For Theorem~\ref{thm:bounded-iterates-main}, the curvature conclusion in Lemma~\ref{lem:pos-stable} is not needed. The bounded-iterates argument only requires a sample-path independent reference point $\btheta^*$ satisfying $\bA\btheta^*=\bb$; in more singular settings, this may be replaced by an appropriate reference set. In the discounted full-rank MRP setting above, the reference point exists uniquely. If $\gamma=1$ or if $\bPhi$ is not full column rank, its existence requires additional regularity conditions. Whenever such a reference point exists, the robust $\widetilde{\mathcal{O}}(1/\sqrt{T})$ rate continues to provide control in terms of the Dirichlet semi-norm.
\end{remark}

\paragraph{Poisson Equation.}
A critical tool for handling Markovian noise is the Poisson equation associated with a bounded measurable function $h:\mathcal{Z}\to\mathbb{R}^d$. Define the stationary mean $\pi_Z(h)\coloneqq\E_{Z\sim\pi_Z}[h(Z)]$ and let $\tilde h\coloneqq h-\pi_Z(h)$ be the centered function. The Poisson equation associated with $P_Z$ is
\begin{equation}
\label{eq:poisson_in_setting}
u-P_Zu=\tilde h,
\qquad\text{i.e.,}\qquad
u(z)-\sum_{z'\in\mathcal{Z}}P_Z(z,z')\,u(z')=\tilde h(z),\qquad z\in\mathcal{Z}.
\end{equation}
In Appendix~\ref{sec:poisson-toolkit}, we will prove that this Poisson equation admits a solution $u^*:\mathcal{Z}\to\mathbb{R}^d$ such that $\|u^*\|_\infty\le16\mixing\|h\|_\infty$, where $\|h\|_\infty\coloneqq\sup_{z\in\mathcal{Z}}\|h(z)\|$.

\section{Main Results}
\label{sec:main-results}

In this section, we present our main results. We first establish a high-probability uniform bound on the iterates, which is a key step in our convergence analysis. We then derive a simultaneous high-probability convergence rate that is both curvature-free/robust and curvature-aware/fast convergence rate for the Polyak--Ruppert averaged iterate.

\subsection{High-Probability Boundedness of the Iterates of Unprojected TD($0$)}

Our first result shows that, even without any projection, the iterates of TD(0) remain bounded by a constant multiple of
$\max\{\|\btheta_0-\btheta^*\|,\|\btheta^*\|,\frac {r_\infty}{\phi_\infty}\}$
under a simple stepsize schedule that exploits the update structure of TD($0$).

\begin{theorem}[High-probability bounded iterates]
\label{thm:bounded-iterates-main}
Under Assumption~\ref{ass:mrp-ergodic-full-rank}, let
$\phi_\infty,r_\infty$ and $\mixing$ be the constants introduced in
Lemmas~\ref{lem:bounded} and~\ref{lem:mixing}.  Consider the following stepsize schedule:
\begin{equation}
\label{eq:anytime-stepsize-intro}
\eta_t
=
\eta_{\mathrm{base}} a_t,
\quad \text{where} \quad
\eta_{\mathrm{base}} := \frac{1}{c\mixing \,\phi_\infty^2},
\end{equation}
for some numerical constant $c>0$, and a non-increasing positive sequence $(a_t)$ such that $\sum_{t=0}^{\infty} a_t^2$ is finite and $a_0\leq 1$. Fix any $\delta\in(0,1)$ and let
\begin{equation}
\label{eq:rbase}
\rbase\coloneqq \max \left\{\norm{\btheta_0-\btheta^*},\norm{\btheta^*}, \frac{r_\infty}{\phi_\infty}\right\}\,.
\end{equation}
Define\footnote{We keep track of all numerical constants because future work may focus on sharpening the minimal value of $c$.} $A_1(\delta)
\coloneqq 1536\sqrt{\sum_{t=0}^{\infty} a_t^2}\sqrt{2\log\frac{2}{\delta}}+2304$,
$A_2\coloneqq 2706\sum_{t=0}^{\infty} a_t^2$,
\begin{align}
\label{eq:cmin-rho}
c_{\min}(\delta)
&\coloneqq \frac{A_1(\delta)+\sqrt{A^2_1(\delta)+4A_2}}{2},
\qquad \text{and} \qquad
\rho \coloneqq \frac{2c}{\sqrt{c^2-A_1(\delta)c-A_2}}\,.
\end{align}
Then, provided that $c>c_{\min}(\delta)$, with probability at least $1-\delta$, we have
\[
\sup_{t\geq 0} \ \norm{\btheta_t}_2\leq \rho \rbase\,.
\]
\end{theorem}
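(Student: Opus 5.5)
We will run a self-bounding induction on the squared distance $\|\be_t\|^2$, $\be_t\coloneqq\btheta_t-\btheta^*$, without ever invoking the curvature $\omega$. Expand one step of the recursion:
\[
\|\be_{t+1}\|^2=\|\be_t\|^2+2\eta_t\Ip{\bg_t}{\be_t}+\eta_t^2\|\bg_t\|^2 .
\]
We split $\bg_t=\bar\bg(\btheta_t)+\bxi_t+\bdelta_t\be_t$, which is valid because $\bb_{Z_t}-\bA_{Z_t}\btheta_t=\bxi_t-\bA\be_t+\bdelta_t\be_t$. By \eqref{eq:dirichlet-monotone}, $\Ip{\bar\bg(\btheta_t)}{\be_t}=-\be_t^\top\bA\be_t\le0$, so we simply drop this term. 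Summing from $0$ to $t-1$ then gives
\[
\|\be_t\|^2\le\|\be_0\|^2+2\sum_{k<t}\eta_k\Ip{\bxi_k+\bdelta_k\be_k}{\be_k}+\sum_{k<t}\eta_k^2\|\bg_k\|^2 .
\]
Lemma~\ref{lem:bounded} gives the crude bound $\|\bg_k\|\le\phiinf\rinf+2\phiinfs\|\btheta_k\|\lesssim\phiinfs(\rbase+\|\btheta_k\|)$. Consequently the last sum is at most $\eta_{\mathrm{base}}^2\phi_\infty^4\sum a_k^2\cdot O(M_t^2)$, where $M_t\coloneqq\max(\rbase,\sup_{k\le t}\|\btheta_k\|)$. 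Because $\eta_{\mathrm{base}}\phiinfs=1/(c\mixing)$, this is $O(M_t^2\sum a_k^2/c^2)$, which is one source of the $A_2/c^2$ term.

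The cross term $\sum_k\eta_k\Ip{\bxi_k+\bdelta_k\be_k}{\be_k}$ is the Markov-noise term, and we treat it with the Poisson toolkit from Appendix~\ref{sec:poisson-toolkit}. Both $h_1(z)=\bxi(z)$ and the matrix-valued $h_2(z)=\bdelta(z)$ have zero stationary mean: for $\bxi$ this follows from $\bb-\bA\btheta^*=0$, and for $\bdelta$ it follows from the definition of $\bA$. So each has a Poisson solution $u_i$ with $\|u_i\|_\infty\le16\mixing\|h_i\|_\infty$, where $\|h_1\|_\infty\lesssim\phiinfs\rbase$ and $\|h_2\|_\infty\le2\phiinfs$. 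Writing $h(Z_k)=u(Z_k)-P_Zu(Z_k)$, we decompose
\[
h(Z_k)=\bigl[u(Z_{k+1})-P_Zu(Z_k)\bigr]+\bigl[u(Z_k)-u(Z_{k+1})\bigr].
\]
The first bracket is a martingale difference with respect to $\F_{k+1}$. The second telescopes after summation by parts against the predictable weights $w_k=\eta_k\be_k$ (or $\eta_k\be_k\be_k^\top$). This leaves boundary terms of size $\eta_k\|u\|_\infty M^2$, together with increments $\|w_k-w_{k-1}\|\le(\eta_{k-1}-\eta_k)\|\be_k\|+\eta_{k-1}\|\be_k-\be_{k-1}\|$. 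Because $a_t$ is non-increasing and $\|\be_k-\be_{k-1}\|\le\eta_{k-1}\|\bg_{k-1}\|$, these remainders are bounded by $O(\mixing\phiinfs\eta_{\mathrm{base}}M_t^2)=O(M_t^2/c)$ plus $O(M_t^2\sum a_k^2/c^2)$. This is the origin of the constant $2304$ inside $A_1$ and of part of $A_2$.

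For the martingale part we apply Freedman/Azuma, with a stopping argument to avoid circularity. Let $\tau$ be the first time $\|\btheta_t\|>\rho\rbase$, and consider the stopped martingale with increments multiplied by $\indicator\{k<\tau\}$. These increments are predictably bounded by $2\eta_k\cdot16\mixing\cdot O(\phiinfs)\rho^2\rbase^2=O(a_k\rho^2\rbase^2/c)$. Azuma--Hoeffding, combined with a maximal (Doob/Ville-type) form valid uniformly over all $t$, then gives, with probability $1-\delta$ and simultaneously for every $t$,
\[
\text{mart}_t\le\frac{1536}{c}\sqrt{\textstyle\sum a_k^2}\sqrt{2\log(2/\delta)}\,\rho^2\rbase^2 .
\]
This is exactly the $\sqrt{\log}$ part of $A_1/c$.

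We close the induction on this event. Suppose $\|\btheta_k\|\le\rho\rbase$ for all $k<t$. Using $\|\be_0\|\le\rbase$ and $\|\btheta_t\|\le\|\be_t\|+\|\btheta^*\|$, the collected bounds give
\[
\|\btheta_t\|^2\lesssim 4\rbase^2+\Bigl(\frac{A_1}{c}+\frac{A_2}{c^2}\Bigr)\rho^2\rbase^2 .
\]
A small subtlety is that the term at time $t$ involves $\be_t$ itself; we absorb it by applying the Poisson decomposition only up to index $t-1$, so that the boundary term uses $\be_{t-1}$. The condition $c>c_{\min}(\delta)$ is equivalent to $1-A_1/c-A_2/c^2>0$, and $\rho^2=4c^2/(c^2-A_1c-A_2)$ solves $4+(A_1/c+A_2/c^2)\rho^2=\rho^2$. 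The inductive step therefore closes, $\tau=\infty$ on the event, and $\sup_t\|\btheta_t\|\le\rho\rbase$.

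The main obstacle is the bookkeeping of the Poisson remainder terms under the stopping time. We must make sure every remainder is bounded using only iterates at times strictly before the current one, and that each carries a factor $1/c$ rather than $\mixing$. We would attempt this first by using $\eta_k\mixing\phiinfs\le a_k/c$ consistently throughout.
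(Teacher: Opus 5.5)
Your proposal is essentially the paper's proof: the same energy expansion with the monotone term dropped via \eqref{eq:dirichlet-monotone}, the same Poisson martingale-plus-summation-by-parts treatment of the Markov cross term, localization by a stopping time so that a maximal Azuma/Pinelis bound applies, and the same closing identity $4+(A_1/c+A_2/c^2)\rho^2=\rho^2$. Your fixed solutions $u_1,u_2$ for $\bxi$ and $\bdelta$, paired with the predictable weights $\eta_k\be_k$ and $\eta_k\be_k\be_k^\top$, are by linearity exactly the paper's frozen-iterate solution $u_k(z)=\langle u_1(z),\be_k\rangle+\be_k^\top u_2(z)\be_k$, and your weight-increment estimate is its Lipschitz bound in Lemma~\ref{lem:generic-poisson-growth}(iii).

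The one caveat concerns the constants, which you assert rather than derive, and the available bound is $\|\bdelta_z\|_{\mathrm{op}}\le4\phiinfs$, not $2\phiinfs$. Bounding $\|u_1\|_\infty\|\be_k\|+\|u_2\|_\infty\|\be_k\|^2$ with the crude estimates $\rbase\le\rho\rbase$ and $\|\be_k\|\le2\rho\rbase$ gives $352\mixing\phiinfs\rho^2\rbase^2$ instead of the $192\mixing\phiinfs\rho^2\rbase^2$ behind the stated $A_1$. To land on the theorem's constants, either bound the combined forcing term via $\|\bg(\btheta,z)-\bar\bg(\btheta)\|\le2\rinf\phiinf+4\phiinfs\|\btheta\|$ as the paper does, or exploit $\rho>2$ (so $\rbase<\rho\rbase/2$ and $\|\be_k\|\le\tfrac32\rho\rbase$); the latter brings every term at or below the paper's.
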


Note that a simple choice of $a_t$ that satisfies the assumptions of the theorem is $a_t = \frac{1}{\ln(t+3)\sqrt{t+1}}$.

We provide the complete proof in Appendix~\ref{app:proof-bounded-iterates-main} and a proof sketch below.

\paragraph{Proof Sketch.}
Fix $\delta>0$ and suppress the $\delta$-dependence in the concentration bounds for simplicity. Starting from the definition of the TD($0$) update, we have
\begin{equation}
    \label{eq:bound_of_g}
\norm{\bg_k}
=
\norm{\bigl(r_k + \gamma \langle\bphi(s_{k+1}), \btheta_k\rangle
      - \langle \bphi(s_k), \btheta_k\rangle\bigr)\bphi(s_k)}
\leq |r_k|\norm{\bphi(s_k)} + |1+\gamma|\,\norm{\bphi(s_k)}^2 \norm{\btheta_k}.
\end{equation}
Thus, in each iteration, the update magnitude is at most
$r_\infty\phiinf + 2\phiinfs\norm{\btheta_k}$, which is of the same order as $\norm{\btheta_k}$.

\noindent Next, expand $\norm{\btheta_t-\btheta^*}^2$ using the TD($0$) update and sum from $k=1$ to $t$, to obtain
\begin{align}
\label{eq:expansion_multiple_steps}
\|\btheta_t-\btheta^*\|^2
 &= \|\btheta_{0}-\btheta^*\|^2
 + \sum_{k=1}^t \eta^2_{k-1}\|\bg_{k-1}\|^2
 + 2\sum_{k=1}^t \eta_{k-1}\langle \bg_{k-1}-\bar \bg(\btheta_{k-1}),\btheta_{k-1}-\btheta^*\rangle\nonumber\\
  &\quad +2\sum_{k=1}^t \eta_{k-1}\langle \bar \bg(\btheta_{k-1}),\btheta_{k-1}-\btheta^*\rangle
    \,.
\end{align}
\noindent To build intuition, suppose for the moment that $\bg_{k}=\bar \bg(\btheta_k)$. Then, equation \eqref{eq:expansion_multiple_steps} simplifies to
\begin{align*}
\|\btheta_t-\btheta^*\|^2- \|\btheta_0-\btheta^*\|^2
 \leq
 \sum_{k=1}^{t}\eta^2_{k-1}\|\bar \bg(\btheta_{k-1})\|^2
 =   \mathcal{O}\!\left(\sum_{k=1}^t\eta^2_{k-1}\left(\norm{\btheta_{k-1}}+\norm{\btheta_{k-1}}^2\right)\right),
\end{align*}
where we use $\Ip{\bar \bg(\btheta_{k-1})}{\btheta_{k-1}-\btheta^*}\leq 0$ from equation \eqref{eq:dirichlet-monotone}.
Hence, $\|\btheta_t-\btheta^*\|^2$ is governed by a term of the form
$\mathcal{O}\!\left(\sum_{k=1}^t\eta^2_{k-1}\bigl(\norm{\btheta_{k-1}}+\norm{\btheta_{k-1}}^2\bigr)\right)$.
Since $(\eta_k)$ is square-summable, if $\norm{\btheta_{k}}$ is bounded by $\rbase$ for all $k\leq t-1$, then $\norm{\btheta_t-\btheta^*}$ is also bounded by a constant multiple of $\rbase$.
Moreover, by the triangle inequality,
$\norm{\btheta_t}\leq \norm{\btheta_t-\btheta^*} + \norm{\btheta^*}$.
In other words, a bound on $\norm{\btheta_{k}}$ for all $k\leq t-1$ implies a bound at time $t$.
This motivates the following induction hypothesis for some $\rho>2$:
\[
\max_{1\leq k\leq t-1} \ \norm{\btheta_{k-1}}\leq \rho \rbase\,.
\]
The core of the proof then becomes to choose the stepsize parameter $c$ (and hence $\rho$) so that
\[
\|\btheta_t\|^2\leq 2\|\btheta_t-\btheta^*\|^2
+ 2\|\btheta^*\|^2
 \leq \rho^2 \rbases,
\]
where we use the induction hypothesis to control $\norm{\btheta_{t}-\btheta^*}$.

\noindent
We now return to the Markovian bias term
$\sum_{k=1}^t \eta_{k-1}\langle \bg_{k-1}-\bar \bg(\btheta_{k-1}),\btheta_{k-1}-\btheta^*\rangle$
in~\eqref{eq:expansion_multiple_steps}.
In general, for $\mathcal{F}_t:=\sigma(Z_0,\dots,Z_t)$, $\Ec[\mathcal{F}_{k-2}]{\bg_{k-1}-\bar \bg(\btheta_{k-1})}$ need not be zero: $\bg_{k-1}$ depends on the fresh randomness in $Z_{k-1}$, and the conditional distribution of $Z_{k-1}$ given $Z_{k-2}$ does not necessarily coincide with the stationary distribution $\pi_Z$.
To control this bias, freeze the past iterate $\btheta_{k-1}$ and define the centered scalar forcing term $h_{k-1}(z)\coloneqq \langle \bg(\btheta_{k-1},z)-\bar \bg(\btheta_{k-1}),\btheta_{k-1}-\btheta^*\rangle$. For this forcing term, consider the Poisson equation on the transition chain $Z_t$:
\begin{equation}
    \label{eq:poisson_equation}
    u_{k-1}:\mathcal{Z}\rightarrow \R,\quad
    u_{k-1}(z)-\Ec[Z_0=z]{u_{k-1}(Z_1)} = h_{k-1}(z)\,.
\end{equation}
By Lemma~\ref{lem:poisson-solution}, the equation \eqref{eq:poisson_equation} is solved by
$u_{k-1}^*(z)\coloneqq \sum_{i=0}^\infty \Ec[Z_0=z]{h_{k-1}(Z_{i})}$, and we also have
\begin{equation}
    \label{eq:poisson_norm_bound}
    \norm{u_{k-1}^*}_{\infty}
    \coloneqq \sup_z \ |u_{k-1}^*(z)|
    \leq 16\mixing \norm{h_{k-1}}_{\infty}
    =\mathcal{O}\left(\mixing\norm{\btheta_{k-1}}^2\right)\,.
\end{equation}
The solution $u^*_{k-1}$ provides a convenient decomposition of
$\sum_{k=1}^t \eta_{k-1} h_{k-1}(Z_{k-1})$
into a martingale term $M_t$ plus a remainder term $R_t$:
\begin{align}
    \sum_{k=1}^t \eta_{k-1}h_{k-1}(Z_{k-1})
      &= \sum_{k=1}^t  \eta_{k-1}\left(
      u^*_{k-1}(Z_{k})- \Ec[Z_{k-1}]{u^*_{k-1}(Z_{k})}
      + u^*_{k-1}(Z_{k-1})-u^*_{k-1}(Z_{k})
      \right)\nonumber \\
      &= M_t+R_t,\label{eq:Poisson_decomposition}
\end{align}
where $\Delta M_k\coloneqq \eta_{k-1} (u^*_{k-1}(Z_{k})- \E[u^*_{k-1}(Z_{k})|Z_{k-1}])$ and $M_t\coloneqq \sum_{k=1}^t \Delta M_k$.
Note that
$\norm{\Delta M_k}_{\infty}=\mathcal{O}(\eta_{k-1}\mixing\norm{\btheta_{k-1}}^2)
= \mathcal{O}(a_{k-1}\norm{\btheta_{k-1}}^2)$ from \eqref{eq:poisson_norm_bound}.
Consequently, the accumulated bounded difference for $M_t$ is controlled by
$\mathcal{O}(\max_{k\leq t} \ \norm{\btheta_{k-1}}^2)$.
Assuming again that $\max_{k\leq t} \ \norm{\btheta_{k-1}}\leq \rho \rbase$, Pinelis' inequality~\citep{pinelis1994optimum} for martingale concentration (Lemma~\ref{lem:pinelis-anytime-R}) yields $|M_t|=\mathcal{O}(\rho^2 \rbases)$ with high probability.

\noindent To control the remainder term
$R_t=\sum_{k=1}^t \eta_{k-1}(u^*_{k-1}(Z_{k-1})-u^*_{k-1}(Z_{k}))$,
we rewrite it (see Lemma~\ref{lem:R-Abel}) as
\[
R_t=\eta_0 u^*_0(Z_0)-\eta_t u^*_t(Z_t)
-
\sum_{k=1}^t(\eta_{k-1}-\eta_k)u^*_{k-1}(Z_k)
-
\sum_{k=1}^t\eta_k\bigl(u^*_{k-1}(Z_k)-u^*_k(Z_k)\bigr)\,.
\]
Since $\norm{u^*_{k}-u^*_{k-1}}_\infty$ is controlled by
$\mathcal{O}(\eta_{k-1}\mixing\norm{\btheta_{k-1}}^2)$
(because $\btheta_k-\btheta_{k-1}=\eta_{k-1}\bg_{k-1}$), the induction hypothesis again implies
$|R_t|=\mathcal{O}(\rho^2 \rbases)$. Combining the bounds for $M_t$ and $R_t$, we obtain
$\sum_{k=1}^t \eta_{k-1} h_{k-1}(Z_{k-1})= \mathcal{O}(\rho^2 \rbases)$, which fits perfectly with our inductive proof, since $\norm{\btheta_t-\btheta^*}^2$ remains the same order $\mathcal{O}(\rho^2 \rbases)$ as in the idealized case $\bg_{k}=\bar \bg(\btheta_k)$.

\subsection{High-Probability Convergence Rate for Polyak--Ruppert Averaging}
Our second result studies the convergence rate of the Polyak--Ruppert (PR) averaged iterate.
Let
\[
S_T
:=
\sum_{t=1}^{T}\eta_{t-1}
\qquad
\bar{\btheta}_T
:=
\frac{1}{S_T}\sum_{t=1}^{T}\eta_{t-1}\btheta_{t-1}\,.
\]
Recall the following potential defined in~\eqref{eq:dirichlet}:
\[
f(\btheta)-f(\btheta^*)
=
(1-\gamma)\,\|\bV_{\btheta}-\bV_{\btheta^*}\|_{\bD}^2
+
\gamma\,\|\bV_{\btheta}-\bV_{\btheta^*}\|_{\mathrm{Dir}}^2\,.
\]
We have the following high-probability guarantees for the PR average $\bar{\btheta}_T$.

\begin{theorem}[High-probability rate for PR averaging]
\label{thm:PR-main}
Under Assumption~\ref{ass:mrp-ergodic-full-rank}, let
$\phi_\infty,r_\infty,\mixing$ and $\omega$ be the constants introduced in
Lemmas~\ref{lem:bounded}--\ref{lem:pos-stable}. Consider the same stepsize schedule $(\eta_t)$ as in \eqref{eq:anytime-stepsize-intro} and the same $\rbase$ in \eqref{eq:rbase}. Also, define $H\coloneqq \sum_{t=0}^\infty\eta_{t}^2$. Fix any $\delta\in(0,1)$ and let
$A_1(\delta)=1536\sqrt{\sum_{t=0}^{\infty} a_t^2}\sqrt{2\log\frac{8}{\delta}}+2304$ and $A_2= 2706\sum_{t=0}^{\infty} a_t^2$.
Define
\[
c_{\min}(\delta)
\coloneqq
\frac{A_1(\delta)+\sqrt{A_1(\delta)^2+4A_2}}{2},
\qquad
\rho
\coloneqq
\frac{2c}{\sqrt{c^2-A_1(\delta)c-A_2}}\,.
\]
Suppose $c> c_{\min}(\delta)$. Then, with probability at least $1-\delta$, we have for all $T\geq 1$,
\[
f(\bar{\btheta}_T)-f(\btheta^*)
\leq \min\left\{\frac{C_{\mathrm{fast}}^2}{\omega S_T^2},\frac{C_{\mathrm{robust}}}{S_T}\right\},
\]
where
\begin{align*}
C_{\mathrm{fast}}&\coloneqq \rho\rbase\left[2+2\mixing\phiinfs\left(264\eta_0
+176\sqrt{H}\sqrt{2\log\frac 8 \delta}\right) + 192\mixing\phi_\infty^4 H\right],\\
C_{\mathrm{robust}}&\coloneqq \rho^2\rbases\left[0.5 + 2\mixing \phiinfs \left(288 \eta_0+192\sqrt{H} \sqrt{2\log\frac{8}{\delta}}\right) +\left(672\mixing+4.5\right)\phiinf^4 H\right]\,.
\end{align*}
\end{theorem}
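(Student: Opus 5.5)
We will work on the intersection of four events, each of probability at least $1-\delta/4$, which explains the $\log\frac{8}{\delta}$ appearing in place of $\log\frac{2}{\delta}$. The first is the bounded-iterates event of Theorem~\ref{thm:bounded-iterates-main} run at confidence $\delta/4$, giving $\sup_t\|\btheta_t\|\le\rho\rbase$. On this event we have $\|\btheta_t-\btheta^*\|\le(\rho+1)\rbase$ and $\|\bg_t\|\le r_\infty\phi_\infty+2\phiinfs\|\btheta_t\|\lesssim\phiinfs\rho\rbase$. The remaining three events are anytime Pinelis martingale bounds (Lemma~\ref{lem:pinelis-anytime-R}). To make these legitimate, the martingale increments will be truncated by the indicator that the iterates so far lie in the ball of radius $\rho\rbase$. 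This keeps the increments uniformly bounded, and on the good event the truncated and untruncated martingales coincide.

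\emph{Robust rate.} We start from the one-step expansion in~\eqref{eq:expansion_multiple_steps} and use~\eqref{eq:dirichlet-monotone}. Rearranging gives
\[
2\sum_{k=1}^T\eta_{k-1}\bigl(f(\btheta_{k-1})-f(\btheta^*)\bigr)\le\|\btheta_0-\btheta^*\|^2+\sum_{k}\eta_{k-1}^2\|\bg_{k-1}\|^2+2\sum_k\eta_{k-1}h_{k-1}(Z_{k-1}).
\]
The last sum is handled exactly as in the proof of Theorem~\ref{thm:bounded-iterates-main}, via the Poisson decomposition $M_T+R_T$. Here $|M_T|\lesssim\mixing\phiinfs\sqrt H\sqrt{\log(8/\delta)}\rho^2\rbases$ by Pinelis, and $|R_T|\lesssim\mixing\phiinfs\eta_0\rho^2\rbases+\mixing\phi_\infty^4H\rho^2\rbases$ by the Abel rearrangement of Lemma~\ref{lem:R-Abel}, using the Lipschitzness $\|u^*_k-u^*_{k-1}\|_\infty\lesssim\mixing\eta_{k-1}\|\bg_{k-1}\|\,\rho\rbase\phiinfs$. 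The quadratic term is at most $\phi_\infty^4H\rho^2\rbases$ up to constants. Since $f$ is a convex quadratic, Jensen's inequality gives $f(\bar\btheta_T)-f(\btheta^*)\le\frac{1}{S_T}\sum_k\eta_{k-1}(f(\btheta_{k-1})-f(\btheta^*))$, and this yields $C_{\mathrm{robust}}/S_T$.

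\emph{Fast rate.} This is a Polyak--Ruppert argument on the linear system. Write $\btheta_{k}-\btheta_{k-1}=\eta_{k-1}\bg_{k-1}$ and $\bg_{k-1}=-\bA(\btheta_{k-1}-\btheta^*)+\bigl(\bg_{k-1}-\bar\bg(\btheta_{k-1})\bigr)$. Summing over $k$ gives
\[
S_T\bA(\bar\btheta_T-\btheta^*)=\btheta_0-\btheta_T+\sum_{k=1}^T\eta_{k-1}\bigl(\bg_{k-1}-\bar\bg(\btheta_{k-1})\bigr).
\]
We bound the norm of the right-hand side by $C_{\mathrm{fast}}$. The term $\|\btheta_0-\btheta_T\|\le2\rho\rbase$ accounts for the ``$2$''. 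The vector noise sum is written as $\bxi$-part plus $\bdelta(\cdot)(\btheta_{k-1}-\btheta^*)$ and decomposed with a \emph{vector-valued} Poisson solution for $h_{k-1}(z)=\bg(\btheta_{k-1},z)-\bar\bg(\btheta_{k-1})$, with $\|h_{k-1}\|_\infty\lesssim\phiinfs\rho\rbase$. This gives a vector martingale plus an Abel remainder. The vector martingale is controlled by Pinelis in Hilbert space (which is dimension-free, hence the use of Pinelis), producing $\mixing\phiinfs\sqrt H\sqrt{\log(8/\delta)}\rho\rbase$. The remainder produces $\mixing\phiinfs\eta_0\rho\rbase+\mixing\phi_\infty^4H\rho\rbase$, the latter coming from the drift $u^*_k-u^*_{k-1}$, bounded as $\mixing\phiinfs\eta_{k-1}\|\bg_{k-1}\|$ and paired with $\eta_k$. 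Finally we convert: with $\be=\bar\btheta_T-\btheta^*$,
\[
\omega\|\be\|^2\le\be^\top\bA\be\le\|\be\|\,\|\bA\be\|,
\]
so $\|\be\|\le\|\bA\be\|/\omega$. Hence $f(\bar\btheta_T)-f(\btheta^*)=\be^\top\bA\be\le\|\bA\be\|^2/\omega\le C_{\mathrm{fast}}^2/(\omega S_T^2)$. Taking the minimum of the two bounds and a union bound over the four events completes the proof.

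The main obstacle is making the martingale concentration rigorous when the Poisson solutions $u^*_{k-1}$ depend on the random iterate. Their bounds hold only on the bounded-iterates event, which is itself a high-probability event rather than a stopping-time condition. We would handle this with the truncation above: use the $\mathcal F_{k-1}$-measurable indicator $\indicator\{\max_{j\le k-1}\|\btheta_j\|\le\rho\rbase\}$, apply Pinelis anytime (uniformly in $T$) to the truncated martingale, and note that on the first event truncation is inactive for all $T$. Tracking the numerical constants, in particular the exact Poisson-Lipschitz constant entering $R_T$, is routine but tedious.
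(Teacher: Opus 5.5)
Your proposal is correct and is essentially the paper's proof. It uses the same four events at level $\delta/4$, the same robust argument (energy expansion, Poisson--Pinelis plus Abel summation for the Markov bias, then Jensen), and the same fast argument via $\bA\bar{\be}_T=I_1+I_2+I_3$ and $\|\bar{\be}_T\|\le\|\bA\bar{\be}_T\|/\omega$. The differences are only cosmetic: your predictable indicator $\indicator\{\max_{j\le k-1}\|\btheta_j\|\le\rho\rbase\}$ is exactly the paper's stopped stepsize $\tilde\eta_{k-1}$ built from $\exit$. Likewise, treating the $\bxi$ and $\bdelta\be$ noise with one combined vector Poisson solution is, by linearity, the same as the paper's separate bounds on $I_2$ and $I_3$, and it yields the same constants $352$, $528$, $192$ in $C_{\mathrm{fast}}$.
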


A choice of the stepsize that satisfies the assumptions of the theorem is (see Appendix~\ref{sec:stepsize})
\begin{equation}
\label{eq:beststepsize}
\eta_t = (c\,\phiinfs\,\mixing)^{-1} a_t, \quad \text{ where } \quad a_t = (\log(t+3)\sqrt{t+1})^{-1}\,.
\end{equation}
The logarithmic correction in $a_t$ is used at the square-summability boundary: it ensures that $\sum_{t=0}^{\infty}\eta_t^2$ is finite while keeping $S_T=\sum_{t=1}^{T}\eta_{t-1}$ within a logarithmic factor of $\sqrt{T}$. Such a choice results in a robust rate of $\widetilde{\mathcal{O}}(\frac{\|\btheta^*\|^2 \mixing \phi_\infty^2}{\sqrt{T}})$ and a fast rate of $\widetilde{\mathcal{O}}(\frac{\|\btheta^*\|^2\mixing^2 \phi_\infty^4}{\omega T})$ with high probability.

In Appendix~\ref{sec:no_tau_mixing}, we also show an alternative stepsize choice that does not require knowledge of $\mixing$, at the cost of a doubly exponential dependence on $\mixing$ in the constants.

\paragraph{Proof Sketch of Theorem~\ref{thm:PR-main}.}
By Theorem~\ref{thm:bounded-iterates-main}, with high probability we have
$\sup_{t\geq 0} \ \norm{\btheta_t}_2 \leq \rho\,\rbase$.
It then follows that
$\sum_{t=1}^T \eta_{t-1}^2 \norm{\bg_{t-1}}^2$
is of order $\mathcal{O}(\rbases)$, using the gradient bound~\eqref{eq:bound_of_g}.
Moreover, the cumulated Markov bias term
$\sum_{t=1}^T \eta_{t-1}\langle \bg_{t-1}-\bar \bg(\btheta_{t-1}),\btheta_{t-1}-\btheta^*\rangle$
is also of order $\mathcal{O}(\rbases)$, by applying the Poisson equation and Pinelis' inequality.

\noindent Rearranging~\eqref{eq:expansion_multiple_steps}, we obtain
\begin{align*}
2\sum_{t=1}^T \eta_{t-1}\langle \bar \bg(\btheta_{t-1}),\btheta^*-\btheta_{t-1}\rangle
&= \|\btheta_{0}-\btheta^*\|^2-\|\btheta_T-\btheta^*\|^2\\
&\quad + \sum_{t=1}^T \eta_{t-1}^2\|\bg_{t-1}\|^2
+ 2\sum_{t=1}^T \eta_{t-1}\langle \bg_{t-1}-\bar \bg(\btheta_{t-1}),\btheta_{t-1}-\btheta^*\rangle\\
&= \mathcal{O}(\rbases)\,.
\end{align*}
The robust rate then follows from the convexity of $f$ together with the equation~\eqref{eq:dirichlet-monotone}:
\begin{align*}
f(\bar{\btheta}_T)-f(\btheta^*)
&\leq
\frac{1}{\sum_{t=1}^{T}\eta_{t-1}}
\sum_{t=1}^T \eta_{t-1}
\Ip{\bar \bg(\btheta_{t-1})}{\btheta^*-\btheta_{t-1}}
= \mathcal{O}\left(\frac{\rbases}{S_T}\right)\,.
\end{align*}

To derive the fast rate, recall that the TD update can be written as
$
\bg(\btheta_{t-1},Z_{t-1})
= -\bA_{Z_{t-1}}\btheta_{t-1}+\bb_{Z_{t-1}}\,.
$
Then the centered error $\be_t:=\btheta_t-\btheta^*$ can be shown (see \eqref{eq:e-recursion}) to evolve as
\begin{equation}
\label{eq:LSA-decomposition}
\be_t
=
(\bI_d-\eta_{t-1}\bA)\be_{t-1}
+ \eta_{t-1}\bxi_{t-1}
+ \eta_{t-1}\bdelta_{t-1}\be_{t-1}\,.
\end{equation}
Rearranging yields
\[
\bA\be_{t-1}
=
\frac{\be_{t-1}-\be_t}{\eta_{t-1}}
+ \bxi_{t-1}
+ \bdelta_{t-1}\be_{t-1}\,.
\]
Define
$\bar{\be}_T := \frac{1}{S_T}\sum_{t=1}^{T}\eta_{t-1}\be_{t-1}$.
Then $\bA \bar{\be}_T = I_1 + I_2 + I_3$, where
\[
I_1:=\frac{1}{S_T}\sum_{t=1}^{T}(\be_{t-1}-\be_t),
\qquad
I_2:=\frac{1}{S_T}\sum_{t=1}^{T}\eta_{t-1}\bxi_{t-1},
\qquad
I_3:=\frac{1}{S_T}\sum_{t=1}^{T}\eta_{t-1}\bdelta_{t-1}\be_{t-1}\,.
\]
By Lemma~\ref{lem:pos-stable}, we have $(\bA+\bA^\top)/2\succeq \omega \bI_d$.
Combining this with the equation~\eqref{eq:dirichlet-monotone} again, we obtain
\[
f(\bar{\btheta}_T)-f(\btheta^*)
=
\langle \bA\bar{\be}_T,\bar{\be}_T\rangle
\le
\|\bA\bar{\be}_T\|\,\|\bar{\be}_T\|
\le
\frac{1}{\omega}\,\|\bA\bar{\be}_T\|^2\,.
\]
The fast rate follows by controlling $\norm{I_1}$, $\norm{I_2}$, and $\norm{I_3}$ separately.
The term $\|I_1\|$ is bounded using the telescoping sum.
For $\|I_2\|$ and $\|I_3\|$, we again apply Lemma~\ref{lem:poisson-solution} with forcing terms $\bxi_{t-1}$ and $\bdelta_{t-1}\be_{t-1}$, respectively.
Therefore, by Pinelis' inequality and the normalization by $S_T$, both $\|I_2\|$ and $\|I_3\|$ are of order $\mathcal{O}(\rbase/S_T)$ with high probability, which completes the proof sketch for the fast rate.

\section{Detailed Comparison with Prior Results}
\label{sec:comparison}

In this section, we compare Theorems~\ref{thm:bounded-iterates-main} and~\ref{thm:PR-main} with prior finite-time results for TD($0$) with linear function approximation. This comparison is delicate because prior papers use slightly different assumptions on the problem and the algorithm. We therefore compare the results on a common footing, while emphasizing which unknown quantities are needed to set the algorithmic hyperparameters.
For simplicity, in this section we assume $\phi_\infty= 1$.

First, we recall our guarantees under the stepsize schedule in~\eqref{eq:beststepsize}: $\eta_t=\frac{1}{c\,\mixing\sqrt{t+1}\log(t+3)}$, where $c$ satisfies the assumptions of Theorem~\ref{thm:bounded-iterates-main}. Under this choice, TD($0$) with Polyak--Ruppert averaging \emph{simultaneously} achieves the following three guarantees:
\begin{itemize}
\item bounded iterates, independent of $\omega$,
\item the $\omega$-robust rate
$\mathcal{O}\Big(\frac{\mixing \rbases \log T\sqrt{\log(1/\delta)}}{\sqrt{T}}\Big)$,
\item the fast $\omega$-aware rate
$\mathcal{O}\Big(\frac{\mixing^2 \rbases \log^2 T\log(1/\delta)}{\omega T}\Big)$,
with probability at least $1-\delta$.
\end{itemize}
Importantly, all guarantees use a single stepsize schedule that is independent of the curvature parameter $\omega$.

\paragraph{Prior work on bounded iterates.}
To the best of our knowledge, our result is the first to show that TD($0$) with Markovian sampling has bounded iterates with high probability without using $\omega$ in the stepsize or in a curvature-based stability argument.

The closest result is \citet{lee2025finite}, but they only provide expectation bounds, whose rate we match. While one would expect that it is relatively easy to convert expectation results to high-probability ones for well-behaved random variables, here it requires completely different machinery based on the Poisson equation.

Our proof proceeds by establishing the boundedness of the iterates via a carefully designed inductive argument. This argument uses only the condition $\Ip{\bar\bg(\btheta)}{\btheta-\btheta^*}\leq 0$. We then control the Markovian bias pathwise using a Poisson-equation-based decomposition. This approach is fundamentally different from the Linear Stochastic Approximation (LSA)-stability-based analyses in \citet{srikant2019finite,pmlr-v125-mou20a,chen2022finite,patil2023finite,Li2024SampleComplexities,samsonov2024improved,Durmus2025} that result in a bound on the norm of the iterates that depends on $\omega$, hence incompatible with the robust rate.

Our approach is also very different from using projections, that is, projecting $\btheta_k$ onto the ball $\mathcal{B}(\bzero, \norm{\btheta^*})$~\citep[see, e.g.,][]{bhandari2018finite,liu2021temporal,sun2021adaptive,prashanthConcentrationBoundsTemporal2021,patil2023finite}.

Such projections play an analytic rather than algorithmic role: they guarantee that $|\bg(\btheta,\cdot)|_\infty$ is always bounded by a constant depending on $\norm{\btheta^*}$,
so that the following terms can be controlled easily using concentration inequalities:
\[
\sum_k \eta_k^2\|\bg_k\|^2,
\qquad
\sum_k \eta_k\Ip{\bg_k-\bar\bg(\btheta_k)}{\btheta_k-\btheta^*}.
\]
These terms scale with $\norm{\btheta^*}^2$ instead of an unknown bound on $\norm{\btheta_k}^2$. We remark that this kind of modification of TD($0$) is not used in practice, and it would require an additional hyperparameter. Moreover, simply bounding $\norm{\btheta^*}$ with the loose bound~\citep[Lemma 7]{bhandari2018finite} $\norm{\btheta^*}\leq \frac{2r_\infty}{\sqrt{\omega}(1-\lambda)}$ once again breaks the $\omega$-independent result required for the robust rate.

\paragraph{Prior work on robust rates.}
To the best of our knowledge, ours is the first result to achieve the robust rate
$\widetilde{\mathcal O}(\frac{\norm{\btheta^*}^2\mixing}{\sqrt{T}})$ with high probability, without any algorithmic modification and with a simple stepsize schedule independent of $\omega$.
Once one considers the implicit dependence of $T$ on $\mixing$ in some prior work, we match the rate in expectation from prior results~\citep[see, e.g.,][]{bhandari2018finite, liu2021temporal, lee2025finite}, up to polylog factors. On the other hand, we require knowledge of $\mixing$ to set the stepsizes. Instead, prior work, either implicitly or explicitly, requires the number of iterations $T$ to be large enough with respect to some function of $\mixing$ for the rate to be $\widetilde{\mathcal O}(\frac{1}{\sqrt{T}})$.

Our analysis is fundamentally different from previous ones,with the notable exception of the expectation result of \citet{lee2025finite} that we already discussed.
Indeed, the stability-based approach commonly used in the literature, which relies on \eqref{eq:LSA-decomposition} to unroll $\btheta_t-\btheta^*$, would typically result in a bound on $\norm{\btheta_t-\btheta^*}$ that scales like $\sum_{i=1}^t (1-\lambda_{\min}\eta)^i$ using inequalities similar to \eqref{eq:exp-stability}, which only controls $\norm{\btheta_t}$ at the scale $\mathcal{O}(1/\lambda_{\min})$. This dependence is incompatible with the $\omega$-independent bounded-iterates guarantee in Theorem~\ref{thm:bounded-iterates-main}.

\paragraph{Prior work on fast rates.}

The most relevant prior works on fast rates are \citet{samsonov2024improved,Durmus2025}.\footnote{As observed by \citet[p.~7]{Durmus2025}, the results in \citet{pmlr-v125-mou20a} are based on restrictive assumptions that do not allow a fair comparison with ours and previous results, so we omit them here.} These papers view TD($0$) with linear function approximation as a special case of LSA. They prove that there exist constants $a>0$, $\varkappa_p>0$, and $\alpha_{p,\infty}>0$ (depending on $p$ and possibly on $\mixing^{-1}$) such that $p\alpha_{p,\infty}\leq 1/2$ and TD($0$) satisfies the following exponential stability condition: for any moment order $p>0$ and any stepsize $\alpha\in(0,\alpha_{p,\infty})$
\begin{equation}
\label{eq:exp-stability}
\E^{1/p}\left[\norm{\Gamma^{\alpha}_{m:n}u}^p\right]\leq \varkappa_p(1-\alpha a)^n\norm{u},\quad \text{for any }u\in \R^d,\quad 1\leq m\leq n\,,
\end{equation}
where $\Gamma^{\alpha}_{m:n}=\prod_{k=m}^n(\bI-\alpha\bA_{Z_k})$ is a product of random matrices. This condition allows one to unroll the recursion and decompose $\btheta_t-\btheta^*$ as
\begin{align*}
\btheta_t-\btheta^*
&= \left(\bI-\alpha\bA_{Z_t}\right)(\btheta_{t-1}-\btheta^*)+\alpha\bxi_{Z_t}= \Gamma^{\alpha}_{1:t}(\btheta_0-\btheta^*)+\alpha\sum_{i=1}^t \Gamma^{\alpha}_{i+1:t}\bxi_{Z_i}\,.
\end{align*}
Then, one can use $p$-th moment bounds on $\Gamma^{\alpha}_{m:n}$ to control $\|\btheta_t-\btheta^*\|_{\bSigma}$.

Under this approach, and recalling that $\omega\geq (1-\gamma)\lambda_{\min}(\bPhi^\top\bD\bPhi)$, \citet[Theorem~6]{samsonov2024improved} shows that, with high probability, a data-dropping modification of TD($0$)~\citep{nagaraj2020least} achieves the rate
$\mathcal{O}(\frac{\mixing \|\btheta^*\|^2\log^2(T/\delta)}{(1-\gamma)^2\lambda^2_{\min}T})$ for $\|\bar{\btheta}_T-\btheta^*\|^2_{\bSigma}$.
Their bound also includes an additional exponentially decaying term that depends on $\|\btheta_0-\btheta^*\|^2$.
The fixed stepsize $\alpha$ in \citet{samsonov2024improved} is $\omega$-agnostic, but the data-dropping strategy requires knowledge of $\mixing$, and it is not commonly used in practice.

In parallel, \citet[Corollary~2]{Durmus2025} considers a varying stepsize $\eta_t=\mathcal{O}(t^{-2/3})$ that depends on $\lambda_{\min}$ and $\mixing$ and obtains a better high-probability guarantee in which $\|\bar{\btheta}_T-\btheta^*\|^2_{\bSigma}$ decays at rate
$\mathcal{O}(\frac{\mixing\|\btheta^*\|^2 \log(1/\delta)}{(1-\gamma)^2\lambda_{\min}T})$, plus an exponentially decaying term depending on $\|\btheta_0-\btheta^*\|^2$.

To compare these results with our bounds, we can use equation~\eqref{eq:potential_transform} to translate our bound on $f$ into a bound on $\|\bar{\btheta}_T-\btheta^*\|^2_{\bSigma}$. Hence, we obtain the fast rate
$\widetilde{\mathcal{O}}(\frac{\mixing^2\rbases}{(1-\gamma)^2\lambda_{\min}T})$
in Theorem~\ref{thm:PR-main}. Up to logarithmic factors, this rate matches the best-known high-probability results in the Markovian sampling regime~\citep{samsonov2024improved,Durmus2025} and the i.i.d.\ stationary sampling regime~\citep{Li2024SampleComplexities} for all relevant non-mixing quantities. Indeed, our $\rbases$ has the same order of magnitude as $|\btheta^*|^2$ that appears in the other bounds.
However, our $\mixing^2$ dependence is worse by one factor of $\mixing$ than the linear $\mixing$ dependence obtained in the Markovian bounds of \citet{samsonov2024improved,Durmus2025}.
In addition, our initial-error term $\|\btheta_0-\btheta^*\|^2$ decays at a $1/T$ rate rather than exponentially, since we avoid relying on a contraction-based argument in our analysis.

The worse dependence on $\mixing$ could be a side effect of our Poisson-equation-based analysis.
On the other hand, it could also be due to the fact that we do not use a data-dropping method~\citep{samsonov2024improved} or knowledge of $\lambda_{\min}$~\citep{Durmus2025}.

More generally, it is unclear whether any of the above results are optimal in their respective settings. To the best of our knowledge, the closest lower bound is $\Omega(\frac{\|\btheta^*\|^2}{(1-\gamma)\lambda_{\min}T})$ due to \citet[Theorem~2]{Li2024SampleComplexities}, which considers TD($0$) with linear function approximation under the easier setting of i.i.d.\ sampling. Thus, when data dropping is not allowed, projections are not used, and $\lambda_{\min}$ is unknown, the optimal dependence of our rate on $\mixing$ and $1-\gamma$ remains open.

A reason one might expect such a dependence on $\mixing$ in the convergence rate or stepsize is that the TD fixed point $\btheta^*$ is defined with respect to the stationary distribution, and a slowly mixing Markov chain can require more samples to accurately estimate stationary expectations~\citep{nagaraj2020least}.
At the same time, tabular results show that one should not expect a multiplicative mixing-time dependence to be necessary in all settings: \citet{li2020sample} show that, for asynchronous Q-learning and hence tabular TD, the leading statistical term can match the i.i.d.\ sampling setting, with only an additive transient $\mixing$ cost.
In the linear function approximation setting, existing results still account for the mixing time in various ways: stepsize conditions that explicitly depend on $\mixing$ appear in \citep{srikant2019finite, mitra2025simple, Durmus2025}; data-dropping approaches that retain only one out of $\mathcal{O}(\mixing)$ samples are studied in \citep{patil2023finite, samsonov2024improved}; and several results require the total sample budget $T$ to be sufficiently large relative to $\mixing$~\citep{lee2025finite, li2025towards}.

\section*{Acknowledgments}
We thank Chi-Jen Lu, Li Chen, and our anonymous reviewers for their insightful comments. We also acknowledge the use of large language models, including GPT and Gemini, for editorial assistance and for suggesting that Poisson-equation-based methods may provide a useful way to handle randomness in our analysis. The authors independently verified, developed, and finalized all mathematical arguments, results, and references.

\bibliographystyle{plainnat}
\bibliography{bib_db}

\clearpage
\appendix

\section{Poisson Toolkit for Markov Noise}
\label{sec:poisson-toolkit}

In this section, we develop a Poisson-equation toolkit for Markov
concentration. Throughout, let $(Z_t)_{t\ge0}$ be the Markov chain on
\[
\mathcal{Z}:=\{(i,j)\in\mathcal{S}\times\mathcal{S}:P^\mu(i,j)>0\}
\]
with kernel $P_Z$ and stationary distribution $\pi_Z$, as defined in
Section~\ref{sec:assumptions}. The state space $\mathcal{Z}$ is finite.

\subsection{Poisson Equation under Geometric Mixing}

We start with a standard Poisson equation lemma for geometrically mixing chains in $\R^d$ \citep[Theorem~17.4.3]{meyn2012markov}.

\begin{lemma}[Poisson solution under geometric mixing]
\label{lem:poisson-solution}
Let $h:\mathcal{Z}\to \R^d$ be a bounded measurable function with
stationary mean zero, i.e., $\E_{\pi_Z}[h(Z)]=0$. Then, the Poisson equation
\begin{equation}
\label{eq:poisson}
u - P_Z u = h,
\qquad\text{on }\mathcal{Z},
\end{equation}
where $(P_Zu)(z):=\E[u(Z_1)\mid Z_0=z]$, admits a solution
\[
u^*(z) := \sum_{k=0}^\infty (P_Z^k h)(z),
\]
where $(P_Z^k h)(z):=\E[h(Z_k)\mid Z_0=z]$.
Moreover,
\[
\|u^*\|_\infty
\leq
16\mixing \|h\|_\infty,
\qquad \text{where }
\|h\|_\infty
:=
\sup_{z\in\mathcal{Z}} \ \|h(z)\|\,.
\]
\end{lemma}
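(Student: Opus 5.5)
The plan is to bound each term $(P_Z^k h)(z)$ using the mixing bound of Lemma~\ref{lem:mixing} together with the zero stationary mean. That shows the series defining $u^*$ converges absolutely and uniformly. We then verify the Poisson equation by shifting the series by one index, and finally sum a geometric series to obtain the constant $16\mixing$.

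\emph{Step 1 (termwise bound).} Since $\E_{\pi_Z}[h(Z)]=0$, for each $z$ and $k\ge0$ we have
\[
(P_Z^k h)(z)=\sum_{z'\in\mathcal{Z}}\bigl(P_Z^k(z,z')-\pi_Z(z')\bigr)h(z').
\]
Hence
\[
\|(P_Z^k h)(z)\|\le \|h\|_\infty\sum_{z'}|P_Z^k(z,z')-\pi_Z(z')| = 2\|h\|_\infty\|P_Z^k(z,\cdot)-\pi_Z\|_{\mathrm{TV}}.
\]
If instead one uses the oscillation form, one gets the bound $\|h\|_\infty\cdot\mathrm{TV}$ times a factor of $2$. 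Either way, Lemma~\ref{lem:mixing} gives $\|(P_Z^k h)(z)\|\le 8\|h\|_\infty 2^{-k/\mixing}$. We should also keep the trivial bound $\|(P_Z^kh)(z)\|\le\|h\|_\infty$, which is sharper for small $k$.

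\emph{Step 2 (convergence and the equation).} The termwise bounds are summable, so $u^*$ is well defined, bounded, and the convergence is uniform on the finite set $\mathcal{Z}$. Because $\mathcal{Z}$ is finite, $P_Z$ acts as a finite matrix and commutes with the convergent sum. Therefore
\[
P_Zu^*=\sum_{k\ge1}P_Z^kh,
\qquad\text{and}\qquad
u^*-P_Zu^*=P_Z^0h=h.
\]

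\emph{Step 3 (constant).} The main obstacle is landing exactly on $16\mixing$ rather than a slightly worse constant, since $\sum_k 2^{-k/\mixing}=1/(1-2^{-1/\mixing})$. Using $1-2^{-x}\ge x\ln 2/(1+x\ln2)$, or more simply $1-2^{-1/\mixing}\ge \frac{1}{2\mixing}$ for $\mixing\ge1$ (concavity of $x\mapsto 1-2^{-x}$ on $[0,1]$ gives $1-2^{-x}\ge x/2$), we obtain $\sum_{k\ge0}2^{-k/\mixing}\le 2\mixing$. Then
\[
\|u^*\|_\infty\le 8\|h\|_\infty\cdot 2\mixing=16\mixing\|h\|_\infty.
\]
This matches the stated constant, so the crude bound $8\cdot2^{-k/\mixing}$ from Step 1 suffices and no splitting of the sum is needed. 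I would double-check the TV convention, $\sum|\cdot|=2\,\mathrm{TV}$, so that the factor $8$ rather than $4$ is what gets used; this yields exactly $16$.
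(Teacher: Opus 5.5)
Your proposal is correct and follows the paper's proof essentially step for step: the same termwise bound $\|P_Z^k h\|_\infty\le 8\cdot 2^{-k/\mixing}\|h\|_\infty$ from the zero stationary mean and Lemma~\ref{lem:mixing}, the same index-shift check that $u^*-P_Zu^*=h$, and the same geometric sum $8/(1-2^{-1/\mixing})$. The only difference is in the last inequality. You get $1-2^{-1/\mixing}\ge 1/(2\mixing)$ from concavity of $x\mapsto 1-2^{-x}$ on $[0,1]$, while the paper shows $x\mapsto 8x/(1-2^{-x})$ is increasing on $(0,1]$ and evaluates it at $x=1$; these are the same inequality. Your other suggested bound, $1-2^{-x}\ge x\ln 2/(1+x\ln 2)$, would only give $8(\mixing/\ln 2+1)$, which exceeds $16\mixing$ near $\mixing=1$, so the concavity route is the one that works.
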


\begin{proof}
Define $u^*(z):=\sum_{k=0}^\infty (P_Z^k h)(z)$. Since
$\E_{\pi_Z}[h(Z)]=0$, for every $k\ge0$ and every
$z\in\mathcal{Z}$,
\[
(P_Z^k h)(z)
= \int h(z')\{P_Z^k(z,dz')-\pi_Z(dz')\}\,.
\]
Using the geometric mixing bound from Lemma~\ref{lem:mixing}, we have for all $k\ge0$,
\[
\begin{aligned}
\|(P_Z^k h)\|_\infty
&= \sup_{z\in\mathcal{Z}} \ \left\|\E\bigl[h(Z_k)\mid Z_0=z\bigr]\right\|
\le \sup_{z\in\mathcal{Z}} \ 2\|h\|_\infty
\|P_Z^k(z,\cdot)-\pi_Z\|_{\mathrm{TV}}
\le 8\cdot 2^{-k/\mixing}\|h\|_\infty\,.
\end{aligned}
\]
Thus, the series defining $u^*$ converges absolutely and uniformly with
\[
\|u^*\|_\infty
\le \sum_{k=0}^\infty\|(P_Z^k h)\|_\infty
\le \frac{8}{1-2^{-1/\mixing}}\|h\|_\infty
\coloneqq C(\mixing)\|h\|_\infty\,.
\]
Moreover,
\[
(P_Z u^*)(z)
= \sum_{k=0}^\infty P_Z^{k+1} h(z),
\]
so
\[
u^*(z)-(P_Z u^*)(z)
= (P_Z^0 h)(z)
= h(z),
\]
that is, $u^*$ solves~\eqref{eq:poisson}.

\noindent Let $x = 1/\mixing$. We scale the function $C(\mixing)$ by $1/\mixing$ and consider the new function $\tilde{C}(x)$:
\[
\tilde C(x)
:= \frac{1}{\mixing} C\left(\frac{1}{x}\right)
= x \cdot \frac{8 \cdot 2^x}{2^x - 1}
= \frac{8x}{1 - 2^{-x}}\,.
\]
We claim that $\tilde C(x)$ is monotonically increasing on $(0, 1]$. Let $t = x \ln 2$. Then $\tilde C(x)$ can be written as $\frac{8}{\ln 2} \frac{t}{1 - e^{-t}}$. Consider the derivative of $f(t) = \frac{t}{1 - e^{-t}}$ for $t > 0$:
\[
f'(t)
= \frac{1(1 - e^{-t}) - t(e^{-t})}{(1 - e^{-t})^2}
= \frac{1 - (1+t)e^{-t}}{(1 - e^{-t})^2}\,.
\]
Using the elementary inequality $e^t > 1+t$ for $t > 0$, we have $1 > (1+t)e^{-t}$, which implies $f'(t) > 0$. Thus, $\tilde C(x)$ is strictly increasing on $(0, 1]$.
Consequently, the maximum value of $\tilde C(x)$ on the interval $(0, 1]$ occurs at $x = 1$:
\[
\tilde C(x)
\le \tilde C(1)
= \frac{8\cdot 1}{1 - 2^{-1}}
= \frac{8}{0.5}
= 16\,.
\]
Therefore, we have $C(\mixing) \leq 16\mixing$, which completes the proof.
\end{proof}

\subsection{Bounds for TD-type Poisson solutions}
\label{sec:td-poisson-bounds}

We now specialize Lemma~\ref{lem:poisson-solution} to the functions arising from TD. For $z=(s,s^\prime)\in\mathcal{Z}$, define the scalar function
\[
h_{{\btheta},{\btheta}-{\btheta}^*}(z)
:= \bigl\langle \bg({\btheta},z)-\bar \bg({\btheta}),\,{\btheta}-{\btheta}^*\bigr\rangle,
\]
and the vector-valued functions
\begin{align*}
h(z)
&:= \varepsilon_z\bphi(s)\in\mathbb{R}^d,
\qquad \text{where we recall }
\varepsilon_z=r(s,s^\prime)+\gamma\bphi(s^\prime)^\top{\btheta}^*
-\bphi(s)^\top{\btheta}^*,\\
 h_{\btheta}(z)
&:= \bdelta_z\bigl({\btheta}-{\btheta}^*\bigr)\in\mathbb{R}^d,
\qquad \text{where we define }
\bdelta_z\coloneqq \bA-\bA_z\in\mathbb{R}^{d\times d}\,.
\end{align*}
By construction, $h_{{\btheta},{\btheta}-{\btheta}^*}$, $h$, and $h_{\btheta}$ all have
stationary mean zero under $\pi_Z$. We study properties of the corresponding Poisson-equation solutions in the following lemma.

\begin{lemma}[Bounds and Lipschitz properties of TD Poisson equation solutions]
\label{lem:generic-poisson-growth}
In the setting of Section~\ref{sec:assumptions}, let
$\phi_\infty,r_\infty$ and $\mixing$ be the constants introduced in
Lemmas~\ref{lem:bounded} and~\ref{lem:mixing}. Then the following statements hold.
\begin{enumerate}
\item[\emph{(i)}] {\bf Growth bounds.}
For all $z\in\mathcal{Z}$,
\begin{align*}
|h_{{\btheta},{\btheta}-{\btheta}^*}(z)| &\le (2r_\infty\phi_\infty+4\phi_\infty^2\|{\btheta}\|)\,\|{\btheta}-{\btheta}^*\|,\\
\norm{h(z)} &\le r_\infty\phi_\infty + 2\phi_\infty^2\|{\btheta}^*\|,\\
\| h_{\btheta}(z)\| &\le 4\phi_\infty^2 \|{\btheta}-{\btheta}^*\|\,.
\end{align*}

\item[\emph{(ii)}] {\bf Bounded Poisson solutions.}
Let $u^*_{{\btheta},{\btheta}-{\btheta}^*}:\mathcal{Z}\to\mathbb{R}$ be the infinite series solution of the
Poisson equation
\[
u^*_{{\btheta},{\btheta}-{\btheta}^*}-P_Zu^*_{{\btheta},{\btheta}-{\btheta}^*}=h_{{\btheta},{\btheta}-{\btheta}^*},
\]
and let $u^*:\mathcal{Z}\to\mathbb{R}^d$ be the infinite series solution of
\[
u^*-P_Zu^*= h,
\]
and let $u^*_{\btheta}:\mathcal{Z}\to\mathbb{R}^d$ be the infinite series solution of
\[
u^*_{\btheta}-P_Zu^*_{\btheta}= h_{\btheta}\,.
\]
Then,
\begin{align*}
\|u^*_{{\btheta},{\btheta}-{\btheta}^*}\|_\infty &\leq 16\mixing\|h_{{\btheta},{\btheta}-{\btheta}^*}\|_\infty
\le 16\mixing(2r_\infty\phi_\infty+4\phi_\infty^2\|{\btheta}\|)\,\|{\btheta}-{\btheta}^*\|,\\
\|u^*\|_\infty &\leq 16\mixing  \|h\|_\infty \le 16\mixing(r_\infty\phi_\infty + 2\phi_\infty^2\|{\btheta}^*\|),\\
\|u^*_{\btheta}\|_\infty &\leq 16\mixing  \|h_{\btheta}\|_\infty \le 64\mixing\phi_\infty^2 \|{\btheta}-{\btheta}^*\|\,.
\end{align*}
\item[\emph{(iii)}] {\bf Local Lipschitz continuity in ${\btheta}$ and ${\btheta}^\prime$.}
With the definitions in the previous point, we also have
\begin{align*}
\|u^*_{{\btheta},{\btheta}-{\btheta}^*}-u^*_{{\btheta}',{\btheta}'-{\btheta}^*}\|_\infty
&\le 16\mixing\|h_{{\btheta},{\btheta}-{\btheta}^*}-h_{{\btheta}',{\btheta}'-{\btheta}^*}\|_\infty \nonumber\\
&\leq 16\mixing(2r_\infty\phi_\infty+4\phi_\infty^2\norm{{\btheta}})\norm{{\btheta}-{\btheta}'}\\
&\quad +64\mixing\phi_\infty^2\norm{{\btheta}^\prime-{\btheta}^*}\norm{{\btheta}-{\btheta}^\prime},
\end{align*}
and
\[
\|u^*_{\btheta}-u^*_{{\btheta}'}\|_\infty \le 64\mixing\phi_\infty^2\|{\btheta}-{\btheta}'\|\,.
\]
\end{enumerate}
\end{lemma}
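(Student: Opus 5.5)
The plan is to reduce everything to pointwise bounds on the forcing functions and then invoke Lemma~\ref{lem:poisson-solution}. That lemma applies to any mean-zero bounded function, and the series solution is linear in its forcing term. Part (ii) therefore follows from part (i) by one application of $\|u^*\|_\infty\le16\mixing\|h\|_\infty$. Part (iii) follows the same way: $u^*_{\btheta,\btheta-\btheta^*}-u^*_{\btheta',\btheta'-\btheta^*}$ is the series solution with forcing $h_{\btheta,\btheta-\btheta^*}-h_{\btheta',\btheta'-\btheta^*}$, which is still mean zero. So the real work is bounding the forcing functions and their differences.

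For (i), I will use $\|\bphi(s)\|\le\phiinf$ and $|r|\le\rinf$ throughout.

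\emph{Third bound.} $\|\bA_z\|\le\|\bphi(s)\|\,\|\bphi(s)-\gamma\bphi(s')\|\le(1+\gamma)\phiinfs\le2\phiinfs$, and $\bA=\E_{\pi_Z}[\bA_Z]$ obeys the same bound. Hence $\|\bdelta_z\|\le4\phiinfs$, which gives the bound on $h_{\btheta}$.

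\emph{Second bound.} This is Lemma~\ref{lem:bounded}.

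\emph{First bound.} From \eqref{eq:bound_of_g}, $\|\bg(\btheta,z)\|\le\rinf\phiinf+2\phiinfs\|\btheta\|$. Since $\bar\bg(\btheta)$ is a $\pi_Z$-average of these vectors, it obeys the same bound. Cauchy--Schwarz then gives $|h_{\btheta,\btheta-\btheta^*}(z)|\le(2\rinf\phiinf+4\phiinfs\|\btheta\|)\|\btheta-\btheta^*\|$. Mean-zero-ness of all three functions holds by definition of $\bar\bg$, $\bA$, and $\bA\btheta^*=\bb$ (which gives $\E_{\pi_Z}[\bxi]=\bb-\bA\btheta^*=0$).

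For (iii), the $h_{\btheta}$ case is immediate: $h_{\btheta}-h_{\btheta'}=\bdelta_z(\btheta-\btheta')$, so the difference is at most $4\phiinfs\|\btheta-\btheta'\|$, and multiplying by $16\mixing$ gives $64\mixing\phiinfs$. For the scalar function, write $G(\btheta,z):=\bg(\btheta,z)-\bar\bg(\btheta)=\bb_z-\bb-(\bA_z-\bA)\btheta$. Then add and subtract:
\[
h_{\btheta,\btheta-\btheta^*}-h_{\btheta',\btheta'-\btheta^*}
=\Ip{G(\btheta,z)}{\btheta-\btheta'}+\Ip{G(\btheta,z)-G(\btheta',z)}{\btheta'-\btheta^*}.
\]
The first term is bounded by $(2\rinf\phiinf+4\phiinfs\|\btheta\|)\|\btheta-\btheta'\|$, using the bound on $G$ from (i). For the second term, $G(\btheta,z)-G(\btheta',z)=\bdelta_z(\btheta-\btheta')$, whose norm is at most $4\phiinfs\|\btheta-\btheta'\|$. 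Multiplying by $16\mixing$ yields exactly the stated constants.

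The main obstacle is essentially bookkeeping. The one point needing care is the choice of decomposition in (iii). It has to place $\|\btheta\|$ in the first term and $\|\btheta'-\btheta^*\|$ in the second, so the constants match the statement. The rest is routine.
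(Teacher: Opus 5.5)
Your proposal is correct and follows the paper's proof essentially step for step. The steps you share with the paper are: the operator-norm bounds $\|\bA_z\|,\|\bA\|\le 2\phiinfs$ and $\|\bdelta_z\|\le 4\phiinfs$; the same Cauchy--Schwarz bound on $\bg-\bar\bg$; the same add-and-subtract split $\Ip{G(\btheta,z)}{\btheta-\btheta'}+\Ip{\bdelta_z(\btheta-\btheta')}{\btheta'-\btheta^*}$ in (iii), where the paper writes $G=\bdelta_z\btheta+\bzeta_z$; and the final appeal to Lemma~\ref{lem:poisson-solution} together with linearity of the series solution. Your explicit check that all three forcing functions have $\pi_Z$-mean zero (using $\bA\btheta^*=\bb$ for $h$) is a small but welcome addition to the paper's ``by construction.''
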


\begin{proof}
\emph{(i) Growth bounds.}
By the feature and reward bounds in Lemma~\ref{lem:bounded},
for $z=(s,s^\prime)$ we have
\begin{align*}
\|\bg({\btheta},z)\|
&= \bigl|r(s,s^\prime) + \gamma \bphi(s^\prime)^\top{\btheta} - \bphi(s)^\top{\btheta}\bigr|
\cdot\|\bphi(s)\|\\
&\le \bigl(r_\infty + (\gamma+1)\phi_\infty\|{\btheta}\|\bigr)\,\phi_\infty
\le \bigl(r_\infty + 2\phi_\infty\|{\btheta}\|\bigr)\,\phi_\infty\,.
\end{align*}
Taking expectations under $\pi_Z$ yields the same type of bound for
$\|\bar \bg({\btheta})\|$, hence
\[
\|\bg({\btheta},z)-\bar \bg({\btheta})\|
\le 2(r_\infty + 2\phi_\infty\|{\btheta}\|)\,\phi_\infty
= 2r_\infty\phi_\infty + 4\phi_\infty^2\|{\btheta}\|\,.
\]
Therefore,
\[
|h_{{\btheta},{\btheta}-{\btheta}^*}(z)|
= \bigl|\langle \bg({\btheta},z)-\bar \bg({\btheta}),{\btheta}-{\btheta}^*\rangle\bigr|
\le (2r_\infty\phi_\infty + 4\phi_\infty^2\|{\btheta}\|)\,\|{\btheta}-{\btheta}^*\|\,.
\]
For $h$, recall that
\[
\varepsilon_z
=r(s,s^\prime)+\gamma\bphi(s^\prime)^\top{\btheta}^* -\bphi(s)^\top{\btheta}^*\,.
\]
Then,
\[
\norm{h(z)}
= \norm{\varepsilon_z\bphi(s)}
\le r_\infty\phi_\infty + 2\phi_\infty^2\|{\btheta}^*\|\,.
\]
For $h_{\btheta}$, recall that
\[
\bA_z=\bphi(s)\bigl(\bphi(s)-\gamma\bphi(s^\prime)\bigr)^\top\,.
\]
Then,
\[
\|\bA_z\|_{\mathrm{op}}
\le \|\bphi(s)\|\bigl(\|\bphi(s)\|+\gamma\|\bphi(s^\prime)\|\bigr)
\le (1+\gamma)\phi_\infty^2
\le 2\phi_\infty^2\,.
\]
Consequently, $\|\bA\|_{\mathrm{op}}\le 2\phi_\infty^2$ and
\[
\|\bdelta_z\|_{\mathrm{op}}
\le
\|\bA\|_{\mathrm{op}}+\|\bA_z\|_{\mathrm{op}}
\le
4\phi_\infty^2\,.
\]
Thus,
\[
\|h_{\btheta}(z)\|
=
\|\bdelta_z({\btheta}-{\btheta}^*)\|
\le
4\phi_\infty^2\norm{{\btheta}-{\btheta}^*}\,.
\]

\medskip\noindent
\emph{(ii) Poisson solution sup-norm bounds.}
It follows from Lemma~\ref{lem:poisson-solution} applied to the real-valued function
$h_{{\btheta},{\btheta}-{\btheta}^*}$ and the $\mathbb{R}^d$ functions $h$ and $h_{\btheta}$.

\medskip\noindent
\emph{(iii) Lipschitz continuity in ${\btheta}$.}
\emph{Scalar case $h_{{\btheta},{\btheta}-{\btheta}^*}$.}
For each $z=(s,s^\prime)$ and ${\btheta}\in\mathbb{R}^d$ recall that
\[
\bA_z:=\bphi(s)\bigl(\bphi(s)-\gamma\bphi(s^\prime)\bigr)^\top,
\qquad
\bb_z:=r(s,s^\prime)\bphi(s),
\]
and
\[
\bA=\mathbb{E}_{\pi_Z}[\bA_Z],
\qquad
\bb=\mathbb{E}_{\pi_Z}[\bb_Z],
\]
so that $\bar \bg({\btheta})=-\bA{\btheta}+\bb$ and $\bg({\btheta},z)=-\bA_z{\btheta}+\bb_z$.
Also, recall that
\[
\bdelta_z=\bA-\bA_z,\text{ and let}
\qquad
\bzeta_z:=\bb_z-\bb\,.
\]
Then
\[
\bg({\btheta},z)-\bar \bg({\btheta})
=
\bdelta_z{\btheta}+\bzeta_z,
\]
and hence
\[
h_{{\btheta},{\btheta}-{\btheta}^*}(z)
=
\bigl\langle\bdelta_z{\btheta}+\bzeta_z,\,{\btheta}-{\btheta}^*\bigr\rangle
\,.
\]
As above, $\|\bdelta_z\|_{\mathrm{op}}\le 4\phi_\infty^2$, and $\|\bb_z\|\le r_\infty\phi_\infty$
implies $\|\bzeta_z\|\le 2r_\infty\phi_\infty$.
We have
\begin{align*}
|h_{{\btheta},{\btheta}-{\btheta}^*}(z)- h_{{\btheta}',{\btheta}'-{\btheta}^*}(z)|
&=|\bigl\langle\bdelta_z{\btheta}+\bzeta_z,\,{\btheta}-{\btheta}^*\bigr\rangle
-\bigl\langle\bdelta_z{\btheta}'+\bzeta_z,\,{\btheta}'-{\btheta}^*\bigr\rangle|\\
&=|\bigl\langle\bdelta_z{\btheta}+\bzeta_z,\,{\btheta}-{\btheta}^*-{\btheta}'+{\btheta}^*\bigr\rangle
+\bigl\langle \bdelta_z{\btheta}+\bzeta_z-(\bdelta_z{\btheta}'+\bzeta_z),\,{\btheta}'-{\btheta}^*\bigr\rangle|\\
&\leq \norm{\bdelta_z{\btheta}+\bzeta_z}\norm{{\btheta}-{\btheta}'}
+\norm{\bdelta_z({\btheta}-{\btheta}^\prime)}\norm{{\btheta}'-{\btheta}^*}\\
&\leq \left(4\phi_\infty^2\norm{{\btheta}}+2r_\infty\phi_\infty\right)\norm{{\btheta}-{\btheta}'}
+4\phi_\infty^2\norm{{\btheta}^\prime-{\btheta}^*}\norm{{\btheta}-{\btheta}^\prime}\,.
\end{align*}
The second inequality follows from
applying Lemma~\ref{lem:poisson-solution} and linearity of the Poisson equation:
\[
\|u^*_{{\btheta},{\btheta}-{\btheta}^*}-u^*_{{\btheta}',{\btheta}'-{\btheta}^*}\|_\infty
\le
16\mixing\|h_{{\btheta},{\btheta}-{\btheta}^*}-h_{{\btheta}',{\btheta}'-{\btheta}^*}\|_\infty\,.
\]
\medskip\noindent
\emph{Vector case $h_{\btheta}$.}
For $h_{\btheta}$,
\[
h_{\btheta}(z)-h_{{\btheta}'}(z)
=
\bdelta_z({\btheta}-{\btheta}'),
\]
so
\[
\|h_{\btheta}(z)-h_{{\btheta}'}(z)\|
\le
\|\bdelta_z\|_{\mathrm{op}}\|{\btheta}-{\btheta}'\|
\le
4\phi_\infty^2\|{\btheta}-{\btheta}'\|\,.
\]
Taking the supremum over $z$ gives
\[
\|h_{\btheta}-h_{{\btheta}'}\|_\infty
\le
4\phi_\infty^2\|{\btheta}-{\btheta}'\|\,.
\]
Thus, by Lemma~\ref{lem:poisson-solution} and linearity of the Poisson equation,
\[
\|u^*_{\btheta}-u^*_{{\btheta}'}\|_\infty
\le
64\mixing\phi_\infty^2\|{\btheta}-{\btheta}'\|\,.
\]
\end{proof}

\subsection{Poisson Martingale decomposition of additive Markov noise}
\phantomsection\label{sec:poisson_repr}
Recall $\mathcal{F}_t:=\sigma(Z_0,\dots,Z_t)$.
Let $u_{t-1}$ be the infinite series solution of the Poisson equation with forcing function $h_{t-1}$ defined as one of $h_{\btheta_{t-1},\btheta_{t-1}-\btheta^*}$,
$h$, or $h_{\btheta_{t-1}}$, that is
\[u_{t-1}-P_Z u_{t-1}
=h_{t-1}\,.
\]
Then, we have
\begin{align*}
      \sum_{t=1}^T \eta_{t-1} h_{t-1}(Z_{t-1}) &= \sum_{t=1}^T \eta_{t-1} \bigl(u_{t-1}(Z_{t-1})-\left(P_Z u_{t-1}\right)(Z_{t-1})\bigr)\\
      &= \sum_{t=1}^T \eta_{t-1} \bigl(u_{t-1}(Z_{t-1})-u_{t-1}(Z_{t})+u_{t-1}(Z_{t})-\left(P_Z u_{t-1}\right)(Z_{t-1})\bigr)\\
      &= \sum_{t=1}^T \eta_{t-1} \bigl(u_{t-1}(Z_{t-1})-u_{t-1}(Z_{t})\bigr)+\sum_{t=1}^T \eta_{t-1} \Delta M_t,
\end{align*}
where $\Delta M_t \coloneqq u_{t-1}(Z_{t})-\left(P_Z u_{t-1}\right)(Z_{t-1})$ is a martingale difference with respect to $\mathcal{F}_t:=\sigma(Z_0,\dots,Z_t)$ since $u_{t-1}\in\mathcal{F}_{t-1}$ and
\[
\E[u_{t-1}(Z_{t})\mid \F_{t-1}] = \E[u_{t-1}(Z_{t})\mid \F_{t-1},u_{t-1}] = \left(P_Z u_{t-1}\right)(Z_{t-1}) \,.
\]
We will use this Poisson decomposition repeatedly throughout the paper.

\noindent Thus, for all $T\ge1$, we obtain
\begin{equation}
M_T:=\sum_{t=1}^T \eta_{t-1}\Delta M_{t},
\quad
R_T
:=
\sum_{t=1}^T \eta_{t-1}\bigl(u_{t-1}(Z_{t-1})-u_{t-1}(Z_t)\bigr),
\end{equation}
and $(M_T)_{T\ge1}$ is a $\R^d$-valued martingale with respect to $(\mathcal{F}_T)_{T\ge1}$. Moreover, $R_T$ enjoys the following representation:
\begin{lemma}[Summation by parts]
\label{lem:R-Abel}
\[
R_T=\eta_0u_0(Z_0)-\eta_Tu_T(Z_T)
-
\sum_{t=1}^T(\eta_{t-1}-\eta_t)u_{t-1}(Z_t)
-
\sum_{t=1}^T\eta_t\bigl(u_{t-1}(Z_t)-u_t(Z_t)\bigr)\,.
\]
\end{lemma}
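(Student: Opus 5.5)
The identity is a discrete summation by parts, so the plan is to rewrite each summand of $R_T$ and track how the pieces telescope. No probability is involved: the argument is purely algebraic and holds pathwise for any sequence of functions $u_t$ and any stepsizes $\eta_t$.

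First, split each term of $R_T$ as
\[
\eta_{t-1}\bigl(u_{t-1}(Z_{t-1})-u_{t-1}(Z_t)\bigr)
=\bigl(\eta_{t-1}u_{t-1}(Z_{t-1})-\eta_t u_t(Z_t)\bigr)
+\bigl(\eta_t u_t(Z_t)-\eta_{t-1}u_{t-1}(Z_t)\bigr).
\]
Summing the first bracket over $t=1,\dots,T$ telescopes to $\eta_0u_0(Z_0)-\eta_Tu_T(Z_T)$. This produces the two boundary terms in the claimed formula.

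Next, rewrite the second bracket by adding and subtracting $\eta_t u_{t-1}(Z_t)$:
\[
\eta_t u_t(Z_t)-\eta_{t-1}u_{t-1}(Z_t)
=-(\eta_{t-1}-\eta_t)\,u_{t-1}(Z_t)-\eta_t\bigl(u_{t-1}(Z_t)-u_t(Z_t)\bigr).
\]
Summing this over $t=1,\dots,T$ gives exactly the two remaining sums in the statement. Collecting the boundary terms with these two sums yields the identity.

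The only place that needs care is the indexing. The terms $\eta_T$ and $u_T$ appear, so $u_T$ must be defined as the Poisson solution for $\btheta_T$ (or for the fixed $h$). This is consistent with the definitions in the decomposition above, since $\btheta_T$ is $\mathcal{F}_{T-1}$-measurable. Beyond that there is no real obstacle: the argument is a routine verification, and I would simply check the $t=1$ and $t=T$ endpoints explicitly.
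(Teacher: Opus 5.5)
Your proof is correct and is essentially the paper's argument. The paper also reindexes $\sum_t \eta_{t-1}u_{t-1}(Z_{t-1})$ and applies the same add-and-subtract identity $\eta_{t-1}u_{t-1}(Z_t)=\eta_t u_t(Z_t)+(\eta_{t-1}-\eta_t)u_{t-1}(Z_t)+\eta_t\bigl(u_{t-1}(Z_t)-u_t(Z_t)\bigr)$; this is exactly your per-term split into a telescoping part plus a remainder.
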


\begin{proof}
Let $R_T=\sum_{t=1}^T \eta_{t-1}(u_{t-1}(Z_{t-1})-u_{t-1}(Z_t))$ and write
\[
R_T
=
\underbrace{\sum_{t=1}^T \eta_{t-1}u_{t-1}(Z_{t-1})}_{T_T^{(1)}}
-
\underbrace{\sum_{t=1}^T \eta_{t-1}u_{t-1}(Z_t)}_{T_T^{(2)}}\,.
\]
Reindex the first sum as $T_T^{(1)}=\sum_{j=0}^{T-1}\eta_ju_j(Z_j)$.
For the second sum, add and subtract the terms $\eta_t u_t(Z_t)$ and $\eta_t u_{t-1}(Z_t)$:
\[
\eta_{t-1}u_{t-1}(Z_t)
=
\eta_t u_t(Z_t)
+
(\eta_{t-1}-\eta_t)u_{t-1}(Z_t)
+
\eta_t(u_{t-1}(Z_t)-u_t(Z_t))\,.
\]
Summing over $t=1,\dots,T$ yields
\[
T_T^{(2)}
=
\sum_{t=1}^T\eta_tu_t(Z_t)
+
\sum_{t=1}^T(\eta_{t-1}-\eta_t)u_{t-1}(Z_t)
+
\sum_{t=1}^T\eta_t\bigl(u_{t-1}(Z_t)-u_t(Z_t)\bigr)\,.
\]
Hence,
\[
R_T
=
\eta_0u_0(Z_0)-\eta_Tu_T(Z_T)
-
\sum_{t=1}^T(\eta_{t-1}-\eta_t)u_{t-1}(Z_t)
-
\sum_{t=1}^T\eta_t\bigl(u_{t-1}(Z_t)-u_t(Z_t)\bigr)\,.
\]
\end{proof}
\section{Bounded Iterates: Proof of Theorem~\ref{thm:bounded-iterates-main}}
\label{app:proof-bounded-iterates-main}

\subsection{Markov noise and localization}
Let $\be_t:={\btheta}_t-{\btheta}^*$ denote the error at time $t$. From the TD update,
expanding $\be_t$ gives
\begin{align*}
\|\be_t\|^2&=\|{\btheta}_t-{\btheta}^*\|^2
=\|{\btheta}_{t-1}+\eta_{t-1}\bg({\btheta}_{t-1},Z_{t-1})-{\btheta}^*\|^2 \\
&= \|{\btheta}_{t-1}-{\btheta}^*\|^2
 + \eta^2_{t-1}\|\bg({\btheta}_{t-1},Z_{t-1})\|^2
 + 2\eta_{t-1}\langle \bg({\btheta}_{t-1},Z_{t-1}),{\btheta}_{t-1}-{\btheta}^*\rangle\\
 &= \|{\btheta}_{t-1}-{\btheta}^*\|^2
 + \eta^2_{t-1}\|\bg({\btheta}_{t-1},Z_{t-1})\|^2
 + 2\eta_{t-1}\langle \bg({\btheta}_{t-1},Z_{t-1})-\bar \bg({\btheta}_{t-1}),{\btheta}_{t-1}-{\btheta}^*\rangle\\
  &\quad + 2\eta_{t-1}\langle \bar \bg({\btheta}_{t-1}),{\btheta}_{t-1}-{\btheta}^*\rangle
 \,.
\end{align*}
Summing from $t=1$ to $T$ yields
\begin{equation}
\label{eq:main_decom}
\|\be_T\|^2
=\|\be_0\|^2
+ \sum_{t=1}^T \eta^2_{t-1} \|\bg({\btheta}_{t-1},Z_{t-1})\|^2
+ 2\sum_{t=1}^T \eta_{t-1}\langle\bar \bg({\btheta}_{t-1}),{\btheta}_{t-1}-{\btheta}^*\rangle
+ B_T,
\end{equation}
where the Markov bias term is
\begin{equation}
\label{eq:B-def}
B_T
:=
2\sum_{t=1}^T \eta_{t-1}\,
\langle \bg({\btheta}_{t-1},Z_{t-1})-\bar \bg({\btheta}_{t-1}),\,{\btheta}_{t-1}-{\btheta}^*\rangle\,.
\end{equation}
By \eqref{eq:dirichlet-monotone} and the nonnegativity of
$f(\btheta)-f(\btheta^*)$,
\[
\langle\bar \bg({\btheta}_{t-1}),{\btheta}_{t-1}-{\btheta}^*\rangle
=
-\bigl(f({\btheta}_{t-1})-f({\btheta}^*)\bigr)
\le 0.
\]
Thus, we have
\[
\|\be_T\|^2
\leq \|\be_0\|^2
+ \sum_{t=1}^T \eta^2_{t-1} \|\bg({\btheta}_{t-1},Z_{t-1})\|^2
+ B_T\,.
\]
To localize the process, we define
\[
T_{\mathrm{exit}}
:=
\inf\{t\ge 0:\,\|{\btheta}_t\|> \rho \rbase\},
\]
where $T_{\mathrm{exit}}$ is a stopping time with respect to $(\mathcal F_t)$ because ${\btheta}_t$ is $\mathcal F_t$-measurable.
The number $\rho > 2$ will be specified later. Define
\[
\rbase
\coloneqq \max\left\{\norm{{\btheta}_0-{\btheta}^*},\norm{{\btheta}^*},\frac{\rinf}{\phiinf}\right\},
\quad\rmax\coloneqq \rho\rbase,
\]
and the stopped iterates and stepsizes by
\[
\tilde{\btheta}_t:={\btheta}_{t\wedge T_{\mathrm{exit}}},
\qquad
\tilde\eta_t:=
\begin{cases}
\eta_t,& t<T_{\mathrm{exit}},\\
0,& t\ge T_{\mathrm{exit}}\,.
\end{cases}
\]
We define the stopped Markov bias term
\[
\tilde B_T
:=
2\sum_{t=1}^T \tilde\eta_{t-1}
\langle \bg(\tilde {\btheta}_{t-1},Z_{t-1})-\bar \bg(\tilde {\btheta}_{t-1}),\,\tilde {\btheta}_{t-1}-{\btheta}^*\rangle\,.
\]
In particular, on the event $\{T_{\mathrm{exit}}=\infty\}$ we have $\tilde{\btheta}_t={\btheta}_t$, $\tilde\eta_t=\eta_t$, and $\tilde B_t=B_t$ for all $t$.

\subsection{Poisson representation and bounds for the localized Markov noise}

Let $u_{t-1}$ be the infinite series solution of the Poisson equation with forcing function $h_{t-1}:=h_{\tilde {\btheta}_{t-1},\tilde{\btheta}_{t-1}-{\btheta}^*}\in\F_{t-2}$, that is,
\[u_{t-1}-P_Z u_{t-1}
=h_{t-1}\,.
\]
Then, from Section~\ref{sec:poisson_repr}, we have
\begin{align*}
      \sum_{t=1}^T\tilde  \eta_{t-1} h_{t-1}(Z_{t-1}) &= \sum_{t=1}^T\tilde  \eta_{t-1} \bigl(u_{t-1}(Z_{t-1})-\left(P_Z u_{t-1}\right)(Z_{t-1})\bigr)\\
      &= \sum_{t=1}^T\tilde  \eta_{t-1} \bigl(u_{t-1}(Z_{t-1})-u_{t-1}(Z_{t})+u_{t-1}(Z_{t})-\left(P_Z u_{t-1}\right)(Z_{t-1})\bigr)\\
      &= \sum_{t=1}^T\tilde  \eta_{t-1} \bigl(u_{t-1}(Z_{t-1})-u_{t-1}(Z_{t})\bigr)+\sum_{t=1}^T\tilde  \eta_{t-1} \Delta M_t,
\end{align*}
where $\Delta M_t \coloneqq u_{t-1}(Z_{t})-\left(P_Z u_{t-1}\right)(Z_{t-1})$ is a martingale difference with respect to $\mathcal{F}_t:=\sigma(Z_0,\dots,Z_t)$ since $u_{t-1}\in\mathcal{F}_{t-1}$ and
\[
\E[u_{t-1}(Z_{t})\mid \F_{t-1}] = \E[u_{t-1}(Z_{t})\mid \F_{t-1},u_{t-1}] = \left(P_Z u_{t-1}\right)(Z_{t-1}) \,.
\]
\noindent Thus, for all $T\ge1$, we obtain
\begin{equation}
\label{eq:Btilde-decomp}
\tilde B_T
=
2\tilde M_T+2\tilde R_T,
\qquad
\tilde M_T:=\sum_{t=1}^T \tilde\eta_{t-1}\Delta M_{t},
\quad
\tilde R_T
:=
\sum_{t=1}^T \tilde\eta_{t-1}\bigl(u_{t-1}(Z_{t-1})-u_{t-1}(Z_t)\bigr),
\end{equation}
and $(\tilde M_T)_{T\ge1}$ is a real-valued martingale with respect to $(\mathcal{F}_T)_{T\ge1}$.

Moreover, if $\tilde \eta_{t-1}>0$, we have $\|\tilde {\btheta}_{t-1}\|\le \rmax$ and $\|\tilde {\btheta}_{t-1}-{\btheta}^*\|\le2\rmax$. Thus, Lemma~\ref{lem:generic-poisson-growth} implies that
\begin{align*}
\|u_{t-1}\|_\infty
&\leq
16\mixing\left(2\rinf\phiinf+4\phiinf^2\norm{\tilde {\btheta}_{t-1}}\right)\left(\norm{\tilde {\btheta}_{t-1}-{\btheta}^*}\right)
\le
16\mixing\phi_\infty^2(4\rmaxs+ 8\rmaxs)\\
&\leq 192\mixing \phi_\infty^2\rmaxs\,.
\end{align*}
Also,
\[
\norm{\Delta M_t}_{\infty}
\le
\norm{u_{t-1}}_{\infty}+\norm{P_Zu_{t-1}}_{\infty}
\le
2\|u_{t-1}\|_\infty
\le
384\mixing\phi_\infty^2 \rmaxs\,.
\]
Since $\tilde\eta_{t-1}=0$ for all $t> \exit$, the increments $\tilde\eta_{t-1}\Delta M_t$ satisfy
\begin{equation}
\label{eq:deltaM-local-bounds}
\norm{\tilde\eta_{t-1}\Delta M_t}_\infty
\le
\tilde{\eta}_{t-1}384\mixing\phiinfs \rmaxs\,.
\end{equation}
We now use Pinelis' inequality in Lemma~\ref{lem:pinelis}. Applying it to the increments
\[
X_t:=\tilde\eta_{t-1}\Delta M_t,\qquad t\ge1,
\]
with \eqref{eq:deltaM-local-bounds}, we obtain the following anytime high-probability guarantee.

\begin{lemma}[Pinelis' inequality for the stopped martingale]
\label{lem:pinelis-anytime-R}
Let $\tilde M_T:=\sum_{t=1}^T X_t$ with $X_t:=\tilde\eta_{t-1}\Delta M_t$ as above.
Then, for all $\delta\in(0,1)$,
\[
\Pr\left\{
\sup_{T\ge 1} \ |\tilde M_T|
\;\le\;
384 \mixing \phiinfs \rmaxs \sqrt{\sum_{t=0}^\infty {\eta}_t^2} \sqrt{2\log\frac{2}{\delta}}
\right\}
\;\ge\;
1-\delta\,.
\]
\end{lemma}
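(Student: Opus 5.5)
The plan is to apply Pinelis' inequality (Lemma~\ref{lem:pinelis}) to the real-valued martingale $(\tilde M_T)_{T\ge1}$. It is a martingale with respect to $(\mathcal F_T)$ by the Poisson decomposition \eqref{eq:Btilde-decomp}. I will use it in its anytime, bounded-increment (Azuma--Hoeffding type) form: if $|X_t|\le c_t$ almost surely with $c_t$ deterministic, or at least $\mathcal F_{t-1}$-predictable and summable as $\sum_t c_t^2\le C^2$, then $\Pr\{\sup_{T}|\sum_{t\le T}X_t|>C\sqrt{2\log(2/\delta)}\}\le\delta$.

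First, I check that the increments $X_t=\tilde\eta_{t-1}\Delta M_t$ are genuine martingale differences. The forcing function $h_{t-1}=h_{\tilde\btheta_{t-1},\tilde\btheta_{t-1}-\btheta^*}$ depends only on $\tilde\btheta_{t-1}$, which is $\mathcal F_{t-2}$-measurable. Hence $u_{t-1}$ is $\mathcal F_{t-1}$-measurable. The stopped stepsize $\tilde\eta_{t-1}=\eta_{t-1}\indicator\{t-1<\exit\}$ is also $\mathcal F_{t-1}$-measurable, because $\exit$ is a stopping time. It follows that $\E[X_t\mid\mathcal F_{t-1}]=\tilde\eta_{t-1}\bigl(\E[u_{t-1}(Z_t)\mid\mathcal F_{t-1}]-(P_Zu_{t-1})(Z_{t-1})\bigr)=0$ by the Markov property.

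Second, I bound the increments uniformly. From \eqref{eq:deltaM-local-bounds}, $|X_t|\le\tilde\eta_{t-1}\,384\mixing\phiinfs\rmaxs\le\eta_{t-1}\,384\mixing\phiinfs\rmaxs$. The key point is that the localization makes this bound hold almost surely with a deterministic constant. Before exit we have $\|\tilde\btheta_{t-1}\|\le\rmax$, and after exit the increment is zero. So $c_t:=384\mixing\phiinfs\rmaxs\,\eta_{t-1}$ is deterministic and $\sum_{t\ge1}c_t^2=(384\mixing\phiinfs\rmaxs)^2\sum_{t\ge0}\eta_t^2<\infty$ by square-summability of $(a_t)$.

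Third, I apply Lemma~\ref{lem:pinelis} with these $c_t$. If the lemma is stated for a finite horizon, I extend it to $\sup_{T\ge1}$ as follows. Pinelis' bound comes from a nonnegative supermartingale of the form $\cosh(\lambda \tilde M_T)\exp(-\lambda^2\sum_{t\le T}c_t^2/2)$, so Ville's maximal inequality applies directly. Alternatively, the event $\{\sup_{T\le N}|\tilde M_T|>x\}$ is monotone in $N$, and the bound obtained with the full sum $\sum_{t\ge0}\eta_t^2$ is uniform in $N$; continuity of measure then gives the claim. Choosing $\lambda$ optimally yields the stated threshold $384\mixing\phiinfs\rmaxs\sqrt{\sum_t\eta_t^2}\sqrt{2\log(2/\delta)}$.

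The only delicate step is measurability and the choice of indexing: $u_{t-1}$ must be fixed before $Z_t$ is revealed, and the stopping must not introduce dependence on the future. The definition $h_{t-1}\in\mathcal F_{t-2}$ handles this.
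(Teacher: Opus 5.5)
Your proposal is correct and follows essentially the paper's proof. You apply Lemma~\ref{lem:pinelis} in the $(2,1)$-smooth space $\mathbb{R}$, using the deterministic increment bounds from \eqref{eq:deltaM-local-bounds} to set $b_*^2=(384\mixing\phiinfs\rmaxs)^2\sum_{t\ge0}\eta_t^2$, and solve $2\exp(-r^2/(2b_*^2))=\delta$ for $r$. Your extra step extending a finite-horizon bound to $\sup_{T\ge1}$ (via Ville or continuity of measure) is valid but unnecessary, since Lemma~\ref{lem:pinelis} as stated already controls $f^\ast=\sup_{j}\|f_j\|$.
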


\begin{proof}
Setting $r\geq Db_*\sqrt{2\log\frac 2 \delta}$ in Lemma~\ref{lem:pinelis}, we have $2\exp\left(-\frac{r^2}{2D^2b^2_*}\right)\leq \delta$.
Since $\mathbb{R}^d$ with $\ell_2$ is a $(2,1)$-smooth Banach space, choosing $b_*^2=\sum_{t=0}^\infty {\eta}_t^2 (384\mixing\phiinfs\rmaxs)^2$ finishes the proof.
\end{proof}

\subsection{Bound on the remainder term}
We now bound the deterministic remainder $\tilde R_T$ defined in
\eqref{eq:Btilde-decomp} using Lemma~\ref{lem:R-Abel} and the Lipschitz
continuity of $u_{t-1}$.

\begin{lemma}[Bound on the localized remainder]
\label{lem:remainder}
In the setting of Section~\ref{sec:assumptions}, with the constants from
Lemmas~\ref{lem:bounded} and~\ref{lem:mixing}, for any $T\ge1$,
\[
|\tilde R_T| \le
576\eta_0\mixing\phiinfs \rmaxs
+
672\mixing\phiinf^4\rmaxs\sum_{t=1}^T\eta_{t-1}^2\,.
\]
\end{lemma}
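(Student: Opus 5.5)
The plan is to apply the summation-by-parts identity of Lemma~\ref{lem:R-Abel} with $\eta_t$ replaced by the stopped stepsizes $\tilde\eta_t$, and with $u_t$ the Poisson solutions built from the stopped iterates $\tilde\btheta_t$. The algebra in Lemma~\ref{lem:R-Abel} uses nothing about the stepsizes, so the identity holds verbatim:
\[
\tilde R_T=\tilde\eta_0u_0(Z_0)-\tilde\eta_Tu_T(Z_T)-\sum_{t=1}^T(\tilde\eta_{t-1}-\tilde\eta_t)u_{t-1}(Z_t)-\sum_{t=1}^T\tilde\eta_t\bigl(u_{t-1}(Z_t)-u_t(Z_t)\bigr).
\]
I will then bound the four pieces separately. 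The first three give the $576\eta_0$ term and the last gives the $672\sum_t\eta_{t-1}^2$ term.

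\textbf{First three terms.} Whenever $\tilde\eta_{t-1}>0$ we already have $\|u_{t-1}\|_\infty\le 192\mixing\phiinfs\rmaxs$, as established before Lemma~\ref{lem:pinelis-anytime-R}. Any term whose stopped stepsize is zero vanishes, so only this sup bound is needed.
\begin{itemize}
\item The boundary terms give $\tilde\eta_0\|u_0\|_\infty+\tilde\eta_T\|u_T\|_\infty\le 2\cdot 192\,\eta_0\mixing\phiinfs\rmaxs$, using $\tilde\eta_T\le\eta_T\le\eta_0$.
\item For the telescoping sum, note that $(\tilde\eta_t)$ is non-increasing and nonnegative: it equals the non-increasing $\eta_t$ up to $\exit$ and then drops to $0$. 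Hence $\tilde\eta_{t-1}-\tilde\eta_t\ge0$, and in every nonzero term $\tilde\eta_{t-1}>0$, so $u_{t-1}$ obeys the sup bound. The sum of the differences telescopes to at most $\tilde\eta_0\le\eta_0$, contributing another $192\eta_0\mixing\phiinfs\rmaxs$.
\end{itemize}
Together these give $576\eta_0\mixing\phiinfs\rmaxs$.

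\textbf{Lipschitz term.} This is the main step. A term is nonzero only if $\tilde\eta_t>0$, i.e.\ $t<\exit$. Then $\tilde\btheta_{t-1}=\btheta_{t-1}$ and $\tilde\btheta_t=\btheta_t$ both lie in the ball of radius $\rmax$, so $\|\btheta_{t}-\btheta^*\|\le2\rmax$. I will apply Lemma~\ref{lem:generic-poisson-growth}(iii) with $\btheta=\btheta_{t-1}$ and $\btheta'=\btheta_t$. Using $r_\infty\le\phi_\infty\rbase\le\phi_\infty\rmax$, the Lipschitz constant is at most
\[
16\mixing(2+4)\phiinfs\rmax+64\mixing\phiinfs\cdot2\rmax=224\mixing\phiinfs\rmax.
\]
The increment is $\|\btheta_t-\btheta_{t-1}\|=\eta_{t-1}\|\bg_{t-1}\|\le\eta_{t-1}(r_\infty\phi_\infty+2\phiinfs\rmax)\le3\eta_{t-1}\phiinfs\rmax$. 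Using $\tilde\eta_t\le\eta_t\le\eta_{t-1}$, each term is at most $672\mixing\phi_\infty^4\rmaxs\,\eta_{t-1}^2$. Summing over $t$ gives the second part of the claimed bound.

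\textbf{Main obstacle.} The only delicate point is the bookkeeping around the stopping time. I must check that every nonzero term involves only iterates with index below $\exit$, so that the ball bounds apply. In particular the Lipschitz step needs both $\btheta_{t-1}$ and $\btheta_t$ inside the ball. This holds because $\tilde\eta_t>0$ forces $t<\exit$. I must also check that the stopped stepsizes remain monotone, so the telescoping argument goes through.
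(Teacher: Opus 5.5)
Your proposal is correct and follows the paper's proof essentially step for step. Both apply Lemma~\ref{lem:R-Abel} with the stopped stepsizes, split the $576$ as $384+192$ from the boundary and telescoping terms, and bound the last term by the same Lipschitz constant $224\mixing\phiinfs\rmax$ times the increment $3\eta_{t-1}\phiinfs\rmax$, giving $672$. The only cosmetic difference is that the paper applies Lemma~\ref{lem:generic-poisson-growth}(iii) with $\btheta=\tilde\btheta_t$ and $\btheta'=\tilde\btheta_{t-1}$ instead of your ordering, which yields the identical constant.
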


\begin{proof}
Using Lemma~\ref{lem:R-Abel}, we have
\[
\tilde{R}_T=\tilde\eta_0u_0(Z_0)-\tilde\eta_Tu_T(Z_T)
-
\sum_{t=1}^T(\tilde\eta_{t-1}-\tilde\eta_t)u_{t-1}(Z_t)
-
\sum_{t=1}^T\tilde\eta_t\bigl(u_{t-1}(Z_t)-u_t(Z_t)\bigr)\,.
\]
If $\tilde \eta_{t-1}>0$, we have $\|\tilde {\btheta}_{t-1}\|\le \rmax$, hence
$\|\tilde {\btheta}_{t-1}-{\btheta}^*\|\le 2\rmax$.  Lemma~\ref{lem:generic-poisson-growth}
implies
\[
\|u_{t-1}\|_\infty
\le 192\mixing\phiinfs \rmaxs\,.
\]
Therefore
\[
|\tilde\eta_0u_0(Z_0)|+|\tilde\eta_Tu_T(Z_T)|
\le
384\eta_0\mixing\phiinfs \rmaxs\,.
\]
Since $(\eta_{t-1})$ is non-increasing, we have
$\sum_{t=1}^T|\tilde\eta_{t-1}-\tilde\eta_t|=\sum_{t=1}^T(\tilde\eta_{t-1}-\tilde\eta_t)\le\eta_0$, so
\[
\Bigl|
\sum_{t=1}^T(\tilde\eta_{t-1}-\tilde\eta_t)u_{t-1}(Z_t)
\Bigr|
\le
192\eta_0\mixing\phiinfs \rmaxs\,.
\]
For the $u_{t}-u_{t-1}$ term, when $t$ satisfies $\tilde \eta_{t}>0$, we expand the TD recursion
$\tilde{\btheta}_t-\tilde{\btheta}_{t-1}=\tilde\eta_{t-1}\bg(\tilde{\btheta}_{t-1},Z_{t-1})$ and use Lemma~\ref{lem:generic-poisson-growth}(iii) to conclude
\begin{align*}
\norm{u_t-u_{t-1}}_\infty
&\leq 16\mixing\left(2r_\infty\phi_\infty+4\phi_\infty^2\norm{\tilde {\btheta}_t}\right)\norm{\tilde{\btheta}_t-\tilde{\btheta}_{t-1}}
+64\mixing\phi_\infty^2\norm{\tilde{\btheta}_{t-1}-{\btheta}^*}\norm{\tilde {\btheta}_t-\tilde{\btheta}_{t-1}}\\
&\leq \tilde \eta_{t-1} 16\mixing(2r_\infty\phi_\infty+4\phi_\infty^2\rmax)(r_\infty\phiinf+2\phiinfs\rmax)\\
&\quad +64\tilde \eta_{t-1}\mixing\phi_\infty^2 2\rmax (r_\infty\phiinf+2\phiinfs\rmax)\\
&\leq \tilde \eta_{t-1}\phiinf^4\rmaxs(16\times18+64\times2\times3)\mixing\\
&\leq 672\tilde\eta_{t-1}\mixing\phiinf^4\rmaxs\,.
\end{align*}
Thus,
\begin{align*}
\left|\sum_{t=1}^T\tilde\eta_t\bigl(u_{t-1}(Z_t)-u_t(Z_t)\bigr)\right|
&\le
\sum_{t=1}^T\tilde\eta_t\|u_t-u_{t-1}\|_\infty
\le
672\mixing\phiinf^4\rmaxs\sum_{t=1}^T\tilde \eta_t\tilde\eta_{t-1}\\
&\le
672\mixing\phiinf^4\rmaxs\sum_{t=1}^T\eta_{t-1}^2\,.
\end{align*}
Combining the three bounds, we have
\begin{align*}
|\tilde R_T|
& \le
576\eta_0\mixing\phiinfs \rmaxs
+
672\mixing\phiinf^4\rmaxs\sum_{t=1}^T\eta_{t-1}^2\,.
\end{align*}
\end{proof}

\subsection{High-probability bound on the localized Markov noise}

Combining Lemmas~\ref{lem:pinelis-anytime-R} and~\ref{lem:remainder}
with the decomposition \eqref{eq:Btilde-decomp} yields the following
high-probability control of $\tilde B_T$.

\begin{lemma}[Localized high-probability bound for $B_T$]
\label{lem:Btilde-hp-any}
In the setting of Section~\ref{sec:assumptions}, with the constants from
Lemmas~\ref{lem:bounded} and~\ref{lem:mixing}, for any $\delta\in(0,1)$, we have
\[
\Pr\left\{
\frac12 \sup_{T\ge 1} \ |\tilde B_T|
\;\le\;
384\mixing \phiinfs \rmaxs \sqrt{\sum_{t=0}^\infty {\eta}_t^2} \sqrt{2\log\frac{2}{\delta}}
+ C
\right\}
\;\ge\;
1-\delta,
\]
where $C=576\eta_0\mixing\phiinfs \rmaxs +672\mixing\phiinf^4\rmaxs\sum_{t=0}^\infty\eta_{t}^2$.
\end{lemma}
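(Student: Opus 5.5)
The bound follows by combining the decomposition \eqref{eq:Btilde-decomp} with Lemmas~\ref{lem:pinelis-anytime-R} and~\ref{lem:remainder}. First, I will check that $\tilde B_T$ really equals $2\sum_{t=1}^T\tilde\eta_{t-1}h_{t-1}(Z_{t-1})$ with $h_{t-1}=h_{\tilde\btheta_{t-1},\tilde\btheta_{t-1}-\btheta^*}$. This is immediate from the definition of $\tilde B_T$ and of the scalar forcing function in Section~\ref{sec:td-poisson-bounds}. Second, I will use the Poisson decomposition of Section~\ref{sec:poisson_repr}, with $\tilde\eta$ in place of $\eta$, to write $\tfrac12\tilde B_T=\tilde M_T+\tilde R_T$ for every $T\ge1$. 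The triangle inequality then gives, deterministically and for every $T\ge1$,
\[
\tfrac12|\tilde B_T|\le|\tilde M_T|+|\tilde R_T|,
\]
so that $\tfrac12\sup_{T\ge1}|\tilde B_T|\le\sup_{T\ge1}|\tilde M_T|+\sup_{T\ge1}|\tilde R_T|$.

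Next I will control the two pieces separately. For the remainder, Lemma~\ref{lem:remainder} holds pathwise for every $T$. Since $\sum_{t=1}^T\eta_{t-1}^2\le\sum_{t=0}^\infty\eta_t^2$, this gives the uniform bound $\sup_{T\ge1}|\tilde R_T|\le C$ with the stated constant $C$, and no probability is lost at this step. For the martingale, Lemma~\ref{lem:pinelis-anytime-R} is already an anytime (supremum-over-$T$) statement. On an event of probability at least $1-\delta$ it gives
\[
\sup_{T\ge1}|\tilde M_T|\le384\mixing\phiinfs\rmaxs\sqrt{\sum_{t=0}^\infty\eta_t^2}\sqrt{2\log\tfrac2\delta}.
\]
On that same event the sum of the two bounds is exactly the claimed inequality, which proves the lemma.

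The only subtle point is making sure Lemma~\ref{lem:pinelis-anytime-R} legitimately applies to the stopped process. This requires checking three things. The increments $\tilde\eta_{t-1}\Delta M_t$ must be martingale differences, which holds because $\tilde\eta_{t-1}$ is $\mathcal F_{t-1}$-measurable ($\{T_{\mathrm{exit}}>t-1\}\in\mathcal F_{t-1}$) and $u_{t-1}$ is $\mathcal F_{t-1}$-measurable. The increments must satisfy the deterministic bound \eqref{eq:deltaM-local-bounds}, and the summed squared bounds $\sum_t\tilde\eta_{t-1}^2(384\mixing\phiinfs\rmaxs)^2$ must be dominated by the deterministic $b_*^2$ used there. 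Both bounds come from localization: $\tilde\eta_{t-1}>0$ forces $\|\tilde\btheta_{t-1}\|\le\rmax$. Since all of this is established just before the statement, the proof reduces to the union of a deterministic bound and a single probabilistic event, so no extra union bound or factor on $\delta$ is needed.
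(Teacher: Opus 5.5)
Your proposal is correct and matches the paper's proof. You split $\tfrac12\tilde B_T=\tilde M_T+\tilde R_T$ via \eqref{eq:Btilde-decomp}, bound $\sup_T|\tilde R_T|$ deterministically by Lemma~\ref{lem:remainder} after extending the partial sum of $\eta_{t-1}^2$ to the full series, and bound $\sup_T|\tilde M_T|$ on a single probability-$(1-\delta)$ event via Lemma~\ref{lem:pinelis-anytime-R}. Your added check that the stopped increments are martingale differences with deterministic sup-norm bounds is what the paper establishes just before the lemma.
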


\begin{proof}
From \eqref{eq:Btilde-decomp}, $\tilde B_T=2\tilde M_T+2\tilde R_T$. By Lemma~\ref{lem:pinelis-anytime-R}, with probability at least $1-\delta$,
\[
\sup_{T\ge 1} \ |\tilde M_T|
\le
384 \mixing \phiinfs \rmaxs \sqrt{\sum_{t=0}^\infty {\eta}_t^2} \sqrt{2\log\frac{2}{\delta}}\,.
\]
By Lemma~\ref{lem:remainder}, we also have the deterministic bound
\[
\sup_{T\ge 1} \ |\tilde R_T|
\le
576\eta_0\mixing\phiinfs \rmaxs
+
672\mixing\phiinf^4\rmaxs\sum_{t=0}^\infty\eta_{t}^2\,.
\]
Combining these two bounds yields the desired result.
\end{proof}

\subsection{High-probability bounded iterates via bootstrap}

We can now bootstrap on $\rmax$ to prove high-probability bounded
iterates, completing the proof of
Theorem~\ref{thm:bounded-iterates-main}.

\begin{lemma}[Bootstrap inequality for the radius]
\label{lem:bootstrap-ineq}
In the setting of Section~\ref{sec:assumptions}, with the constants from
Lemmas~\ref{lem:bounded} and~\ref{lem:mixing}, for all $\delta\in(0,1)$, with probability at least $1-\delta$, we have
\begin{align*}
\sup_{T\ge 0} \ \|\tilde \be_T\|^2
\;&\le\;
\|\be_0\|^2
+768\mixing \phiinfs \rmaxs \sqrt{\sum_{t=0}^\infty {\eta}_t^2} \sqrt{2\log\frac{2}{\delta}} + 1152\eta_0\mixing\phiinfs \rmaxs
\\ &\quad\; +
1344\mixing\phiinf^4\rmaxs\sum_{t=0}^\infty\eta_{t}^2+ 9\phiinf^4\rmaxs \sum_{t=0}^\infty\eta_t^2,
\end{align*}
where $\tilde \be_T:=\tilde{\btheta}_T-{\btheta}^*$.
\end{lemma}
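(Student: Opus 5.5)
The plan is to rerun the one-step expansion that led to \eqref{eq:main_decom}, but for the stopped process $\tilde{\btheta}_t$ with the stopped stepsizes $\tilde\eta_t$. For $t\ge1$ we have the recursion $\tilde{\btheta}_t=\tilde{\btheta}_{t-1}+\tilde\eta_{t-1}\bg(\tilde{\btheta}_{t-1},Z_{t-1})$. This holds because for $t\le \exit$ it is the TD update with $\tilde\eta_{t-1}=\eta_{t-1}$, while for $t>\exit$ both sides equal ${\btheta}_{\exit}$ and $\tilde\eta_{t-1}=0$.

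Expanding $\|\tilde\be_t\|^2$ and summing from $1$ to $T$ gives
\[
\|\tilde\be_T\|^2=\|\be_0\|^2+\sum_{t=1}^T\tilde\eta_{t-1}^2\|\bg(\tilde{\btheta}_{t-1},Z_{t-1})\|^2+2\sum_{t=1}^T\tilde\eta_{t-1}\langle\bar\bg(\tilde{\btheta}_{t-1}),\tilde{\btheta}_{t-1}-{\btheta}^*\rangle+\tilde B_T .
\]
The third term is nonpositive by \eqref{eq:dirichlet-monotone} and the nonnegativity of $f-f({\btheta}^*)$, so we drop it. The case $T=0$ is trivial, since the right-hand side of the claim dominates $\|\be_0\|^2$.

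For the quadratic term, whenever $\tilde\eta_{t-1}>0$ we have $\|\tilde{\btheta}_{t-1}\|\le\rmax$. The gradient bound \eqref{eq:bound_of_g} then gives $\|\bg\|\le r_\infty\phiinf+2\phiinfs\rmax$. Since $r_\infty/\phiinf\le\rbase\le\rmax$, this is at most $3\phiinfs\rmax$, so its square is at most $9\phiinf^4\rmaxs$. Using $\tilde\eta_{t-1}\le\eta_{t-1}$, the whole sum is bounded by $9\phiinf^4\rmaxs\sum_{t\ge0}\eta_t^2$, uniformly in $T$.

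For $\tilde B_T$, invoke Lemma~\ref{lem:Btilde-hp-any}. On an event of probability at least $1-\delta$, $\sup_T|\tilde B_T|$ is at most twice the bound there, namely $768\mixing\phiinfs\rmaxs\sqrt{\sum\eta_t^2}\sqrt{2\log(2/\delta)}+1152\eta_0\mixing\phiinfs\rmaxs+1344\mixing\phiinf^4\rmaxs\sum\eta_t^2$. Every other bound in the argument is deterministic, so this is the only probabilistic step. On that single event the resulting inequality holds simultaneously for all $T$, and taking the supremum over $T$ gives the claim.

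The only delicate point is confirming that the stopped recursion is exactly consistent with the definitions of $\tilde B_T$ and of the Poisson decomposition, i.e., that the cross term produced by the expansion is precisely $\tilde B_T$ as defined. This requires checking the indexing at the boundary $t=\exit$: the step from time $\exit-1$ to $\exit$ still uses $\eta_{\exit-1}>0$ and a point inside the ball, which is exactly what the localized bounds require.
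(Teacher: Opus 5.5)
Your proposal is correct and follows the paper's proof. You expand the stopped process, drop the nonpositive drift term via \eqref{eq:dirichlet-monotone}, bound the quadratic term by $9\phiinf^4\rmaxs\sum_t\eta_t^2$ using $\|\bg\|\le 3\phiinfs\rmax$, and control $\tilde B_T$ by doubling the bound in Lemma~\ref{lem:Btilde-hp-any}. Your explicit check of the stopped recursion $\tilde{\btheta}_t=\tilde{\btheta}_{t-1}+\tilde\eta_{t-1}\bg(\tilde{\btheta}_{t-1},Z_{t-1})$ fills in a step the paper leaves implicit when it cites \eqref{eq:main_decom} for the stopped process.
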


\begin{proof}
Using inequality~\eqref{eq:main_decom} and dropping the nonpositive $2\sum_{t=1}^T \eta_{t-1}\langle\bar \bg({\btheta}_{t-1}),{\btheta}_{t-1}-{\btheta}^*\rangle$ term by~\eqref{eq:dirichlet-monotone}, we obtain
\[
\|\tilde \be_T\|^2
\le
\|\be_0\|^2
+
\sum_{t=1}^{T}\tilde\eta_{t-1}^2\|\bg(\tilde {\btheta}_{t-1},Z_{t-1})\|^2
+
\tilde B_T\,.
\]
Lemma~\ref{lem:bounded} implies
\[
\|\bg(\tilde {\btheta}_{t-1},Z_{t-1})\|
\le
\rinf\phiinf + 2\phiinfs \rmax\leq 3\phiinfs\rmax\,.
\]
Thus,
\[
\sum_{t=1}^{T}\tilde \eta_{t-1}^2\|\bg(\tilde {\btheta}_{t-1},Z_{t-1})\|^2
\le
 9\phiinf^4\rmaxs \sum_{t=0}^\infty\eta_t^2\,.
\]
Combining this with Lemma~\ref{lem:Btilde-hp-any} finishes the proof.
\end{proof}
We are now ready to give the proof of Theorem~\ref{thm:bounded-iterates-main}.
\begin{proof}
By the triangle inequality,
\begin{align*}
\sup_{T\ge 0} \ \|\tilde{\btheta}_T\|^2
&\le
2\|{\btheta}^*\|^2 + 2\sup_T \ \|\tilde \be_T\|^2\\
&\le
2\|{\btheta}^*\|^2 + 2\|{\btheta}^*-{\btheta}_0\|^2+
1536\mixing \phiinfs \rmaxs \sqrt{\sum_{t=0}^\infty {\eta}_t^2} \sqrt{2\log\frac{2}{\delta}}\\
&\quad  + 2304\eta_0\mixing\phiinfs \rmaxs+
(2688\mixing+18)\phiinf^4\rmaxs\sum_{t=0}^\infty\eta_{t}^2\\
&\leq \rbases\left(2+2+1536\mixing\phiinfs\rho^2\sqrt{\sum_{t=0}^\infty {\eta}_t^2}\sqrt{2\log\frac{2}{\delta}}\right)\\ &\quad + \rbases\left(2304\eta_0\mixing\phiinfs\rho^2+(2688\mixing+18)\phiinf^4\rho^2\sum^\infty_{t=0}\eta^2_t\right)\,.
\end{align*}
Recall that $\eta_{\mathrm{base}} = \frac{1}{c\mixing\phiinfs}$ and $\sqrt{\sum_{t=0}^\infty {\eta}_t^2} = \frac{1}{c\mixing\phiinfs}\sqrt{\sum_{t=0}^{\infty} a_t^2}$. It follows that
\begin{align*}
\sup_{T\ge 0} \ \|\tilde{\btheta}_T\|^2
&\leq \rbases\left(4+1536\frac{\rho^2}{c}\sqrt{\sum_{t=0}^{\infty} a_t^2}\sqrt{2\log\frac{2}{\delta}}+2304\frac{\rho^2}{c}+2706\left(\sum_{t=0}^{\infty} a^2_t\right)\frac{\rho^2}{c^2}\right)\,.
\end{align*}
There exists a sufficiently large $c$ such that
\[
4+1536\frac{\rho^2}{c}\sqrt{\sum_{t=0}^{\infty} a^2_t}\sqrt{2\log\frac{2}{\delta}}
+2304\frac{\rho^2}{c}
+2706\left(\sum_{t=0}^{\infty} a^2_t\right)\frac{\rho^2}{c^2}
\leq \rho^2.
\]
For such $c$, we have
\[
\sup_{T\ge 0} \ \|\tilde{\btheta}_T\|^2
\le
\rho^2\rbases = \rmaxs\,.
\]
On the event $\{\sup_T \ \|\tilde{\btheta}_T\|^2\le \rmaxs\}$ we must
have $\exit=\infty$ by the definition of $\exit$. Thus $\tilde{\btheta}_T={\btheta}_T$
for all $T$, and
\[
\sup_{T\ge 0} \ \|{\btheta}_T\|^2
\le \rmaxs\,.
\]
\end{proof}

\section{Convergence Rates: Proof of Theorem~\ref{thm:PR-main}}
\label{app:proof-PR-main}
Recall that the TD update can be written as
\[
\bg({\btheta}_{t-1},Z_{t-1})=-\bA_{Z_{t-1}}{\btheta}_{t-1}+\bb_{Z_{t-1}}.
\]
Therefore, the centered error $\be_t:={\btheta}_t-{\btheta}^*$ evolves as
\begin{align}
\be_{t}
&=
\be_{t-1} + \eta_{t-1}\bigl(\bb_{Z_{t-1}}-\bA_{Z_{t-1}}{\btheta}_{t-1}\bigr)\notag\\
&=
\be_{t-1} + \eta_{t-1}\bigl(-\bA_{Z_{t-1}}({\btheta}^*+\be_{t-1})+\bb_{Z_{t-1}}\bigr)\notag\\
&=
\be_{t-1} + \eta_{t-1}\bigl(-\bA_{Z_{t-1}}\be_{t-1} + \bxi_{t-1}\bigr)\notag\\
&=
(\bI_d-\eta_{t-1}\bA)\be_{t-1} + \eta_{t-1}\bxi_{t-1}+\eta_{t-1}\bdelta_{t-1} \be_{t-1}, \label{eq:e-recursion}
\end{align}
using $\bA_{Z_{t-1}}=\bA-\bdelta_{t-1}$ and $\bxi_{t-1}=\bb_{Z_{t-1}}-\bA_{Z_{t-1}}{\btheta}^*$.  Rearranging yields
\begin{equation}
\label{eq:A-e-identity}
\bA \be_{t-1}
=
\frac{\be_{t-1}-\be_{t}}{\eta_{t-1}}
+\bxi_{t-1}+\bdelta_{t-1} \be_{t-1}\,.
\end{equation}
Define the following quantities:
\[
S_T
:=
\sum_{t=1}^{T}\eta_{t-1},
\qquad
\bar{\be}_T
:=
\frac{1}{S_T}\sum_{t=1}^{T}\eta_{t-1} \be_{t-1},
\qquad
\bar{\btheta}_T:={\btheta}^*+\bar{\be}_T\,.
\]
Multiplying \eqref{eq:A-e-identity} by $\eta_{t-1}$ and summing from $t=1$ to $T$ yields
\[
\sum_{t=1}^{T}\eta_{t-1}\bA \be_{t-1}
=
\sum_{t=1}^{T}(\be_{t-1}-\be_{t})
+
\sum_{t=1}^{T}\eta_{t-1}\bxi_{t-1}
+
\sum_{t=1}^{T}\eta_{t-1}\bdelta_{t-1} \be_{t-1}\,.
\]
Dividing by $S_T$ and recalling the definition of $\bar{\be}_T$ yields
\begin{equation}
\label{eq:PR-decomp-eta}
\bA\bar{\be}_T
=
I_1 + I_2 + I_3,
\end{equation}
where
\[
I_1:=\frac{1}{S_T}\sum_{t=1}^{T}(\be_{t-1}-\be_{t}),
\quad
I_2:=\frac{1}{S_T}\sum_{t=1}^{T}\eta_{t-1}\bxi_{t-1},
\quad
I_3:=\frac{1}{S_T}\sum_{t=1}^{T}\eta_{t-1}\bdelta_{t-1} \be_{t-1}\,.
\]
Notice that, by Lemma~\ref{lem:pos-stable}, $(\bA+\bA^\top)/2\succeq\omega \bI_d$, so
for any $\bx\in\mathbb{R}^d$,
\[
\langle \bA \bx,\bx\rangle
=
\bx^\top\frac{\bA+\bA^\top}{2}\bx
\ge
\omega\|\bx\|^2\,.
\]
Combining this with Cauchy--Schwarz,
\[
\omega\|\bx\|^2
\le
\langle \bA \bx,\bx\rangle
\le
\|\bA \bx\|\|\bx\|,
\]
we obtain, for $\bx\neq0$,
\[
\|\bx\|
\le
\frac{1}{\omega}\,\|\bA \bx\|.
\]
In particular, for $\bx=\bar{\be}_T$,
\begin{equation}
\label{eq:fast_rate_decomp}
      f(\bar{\btheta}_T)-f({\btheta}^*)
=
\langle \bA \bar{\be}_T,\bar{\be}_T\rangle
\le
\|\;\bA \bar{\be}_T\|_2\,\|\bar{\be}_T\|
\le
\frac{1}{\omega}\,\|\bA \bar{\be}_T\|^2\,.
\end{equation}
We now bound $\norm{I_1},\norm{I_2}$, and $\norm{I_3}$ in turn to control $f(\bar{\btheta}_T)-f({\btheta}^*)$.

\subsection{Bound on the telescoping term $I_1$}

The first term is purely deterministic.

\begin{lemma}[Bound on $I_1$]
\label{lem:I1-bound}
On the bounded-iterates event
$\mathcal{E}_R:=\{\sup_t \ \|{\btheta}_t\|\le \rmax\}$ we have,
\[
\|I_1\|_2
\le
\frac{2\rmax}{\sum_{t=1}^{T}\eta_{t-1}}
\]
for all $T\geq 1$.
\end{lemma}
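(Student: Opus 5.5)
The argument is a telescoping computation followed by the triangle inequality, with the bounded-iterates event used only at the two endpoints. Since $I_1=\frac{1}{S_T}\sum_{t=1}^T(\be_{t-1}-\be_t)$, the sum collapses to $\be_0-\be_T$, so that
\[
I_1=\frac{\be_0-\be_T}{S_T}=\frac{\btheta_0-\btheta_T}{S_T},
\]
where the second equality holds because $\btheta^*$ cancels. This identity is deterministic and holds for every $T\ge1$ and every sample path. It needs no Poisson machinery, because no Markov noise appears in $I_1$.

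Next I bound the numerator on the event $\mathcal{E}_R$. There $\|\btheta_t\|\le\rmax$ for all $t\ge0$, in particular for $t=0$ and $t=T$. The triangle inequality then gives
\[
\|\btheta_0-\btheta_T\|\le\|\btheta_0\|+\|\btheta_T\|\le 2\rmax.
\]
Dividing by $S_T=\sum_{t=1}^T\eta_{t-1}>0$ yields $\|I_1\|_2\le 2\rmax/\sum_{t=1}^T\eta_{t-1}$ simultaneously for all $T\ge1$, since the event $\mathcal{E}_R$ does not depend on $T$.

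There is no real obstacle. The only point to check is that I route through $\btheta_0-\btheta_T$ rather than $\|\be_0\|+\|\be_T\|$. Bounding $\|\be_0\|+\|\be_T\|$ would cost $\|\btheta^*\|$ terms and give $\rbase+\rmax+\rbase$, which exceeds $2\rmax$ only when $\rho<2$, a case excluded since $\rho>2$. The cancellation of $\btheta^*$ gives the stated constant $2\rmax$ directly.
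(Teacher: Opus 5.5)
Your proposal is correct and follows the paper's proof: both telescope $I_1$ to $(\be_0-\be_T)/S_T$, note that $\btheta^*$ cancels so the numerator is $\btheta_0-\btheta_T$, and bound it by $2\rmax$ on $\mathcal{E}_R$ using the triangle inequality. Your side remark is also right: the alternative bound $\|\be_0\|+\|\be_T\|\le \rmax+2\rbase$ would still suffice because $\rho>2$, but you do not need it.
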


\begin{proof}
By telescoping,
\[
\sum_{t=1}^{T}(\be_{t-1}-\be_{t}) = \be_0-\be_T,
\]
so
\[
I_1
=
\frac{\be_0-\be_T}{S_T}.
\]
On $\mathcal{E}_R$,
$\|\be_0-\be_T\|=\|{\btheta}_0-{\btheta}^*-{\btheta}_{T}+{\btheta}^*\|\le 2 \rmax$. Therefore,  $\|I_1\|
\le
\frac{2\rmax}{\sum_{t=1}^{T}\eta_{t-1}}$.
\end{proof}

\subsection{Bound on $I_2$ via vector-valued Martingale Inequality}

\begin{lemma}[Bounds for $I_2$]
\label{lem:I2-bound}
In the setting of Section~\ref{sec:assumptions}, with the constants from
Lemmas~\ref{lem:bounded} and~\ref{lem:mixing}, let
\[
I_2
:=
\frac{1}{S_T}\sum_{t=1}^{T}\eta_{t-1}\bxi_{t-1},
\]
where $\bxi_{t-1}:=\bxi(Z_{t-1})$ and the stepsize sequence
$(\eta_{t-1})$ is non-increasing. Then, with probability at least $1-\delta$, for all $T\geq 1$, we have
\[
\|I_2\|\leq
\frac{1}{\sum_{t=1}^{T}\eta_{t-1}}\left(2\sqrt{\sum_{t=0}^{\infty} \eta_t^2}\sqrt{2\log\frac 2 \delta}+3\eta_0\right)16\mixing(\rinf\phiinf+2\phiinfs\norm{{\btheta}^*})\,.
\]
\end{lemma}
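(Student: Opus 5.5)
We apply the Poisson decomposition of Section~\ref{sec:poisson_repr} with the fixed forcing function $h(z)=\varepsilon_z\bphi(s)=\bxi(z)$. This function has stationary mean zero, since $\E_{\pi_Z}[\bb_Z-\bA_Z\btheta^*]=\bb-\bA\btheta^*=\bzero$. Let $u^*$ be its infinite-series Poisson solution. By Lemma~\ref{lem:generic-poisson-growth}(ii), $\|u^*\|_\infty\le 16\mixing(\rinf\phiinf+2\phiinfs\norm{\btheta^*})=:U$. Because $h$ does not depend on the iterates, $u_{t-1}\equiv u^*$ for every $t$. The decomposition then reads
\[
\sum_{t=1}^T\eta_{t-1}\bxi_{t-1}=M_T+R_T,\qquad M_T=\sum_{t=1}^T\eta_{t-1}\bigl(u^*(Z_t)-(P_Zu^*)(Z_{t-1})\bigr),
\]
where $M_T$ is an $\R^d$-valued martingale with respect to $(\F_T)$. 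This bound needs no localization and no bounded-iterates event, which is why it holds unconditionally with probability $1-\delta$.

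\emph{Martingale term.} Each increment satisfies $\|\eta_{t-1}\Delta M_t\|\le 2\eta_{t-1}U$, since $\|P_Zu^*\|_\infty\le\|u^*\|_\infty$. We apply Pinelis' inequality (Lemma~\ref{lem:pinelis}) in $(\R^d,\ell_2)$, which is $(2,1)$-smooth, so $D=1$. We take $b_*^2=\sum_{t\ge0}\eta_t^2(2U)^2$, exactly as in the proof of Lemma~\ref{lem:pinelis-anytime-R}. This gives, with probability at least $1-\delta$ and uniformly over $T\ge1$,
\[
\sup_{T}\|M_T\|\le 2U\sqrt{\textstyle\sum_t\eta_t^2}\sqrt{2\log(2/\delta)}.
\]
This yields the first term in the bracket of the statement.

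\emph{Remainder term.} We use Lemma~\ref{lem:R-Abel} with $u_t\equiv u^*$. The last sum vanishes identically because $u_{t-1}-u_t=0$. The first three terms contribute at most $\eta_0U+\eta_TU+U\sum_{t=1}^T(\eta_{t-1}-\eta_t)$, using that $(\eta_t)$ is non-increasing. The telescoping sum is at most $\eta_0$, so $\|R_T\|\le 3\eta_0U$ deterministically for every $T$.

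Combining the two bounds and dividing by $S_T=\sum_{t=1}^T\eta_{t-1}$ gives the claimed bound for all $T\ge1$ simultaneously on the Pinelis event. The only step needing care is the anytime version of Pinelis' inequality for vector-valued martingales with deterministic increment bounds. This step is already supplied by Lemma~\ref{lem:pinelis}, so there is no real obstacle beyond checking that $\sum_t\eta_t^2<\infty$, which holds because $(a_t)$ is square-summable.
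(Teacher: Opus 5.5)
Your proposal is correct and matches the paper's proof essentially step for step. It uses the same fixed Poisson solution for $\bxi$ with $\|u\|_\infty\le 16\mixing(\rinf\phiinf+2\phiinfs\norm{\btheta^*})$, applies Pinelis' inequality with increments bounded by $2\eta_{t-1}\|u\|_\infty$ for the martingale part, and bounds the remainder deterministically by $3\eta_0\|u\|_\infty$ using that the stepsizes are non-increasing. The only cosmetic difference is that you reach the remainder bound through Lemma~\ref{lem:R-Abel} with $u_t\equiv u$, whereas the paper states it directly; your telescoping computation actually gives the slightly sharper $2\eta_0 U$.
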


\begin{proof}
By Lemma~\ref{lem:poisson-solution}, we have
\[
u-P_Zu=\bxi,
\quad \text{ and } \quad
\|u\|_\infty\le 16\mixing (\rinf\phiinf+2\phiinfs\norm{{\btheta}^*})\,.
\]
Then, reasoning as in Section~\ref{sec:poisson_repr}, we obtain
\begin{align*}
\sum_{t=1}^{T}\eta_{t-1}\bxi_{t-1}
&=
\sum_{t=1}^{T}\eta_{t-1} \left(u(Z_{t-1})-\left(P_Z u\right)(Z_{t-1})\right)\\
&=\sum_{t=1}^{T}\eta_{t-1} \left(u(Z_{t-1})-u(Z_{t})+u(Z_{t})-\left(P_Z u\right)(Z_{t-1})\right)\\
&=M_T+R_T,
\end{align*}
where
\begin{align*}
M_T
&:=
\sum_{t=1}^{T}\eta_{t-1}\Delta M_{t},\quad
R_T
:=
\sum_{t=1}^{T}\eta_{t-1}\left(u(Z_{t-1})-u(Z_{t})\right)\,.
\end{align*}
Since $\|u\|_\infty\le 16\mixing(\rinf\phiinf+2\phiinfs\norm{{\btheta}^*})$,  we have
\[
\|\Delta M_{t}\|_\infty
\le
\|u\|_\infty
+
\bigl\|P_Z u\bigr\|_\infty
\le
2\|u\|_\infty
\le
32\mixing (\rinf\phiinf+2\phiinfs\norm{{\btheta}^*})\,.
\]
Using Lemma~\ref{lem:pinelis}, for any $\delta\in(0,1)$, we have with probability at least $1-\delta$,
\[
\sup_{T\geq 1} \ \norm{M_T}
\leq 32\mixing (\rinf\phiinf+2\phiinfs\norm{{\btheta}^*}_2)\sqrt{\sum_{t=0}^{\infty} \eta_t^2}\sqrt{2\log\frac 2 \delta}\,.
\]
For the remainder term $R_T$, we have
\begin{align*}\norm{R_T}
&=\norm{\sum_{t=1}^{T}\eta_{t-1}\bigl(u(Z_{t-1})-u(Z_{t})\bigr)}
\leq 3\eta_0\norm{u}_\infty \leq 48\mixing (\rinf\phiinf+2\phiinfs\norm{{\btheta}^*})\eta_0,
\end{align*}
since $\eta_{t-1}$ is non-increasing.
Thus, with probability at least $1-\delta$,
\[
\sup_{T\geq 1} \ \|M_T+R_T\|
\le
\left(2\sqrt{\sum_{t=0}^{\infty} \eta_t^2}\sqrt{2\log\frac 2 \delta}+3\eta_0\right)16\mixing(\rinf\phiinf+2\phiinfs\norm{{\btheta}^*})\,.
\]
Hence, with probability at least $1-\delta$,
\[
\|I_2\|
=
\frac{\|M_T+R_T\|_2}{S_T}
\le
\frac{1}{\sum_{t=1}^{T}\eta_{t-1}}\left(2\sqrt{\sum_{t=0}^{\infty} \eta_t^2}\sqrt{2\log\frac 2 \delta}+3\eta_0\right)16\mixing(\rinf\phiinf+2\phiinfs\norm{{\btheta}^*})\,.
\]
\end{proof}

\subsection{Bound on $I_3$ via localized vector-valued Martingale Inequality}

Recall the stopping time and stopped process:
\[\exit
:=
\inf\{t\ge 0:\,\|{\btheta}_t\|>\rmax\},
\qquad
\tilde{\btheta}_t:={\btheta}_{t\wedge\exit},\qquad
\tilde \be_t:=\tilde{\btheta}_t-{\btheta}^*,
\]
and the stopped stepsizes
\[
\tilde\eta_t
:=
\begin{cases}
\eta_t,& t<\exit,\\[0.3ex]
0,& t\ge\exit\,.
\end{cases}
\]
For $T\ge 1$, define the
localized multiplicative-noise term
\begin{equation}
\label{eq:I3-tilde-def}
\tilde I_{3,T}
:=
\frac{1}{S_T}\sum_{t=1}^{T}\tilde\eta_{t-1}\bdelta_{t-1} \tilde \be_{t-1}\,.
\end{equation}
On the event $\{\exit> T-1\}$ we have $\tilde\eta_{t-1}=\eta_{t-1}$ and $\tilde{\be}_{t-1}=\be_{t-1}$ for all $1\le t\le T$, so $\tilde I_{3,T}=I_3$.

\begin{lemma}[Localized bounds for $I_3$]
\label{lem:I3-bound}
In the setting of Section~\ref{sec:assumptions}, with the constants from
Lemmas~\ref{lem:bounded} and~\ref{lem:mixing}, consider the stopped process defined above. Fix $\delta\in(0,1)$. We have,
with probability at least $1-\delta$, for any $T\geq 1$,
\begin{align*}
\|\tilde I_{3,T}\|_2
\;&\le\; \frac{256\mixing\phiinfs \rmax\sqrt{\sum_{t=0}^{\infty} \eta_t^2}\sqrt{2\log\frac 2 \delta} + 384\eta_0 \mixing\phiinfs \rmax}{S_T}\\
&\quad \; + \frac{192\mixing\phi_\infty^4\rmax\sum_{t=0}^{\infty}\eta_t^2}{S_T}\,.
\end{align*}
\end{lemma}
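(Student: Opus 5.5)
The plan is to reuse the Poisson decomposition of Section~\ref{sec:poisson_repr}, now with the vector-valued forcing term $h_{t-1}:=h_{\tilde\btheta_{t-1}}=\bdelta(\cdot)\,\tilde\be_{t-1}$. This term is $\F_{t-2}$-measurable and has stationary mean zero. Let $u_{t-1}:=u^*_{\tilde\btheta_{t-1}}$ denote the series solution of $u_{t-1}-P_Zu_{t-1}=h_{t-1}$. Then
\[
\sum_{t=1}^T\tilde\eta_{t-1}\bdelta_{t-1}\tilde\be_{t-1}
=\sum_{t=1}^T\tilde\eta_{t-1}h_{t-1}(Z_{t-1})
=\tilde M_T+\tilde R_T .
\]
Here $\tilde M_T=\sum_t\tilde\eta_{t-1}\bigl(u_{t-1}(Z_t)-(P_Zu_{t-1})(Z_{t-1})\bigr)$ is an $\R^d$-valued martingale and $\tilde R_T=\sum_t\tilde\eta_{t-1}\bigl(u_{t-1}(Z_{t-1})-u_{t-1}(Z_t)\bigr)$. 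It then suffices to bound $\sup_T\|\tilde M_T\|$ with high probability and $\|\tilde R_T\|$ deterministically, and to divide by $S_T$.

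\textbf{Martingale part.} Whenever $\tilde\eta_{t-1}>0$ we have $\|\tilde\be_{t-1}\|\le 2\rmax$. Lemma~\ref{lem:generic-poisson-growth}(ii) then gives $\|u_{t-1}\|_\infty\le 64\mixing\phiinfs\cdot 2\rmax=128\mixing\phiinfs\rmax$. Consequently $\|\tilde\eta_{t-1}\Delta M_t\|\le 256\,\tilde\eta_{t-1}\mixing\phiinfs\rmax\le 256\,\eta_{t-1}\mixing\phiinfs\rmax$. I will apply Pinelis' inequality (Lemma~\ref{lem:pinelis}) in $(\R^d,\ell_2)$, which is $(2,1)$-smooth, with $b_*^2=(256\mixing\phiinfs\rmax)^2\sum_t\eta_t^2$, exactly as in Lemma~\ref{lem:pinelis-anytime-R}. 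This yields, with probability at least $1-\delta$ and simultaneously for all $T$,
\[
\|\tilde M_T\|\le 256\mixing\phiinfs\rmax\sqrt{\textstyle\sum_t\eta_t^2}\sqrt{2\log(2/\delta)} .
\]

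\textbf{Remainder part.} I will use the summation by parts of Lemma~\ref{lem:R-Abel} with stopped stepsizes, as in Lemma~\ref{lem:remainder}. The boundary terms $\tilde\eta_0u_0(Z_0)$ and $\tilde\eta_Tu_T(Z_T)$ contribute at most $256\eta_0\mixing\phiinfs\rmax$. The stepsize-difference sum is bounded using monotonicity, $\sum_t(\tilde\eta_{t-1}-\tilde\eta_t)\le\eta_0$, and contributes at most $128\eta_0\mixing\phiinfs\rmax$. Together these give the $384\eta_0\mixing\phiinfs\rmax$ term.

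For the drift term, I will use the Lipschitz bound of Lemma~\ref{lem:generic-poisson-growth}(iii), $\|u_t-u_{t-1}\|_\infty\le 64\mixing\phiinfs\|\tilde\btheta_t-\tilde\btheta_{t-1}\|$. When $\tilde\eta_t>0$, the increment is $\tilde\eta_{t-1}\|\bg(\tilde\btheta_{t-1},Z_{t-1})\|\le 3\tilde\eta_{t-1}\phiinfs\rmax$, using $\rinf\le\phiinf\rbase\le\phiinf\rmax$. Multiplying by $\tilde\eta_t\le\eta_{t-1}$ and summing gives at most $192\mixing\phi_\infty^4\rmax\sum_t\eta_t^2$.

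The main obstacle is bookkeeping at the stopping boundary. The Lipschitz step needs both $\tilde\btheta_{t-1}$ and $\tilde\btheta_t$ to be inside the ball, and the $u_t$ used in the boundary term must be well-defined once stepsizes vanish. I would handle both via the convention that terms with $\tilde\eta_t=0$ drop out, exactly as in Lemma~\ref{lem:remainder}. Note that the Lipschitz constant here is linear in the increment, with no extra $\|\btheta\|$ factor, which is why the bound scales with $\rmax$ rather than $\rmaxs$. Summing the two parts and dividing by $S_T$ gives the claim.
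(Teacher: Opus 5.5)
Your proposal is correct and follows the paper's proof essentially step for step. It uses the same Poisson decomposition with forcing term $h_{\tilde\btheta_{t-1}}$, and the same Pinelis bound with per-step increment $256\tilde\eta_{t-1}\mixing\phiinfs\rmax$. The summation-by-parts remainder splits the same way, into $256\eta_0+128\eta_0$ boundary and stepsize-difference contributions plus the $192\mixing\phi_\infty^4\rmax\sum_t\eta_t^2$ drift term.

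One small clarification on your stopping-boundary remark. The vector-case Lipschitz bound in Lemma~\ref{lem:generic-poisson-growth}(iii) holds globally in $\btheta$. So only $\tilde\btheta_{t-1}$ needs to lie in the ball, for the increment bound $3\tilde\eta_{t-1}\phiinfs\rmax$; you do not need both iterates inside it.
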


\begin{proof}
Recall
\[
\bdelta_z=\bA-\bA_z,
\qquad
h_{\tilde {\btheta}}(z):=\bdelta_z(\tilde {\btheta}-{\btheta}^*),
\]
and $\|\bdelta_z\|_{\mathrm{op}}\le 4\phi_\infty^2$ for all
$z$.  Thus, for each $\btheta$, there exists a Poisson solution
$u_{\btheta}:\mathcal{Z}\to\mathbb{R}^d$ satisfying
\[
u_{\btheta} - P_Zu_{\btheta}
=
h_{\btheta},
\]
and Lemma~\ref{lem:generic-poisson-growth} gives
\[
\|u_{\btheta}\|_\infty
\le
64\mixing\phiinfs\norm{{\btheta}-{\btheta}^*},
\qquad
\|u_{\btheta}-u_{{\btheta}'}\|_\infty
\le
64\mixing\phiinfs\|{\btheta}-{\btheta}'\|\,.
\]
Reasoning as in Section~\ref{sec:poisson_repr} with $h_{\tilde {\btheta}_{t-1}}$ yields, for all $T\ge1$,
\begin{align*}
\sum_{t=1}^{T}\tilde\eta_{t-1}\bdelta_{t-1} \tilde \be_{t-1}
&=\sum_{t=1}^{T}\tilde\eta_{t-1} h_{\tilde {\btheta}_{t-1}}(Z_{t-1}) = \sum_{t=1}^{T}\tilde\eta_{t-1} \left(u_{\tilde {\btheta}_{t-1}}(Z_{t-1})-\left(P_Z u_{\tilde {\btheta}_{t-1}}\right)(Z_{t-1})\right)\\
&= \sum_{t=1}^{T}\tilde\eta_{t-1} \left(u_{\tilde {\btheta}_{t-1}}(Z_{t-1})-u_{\tilde{\btheta}_{t-1}}(Z_{t})+u_{\tilde{\btheta}_{t-1}}(Z_{t})-\left(P_Z u_{\tilde {\btheta}_{t-1}}\right)(Z_{t-1})\right)\,.
\end{align*}
Let $\Delta M_{t}\coloneqq u_{\tilde{\btheta}_{t-1}}(Z_{t})-\left(P_Z u_{\tilde {\btheta}_{t-1}}\right)(Z_{t-1})$. We have $\sum_{t=1}^{T}\tilde\eta_{t-1}\bdelta_{t-1} \tilde \be_{t-1}=\tilde M_T + \tilde R_T$, where
\begin{align*}
\tilde M_T
&:=
\sum_{t=1}^{T}\tilde\eta_{t-1}\Delta M_{t},\\
\tilde R_T
&:=
\sum_{t=1}^{T}\tilde\eta_{t-1}
\bigl(u_{\tilde{\btheta}_{t-1}}(Z_{t-1})-u_{\tilde{\btheta}_{t-1}}(Z_{t})\bigr)\,.
\end{align*}
If $\tilde \eta_{t-1}>0$, then $\norm{\tilde {\btheta}_{t-1}} \leq \rmax$ and we have
\[
\|\tilde \eta_{t-1}\Delta M_{t}\|_2
\le 2\tilde \eta_{t-1}\norm{u_{\tilde {\btheta}_{t-1}}}_{\infty}
\le 256\tilde \eta_{t-1} \mixing\phiinfs \rmax\,.
\]
Using Lemma~\ref{lem:pinelis}, for any $\delta\in(0,1)$, we have with probability at least $1-\delta$,
\[
\sup_{T\geq 1} \ \norm{\tilde M_T}_2
\leq 256\mixing\phiinfs \rmax\sqrt{\sum_{t=0}^{\infty} \eta_t^2}\sqrt{2\log\frac 2 \delta}\,.
\]
For the remainder term $\tilde R_T$, using Lemma~\ref{lem:R-Abel}, we have
\[\tilde{R}_T=\tilde\eta_0u_{\tilde {\btheta}_0}(Z_0)-\tilde\eta_{T}u_{\tilde{\btheta}_T}(Z_T)
-
\sum_{t=1}^T(\tilde\eta_{t-1}-\tilde\eta_{t})u_{\tilde {\btheta}_{t-1}}(Z_t)
-
\sum_{t=1}^T\tilde\eta_{t}\bigl(u_{\tilde {\btheta}_{t-1}}(Z_t)-u_{\tilde{\btheta}_t}(Z_t)\bigr)\,.\]
Since $\eta_0\geq\eta_{T-1}$, we obtain
\[
\|\tilde\eta_0 u_{\tilde{\btheta}_0}(Z_0)\|_2
+
\|\tilde\eta_{T} u_{\tilde {\btheta}_T}(Z_T)\|_2
\le
256\eta_0\mixing \phiinfs\rmax\,.
\]
For the second term, we have
\[
\left\|\sum_{t=1}^{T}(\tilde\eta_{t-1}-\tilde\eta_{t})u_{\tilde{\btheta}_{t-1}}(Z_{t})\right\|_2
\le 128\mixing\phiinfs\rmax\sum_{t=1}^{T}|\tilde\eta_{t-1}-\tilde\eta_{t}|\,.
\]
Since the stopped stepsizes $(\tilde\eta_t)$ are non-increasing and satisfy
$\sum_{t=1}^{T}(\tilde\eta_{t-1}-\tilde\eta_{t})
=\tilde\eta_0-\tilde\eta_{T}\le\eta_0$, we have
\[
\left\| \sum_{t=1}^{T}(\tilde\eta_{t-1}-\tilde\eta_{t})u_{\tilde {\btheta}_{t-1}}(Z_{t})
\right\|_2
\le 128\eta_0\mixing\phiinfs\rmax\,.
\]
For the last term, by the Lipschitz property of $u_{\tilde{\btheta}}$ in part~(iii) of Lemma~\ref{lem:generic-poisson-growth}, we have
\[
\|u_{\tilde{\btheta}_{t-1}}-u_{\tilde{\btheta}_{t}}\|_\infty
\le
64\mixing\phiinfs\|\tilde{\btheta}_{t-1}-\tilde{\btheta}_{t}\|_2\,.
\]
Expanding the TD recursion, we have
\[
\|\tilde{\btheta}_{t}-\tilde{\btheta}_{t-1}\|_2
=
\tilde\eta_{t-1}\|\bg(\tilde{\btheta}_{t-1},Z_{t-1})\|_2
\le
\tilde\eta_{t-1}\bigl(r_\infty+2\phi_\infty\|\tilde{\btheta}_{t-1}\|_2\bigr)\phi_\infty
\le
3\tilde \eta_{t-1}\phi_\infty^2\rmax\,.
\]
Therefore,
\[
\begin{aligned}
\left\| \sum_{t=1}^{T}\tilde\eta_{t}\bigl(u_{\tilde{\btheta}_{t-1}}(Z_{t})-u_{\tilde{\btheta}_{t}}(Z_{t})\bigr)\right\|_2
&\le \sum_{t=1}^{T}\tilde\eta_{t} \|u_{\tilde{\btheta}_{t-1}}-u_{\tilde{\btheta}_{t}}\|_\infty\\
&\le 192\mixing\phi_\infty^4\rmax\sum_{t=1}^{T}\tilde\eta_{t}\tilde\eta_{t-1}\\
&\le 192\mixing\phi_\infty^4\rmax\sum_{t=0}^{\infty}\eta_t^2,
\end{aligned}
\]
since $(\tilde \eta_t)$ is non-increasing.
Combining all the estimates finishes the proof.
\end{proof}

\subsection{Proof of Theorem~\ref{thm:PR-main}}

\begin{proof}
Fix $\delta\in (0,1)$ and choose $(c,\rho)$ with $\rho>2$ satisfying
\[
4+1536\frac{\rho^2}{c}\sqrt{\sum_{t=0}^{\infty} a^2_t}\sqrt{2\log\frac{8}{\delta}}+2304\frac{\rho^2}{c}+2706\left(\sum_{t=0}^{\infty} a^2_t\right)\frac{\rho^2}{c^2}\leq \rho^2\,.
\]
Then, with probability at least $1-\delta/4$, running TD(0) with the stepsize schedule $(\eta_t)$ defined in Theorem~\ref{thm:bounded-iterates-main} guarantees
\[
\sup_{t\geq 0} \ \norm{{\btheta}_t}
\leq \rho \max\left\{\norm{{\btheta}_0-{\btheta}^*},\norm{{\btheta}^*}, \frac{r_\infty}{\phi_\infty}\right\},
\]
where we define the corresponding event $\mathcal{E}_R:=\left\{\sup_{t\geq 0} \ \norm{{\btheta}_t}\leq \rho \max\left\{\norm{{\btheta}_0-{\btheta}^*},\norm{{\btheta}^*}, \frac{r_\infty}{\phi_\infty}\right\}\right\}$.

Next, we apply Lemma~\ref{lem:I2-bound} with confidence parameter $\delta/4$
to obtain an event $\mathcal{E}_2$ such that
\[
\Pr\{\mathcal{E}_2\} \ge 1 - \frac{\delta}{4},
\text{ and on }\mathcal{E}_2:
\|I_2\|
\le
\frac{2}{S_T}\left(2\sqrt{\sum_{t=0}^{\infty} \eta_t^2}\sqrt{2\log\frac 8 \delta}+3\eta_0\right)8\mixing(\rinf\phiinf+2\phiinfs\norm{{\btheta}^*})\,.
\]
Similarly, apply Lemma~\ref{lem:I3-bound} with the same $\delta/4$ to
obtain an event $\mathcal{E}_3$ such that
\[
\Pr\{\mathcal{E}_3\}\ge1-\frac{\delta}{4},
\qquad
\text{and on }\mathcal{E}_3:
\]
\begin{align*}
\|\tilde I_{3,T}\|
&\le \frac{2}{S_T}\left(128\mixing\phiinfs \rmax\sqrt{\sum_{t=0}^{\infty} \eta_t^2}\sqrt{2\log\frac 8 \delta} + 192\eta_0 \mixing\phiinfs \rmax\right)\\
&\quad + \frac{2}{S_T}\left(96\mixing\phi_\infty^4\rmax\sum_{t=0}^{\infty}\eta_t^2\right)\,.
\end{align*}
Moreover, from the proof of Lemma~\ref{lem:bootstrap-ineq}, we can also obtain an event $\mathcal{E}_4$ such that
\[
\Pr\{\mathcal{E}_4\}\ge1-\frac{\delta}{4},
\qquad
\text{and on }\mathcal{E}_4:
\]
\begin{align*}
\sum_{t=1}^T\tilde \eta_{t-1}^2\norm{\bg(\tilde {\btheta}_{t-1},Z_{t-1})}^2 + \tilde B_T
&\le
768 \mixing \phiinfs \rmaxs \sqrt{\sum_{t=0}^\infty {\eta}_t^2} \sqrt{2\log\frac{8}{\delta}} + 1152\eta_0\mixing\phiinfs \rmaxs\\
&\quad + 1344\mixing\phiinf^4\rmaxs\sum_{t=0}^\infty\eta_{t}^2+ 9\phiinf^4\rmaxs \sum_{t=0}^\infty\eta_t^2\,.
\end{align*}

In particular, if we choose $\rmax=\rho \max\left\{\norm{{\btheta}_0-{\btheta}^*},\norm{{\btheta}^*}, \frac{r_\infty}{\phi_\infty}\right\}$, we have $\exit=\infty$ on $\mathcal{E}_R$. Hence, for all $T\geq 1$,
$\tilde\eta_{t-1}=\eta_{t-1}$ and $\tilde \be_{t-1}=\be_{t-1}$ for $1\le t\le T$, so
$\tilde I_{3,T}=I_3$ and $\sum_{t=1}^T\tilde \eta_{t-1}^2\norm{\bg(\tilde {\btheta}_{t-1},Z_{t-1})}^2 + \tilde B_T =\sum_{t=1}^T\eta_{t-1}^2\norm{\bg({\btheta}_{t-1},Z_{t-1})}^2 + B_T $.
Define
\[
\mathcal{E}
:=
\mathcal{E}_R\cap\mathcal{E}_2\cap\mathcal{E}_3\cap\mathcal{E}_4.
\]
By a union bound, we have
\[
\Pr\{\mathcal{E}\}
\ge
1-\left(\frac{\delta}{4}+\frac{\delta}{4}+\frac{\delta}{4}+\frac{\delta}{4}\right)
= 1-\delta\,.
\]
Moreover, on $\mathcal{E}$,
\begin{align*}
\|\bA\bar{\be}_T\|
&=
\|I_1+I_2+I_3\|
\le
\|I_1\|+\|I_2\|+\|I_3\|\\
&\le \frac{2\rmax}{\sum_{t=1}^{T}\eta_{t-1}}+ \frac{2}{\sum_{t=1}^{T}\eta_{t-1}}\left(2\sqrt{\sum_{t=0}^{\infty} \eta_t^2}\sqrt{2\log\frac 8 \delta}+3\eta_0\right)8\mixing(\rinf\phiinf+2\phiinfs\norm{{\btheta}^*})\\
&\quad +  \frac{2}{\sum_{t=1}^{T}\eta_{t-1}}\left(128\mixing\phiinfs \rmax\sqrt{\sum_{t=0}^{\infty} \eta_t^2}\sqrt{2\log\frac 8 \delta} \right.\\
&\qquad\qquad\qquad \qquad\left. + 192\eta_0 \mixing\phiinfs \rmax + 96\mixing\phi_\infty^4\rmax\sum_{t=0}^{\infty}\eta_t^2\right)\,.
\end{align*}
Thus, using \eqref{eq:fast_rate_decomp}, on $\mathcal{E}$, we have
\[
f(\bar{\btheta}_T)-f({\btheta}^*)
\le
\frac{C_{\mathrm{fast}}^2}{\omega(\sum_{t=1}^{T}\eta_{t-1})^2},
\]
where
\begin{align*}
C_{\mathrm{fast}}&\coloneqq 2\rmax + \left(2\sqrt{\sum_{t=0}^{\infty} \eta_t^2}\sqrt{2\log\frac 8 \delta}+3\eta_0\right)48\mixing\phiinfs\rmax\\
&\quad +256\mixing\phiinfs \rmax \sqrt{\sum_{t=0}^{\infty} \eta_t^2}\sqrt{2\log\frac 8 \delta} + 384\eta_0 \mixing\phiinfs \rmax \\
&\quad + 192\mixing\phi_\infty^4\rmax \sum_{t=0}^{\infty}\eta_t^2\,.
\end{align*}
On the other hand, using \eqref{eq:main_decom}, we also have
\[
\|\be_T\|^2
\;=\;
\|\be_0\|^2
+
\sum_{t=1}^T \eta^2_{t-1}\|\bg({\btheta}_{t-1},Z_{t-1})\|^2+2\sum_{t=1}^T\eta_{t-1}\Ip{\bar \bg({\btheta}_{t-1})}{{\btheta}_{t-1}-{\btheta}^*}
+ B_T\,.
\]
Rearranging terms yields
\begin{align*}
2\sum_{t=1}^T\eta_{t-1}\Ip{\bar \bg({\btheta}_{t-1})}{{\btheta}^*-{\btheta}_{t-1}}&= \norm{\be_0}^2-\norm{\be_T}^2 + \sum_{t=1}^T \eta^2_{t-1}\|\bg({\btheta}_{t-1},Z_{t-1})\|^2+B_T\\
&\leq \rmaxs+ 768 \mixing \phiinfs \rmaxs \sqrt{\sum_{t=0}^\infty {\eta}_t^2} \sqrt{2\log\frac{8}{\delta}} \\
&\quad + 1152\eta_0\mixing\phiinfs \rmaxs\\
&\quad + 1344\mixing\phiinf^4\rmaxs\sum_{t=0}^\infty\eta_{t}^2+ 9\phiinf^4\rmaxs \sum_{t=0}^\infty\eta_t^2\,.
\end{align*}
Using the convexity of $f$ and \eqref{eq:dirichlet-monotone}, we also have
\begin{align*}
f(\bar {\btheta}_T)-f({\btheta}^*)&\leq \left(\frac{1}{\sum_{t=1}^{T}\eta_{t-1}}\right) \sum_{t=1}^T \eta_{t-1}\Ip{\bar \bg({\btheta}_{t-1})}{{\btheta}^*-{\btheta}_{t-1}}\\
&\leq \frac{\rmaxs}{\sum_{t=1}^{T}\eta_{t-1}}\left(0.5 + 384 \mixing \phiinfs \sqrt{\sum_{t=0}^\infty {\eta}_t^2} \sqrt{2\log\frac{8}{\delta}} + 576 \eta_0\mixing\phiinfs\right)\\
&\quad +\frac{\rmaxs}{\sum_{t=1}^{T}\eta_{t-1}}\left(672\mixing\phiinf^4\sum_{t=0}^\infty\eta_{t}^2+ 4.5\phiinf^4\sum_{t=0}^\infty\eta_t^2\right)\\
&\leq \frac{C_{\mathrm{robust}}}{\sum_{t=1}^{T}\eta_{t-1}},
\end{align*}
where
\begin{align*}C_{\mathrm{robust}}&\coloneqq \rmaxs \left(0.5 + 384 \mixing \phiinfs \sqrt{\sum_{t=0}^\infty {\eta}_t^2} \sqrt{2\log\left(\frac{8}{\delta}\right)} \right.\\
      &\quad \left.+576 \eta_0\mixing\phiinfs+672\mixing\phiinf^4\sum_{t=0}^\infty\eta_{t}^2+ 4.5\phiinf^4\sum_{t=0}^\infty\eta_t^2\right).
\end{align*}
Combining the two upper bounds for $f(\bar {\btheta}_T)-f({\btheta}^*)$, with probability at least $1-\delta$, we have
\[
f(\bar {\btheta}_T)-f({\btheta}^*)\leq \min\left\{\frac{C_{\mathrm{fast}}^2}{\omega(\sum_{t=1}^{T}\eta_{t-1})^2},\frac{C_{\mathrm{robust}}}{\sum_{t=1}^{T}\eta_{t-1}}\right\}\,.
\]
\end{proof}

\section{Explicit Derivation of Stepsize Schedules}
\label{sec:stepsize}

In this section, we consider specific stepsize schedules $(\eta_t)$ and derive corresponding corollaries of Theorem~\ref{thm:bounded-iterates-main} and Theorem~\ref{thm:PR-main}.

\begin{corollary}[High-probability bounded iterates]
      \label{cor:bounded-iterates-sqrtlog}
In the setting of Theorem~\ref{thm:bounded-iterates-main}, consider the following stepsize schedule:
\[
\eta_t=\frac{1}{c\mixing\phiinfs\sqrt{t+1}\log(t+3)}, \quad \forall t\geq 0,
\]
for some numerical constant $c>0$. Fix any $\delta\in(0,1)$ and let
$\rbase\coloneqq \max \left\{\norm{{\btheta}_0-{\btheta}^*},\norm{{\btheta}^*},r_\infty/\phi_\infty\right\}$, $A_1(\delta)=1536\sqrt{\frac{3}{\log 2}}\sqrt{2\log\frac{2}{\delta}}+2304$, and $A_2=\frac{8118}{\log 2}$.
Then, provided that $c>c_{\min}(\delta)\coloneqq\frac{A_1(\delta)+\sqrt{A_1^2(\delta)+4A_2}}{2}$, with probability at least $1-\delta$, we have
\[
\sup_{t\geq 0} \ \norm{{\btheta}_t}_2\leq \rho \rbase, \quad \text{where } \rho=\frac{2c}{\sqrt{c^2-A_1(\delta)c - A_2}}\,.
\]
\end{corollary}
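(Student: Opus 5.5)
The corollary is a direct specialization of Theorem~\ref{thm:bounded-iterates-main} with $a_t=\frac{1}{\sqrt{t+1}\log(t+3)}$. I will verify that this choice meets the hypotheses of the theorem. I will then bound $\sum_{t\ge0}a_t^2$ by an explicit constant, and check that the resulting $A_1(\delta)$ and $A_2$ are at least the theorem's. Since $c_{\min}$ and $\rho$ are monotone in $A_1,A_2$, this transfers the conclusion.

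\textbf{Hypotheses.} Clearly $a_t>0$. It is non-increasing, since both $\sqrt{t+1}$ and $\log(t+3)$ are increasing. Also $a_0=1/\log 3<1$, using the natural log so that $\log 3\approx1.0986$. Finally $\eta_t=\eta_{\mathrm{base}}a_t$ with $\eta_{\mathrm{base}}=1/(c\mixing\phiinfs)$, exactly as in \eqref{eq:anytime-stepsize-intro}.

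\textbf{Square-summability with an explicit constant.} I need
\[
\sum_{t=0}^\infty a_t^2=\sum_{t=0}^\infty\frac{1}{(t+1)\log^2(t+3)}\le\frac{3}{\log 2}.
\]
I would split off the first term or two and bound the tail by an integral. Since $x\mapsto 1/((x+1)\log^2(x+3))$ is decreasing, we have $\sum_{t\ge1}a_t^2\le\int_0^\infty\frac{dx}{(x+1)\log^2(x+3)}$. Using $x+1\ge (x+3)/3$, this is at most $3\int_0^\infty\frac{dx}{(x+3)\log^2(x+3)}=\frac{3}{\log 3}$. Adding $a_0^2=1/\log^2 3\approx0.83$ gives roughly $3.56$, which is below $3/\log2\approx4.33$. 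A cruder route also suffices: comparing with $\int_1^\infty\frac{dx}{x\log^2(x+2)}$ and then using $\log(x+2)\ge\log 2+\log x$-type bounds. The target constant $3/\log2$ leaves enough slack that the first route closes.

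\textbf{Transfer.} With $\Sigma:=\sum_t a_t^2\le 3/\log 2$, the theorem's constants satisfy $A_1^{\mathrm{thm}}(\delta)=1536\sqrt{\Sigma}\sqrt{2\log(2/\delta)}+2304\le A_1(\delta)$. Likewise $A_2^{\mathrm{thm}}=2706\,\Sigma\le 2706\cdot 3/\log 2=8118/\log 2=A_2$. The threshold $c_{\min}$ is increasing in $(A_1,A_2)$. Moreover, for $c>c_{\min}$ the quantity $c^2-A_1c-A_2$ is positive and decreasing in $(A_1,A_2)$, so $\rho$ is increasing in them. Hence $c>c_{\min}(\delta)$ for the corollary's constants implies $c>c_{\min}^{\mathrm{thm}}(\delta)$. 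Theorem~\ref{thm:bounded-iterates-main} then gives $\sup_t\|\btheta_t\|\le\rho^{\mathrm{thm}}\rbase\le\rho\,\rbase$ with probability at least $1-\delta$.

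Equivalently, one can rerun the final step of the theorem's proof with $\Sigma$ replaced by its upper bound. The inequality $4+A_1\rho^2/c+A_2\rho^2/c^2\le\rho^2$ is exactly $\rho^2\ge 4c^2/(c^2-A_1c-A_2)$, so it holds with equality at the stated $\rho$.

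The only step needing care is the explicit series estimate, in particular making sure the chosen comparison integral and the treatment of the $t=0$ term actually land below $3/\log 2$. Everything else is monotonicity bookkeeping.
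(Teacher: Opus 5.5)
Your proposal is correct and follows the paper's proof. You substitute $a_t=1/(\sqrt{t+1}\log(t+3))$ into Theorem~\ref{thm:bounded-iterates-main} and bound $\sum_{t\ge0}a_t^2\le 3/\log 2$; the paper does this in Lemma~\ref{lem:stepsize-lower-bound}(i) via $t+1\ge (t+3)/3$ and an integral from $2$, while you split off $a_0^2$ and integrate from $0$, which also works. You then solve $4+A_1\rho^2/c+A_2\rho^2/c^2\le\rho^2$ for $\rho$, and your monotonicity transfer in $(A_1,A_2)$ is equivalent to that step. Your explicit checks that $(a_t)$ is non-increasing with $a_0=1/\log 3\le 1$ fill in hypotheses the paper leaves implicit. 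Drop the aside about $\log(x+2)\ge\log 2+\log x$: it is false for $x>2$ and you do not need it.
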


\begin{proof}
Fix $\delta\in(0,1)$. We substitute $a_t=\frac{1}{\sqrt{t+1}\log(t+3)},\forall t\geq 0$ into Theorem~\ref{thm:bounded-iterates-main}. Lemma~\ref{lem:stepsize-lower-bound}(i) gives
$\sum_{t=0}^\infty a_t^2\leq \frac{3}{\log 2}$.
Thus, a sufficient requirement for $c$ and $\rho$ translates to
\[4+1536\frac{\rho^2}{c}\sqrt{\frac{3}{\log 2}}\sqrt{2\log\frac{2}{\delta}}+2304\frac{\rho^2}{c}+2706\frac{3}{\log 2}\frac{\rho^2}{c^2}\leq \rho^2\,.\]
We now derive an explicit relationship between $c$ and $\rho$ for the above inequality to hold. Denote $A_1(\delta)=1536\sqrt{\frac{3}{\log 2}}\sqrt{2\log\frac{2}{\delta}}+2304$ and $A_2=2706\frac{3}{\log 2}$. Then, the above inequality is equivalent to
\begin{equation}
      \label{eq:c_rho_condition}
4+\frac{\rho^2}{c}A_1(\delta)+\frac{\rho^2}{c^2}A_2\leq \rho^2\,.
\end{equation}

For any fixed $c>0$, rearranging \eqref{eq:c_rho_condition} yields
\[
c^2-A_1(\delta)c-A_2\geq \frac{4c^2}{\rho^2},
\]
which requires $c^2-A_1(\delta)c - A_2>0$ since the right-hand side is positive. In particular, for any  $c> c_{\min}(\delta)\coloneqq\frac{A_1(\delta)+\sqrt{A_1^2(\delta)+4A_2}}{2}$, choosing
\[
\rho\geq \rho_{\min}(c)\coloneqq \frac{2c}{\sqrt{c^2-A_1(\delta)c - A_2}}
\]
guarantees the condition. This completes the proof.
\end{proof}

\begin{corollary}[High-probability rate for PR averaging]
  \label{cor:PR-main}
Under Assumption~\ref{ass:mrp-ergodic-full-rank}, with
$\phi_\infty,r_\infty,\mixing$ and $\omega$ as in
Lemmas~\ref{lem:bounded}--\ref{lem:pos-stable}, consider the following stepsize schedule:
\[
\eta_t=\frac{1}{c\mixing\phiinfs\sqrt{t+1}\log(t+3)}, \qquad \forall t\geq 0,
\]
for some numerical constant $c>0$. Fix any $\delta\in(0,1)$ and let
$\rbase\coloneqq \max \left\{\norm{{\btheta}_0-{\btheta}^*},\norm{{\btheta}^*},r_\infty/\phi_\infty\right\}$, $\rmax=\rho\rbase$, $A_1(\delta)=1536\sqrt{\frac{3}{\log 2}}\sqrt{2\log\frac{8}{\delta}}+2304$, and $A_2=\frac{8118}{\log 2}$.
Then, provided that $c>c_{\min}(\delta)\coloneqq\frac{A_1(\delta)+\sqrt{A_1^2(\delta)+4A_2}}{2}$, with probability at least $1-\delta$, the following upper bound holds for $f(\bar{\btheta}_T)-f({\btheta}^*)$:
\begin{align*}
\rmaxs\,
\min&\left\{
\underbrace{
\frac{c^2\mixing^2\phi_\infty^4\,\log^2(T+2)}{\omega\,(\sqrt{T+1}-1)^2}
\left(
1+\frac{264}{c}
+\frac{176\sqrt{3}}{c\sqrt{\log 2}}\sqrt{2\log\frac{8}{\delta}}
+\frac{288}{c^2\mixing\log 2}
\right)^2
}_{\text{fast rate}},\right.
\\
&\quad
\left.\underbrace{
\frac{c\mixing\phi_\infty^2\,\log(T+2)}{4(\sqrt{T+1}-1)}
\left(
1+\frac{1152}{c}
+\frac{768\sqrt{3}}{c\sqrt{\log 2}}\sqrt{2\log\frac{8}{\delta}}
+\frac{4059}{c^2\mixing\log 2}
\right)
}_{\text{robust rate}}
\right\},
\end{align*}
where $\rho=\frac{2c}{\sqrt{c^2-A_1(\delta)c - A_2}}$. That is, $f(\bar{\btheta}_T)-f({\btheta}^*)=\min\{\mathcal{O}(\log^2(T)/(\omega T)),\mathcal{O}(\log(T)/\sqrt{T})\}$ up to constants depending on $c$, $\mixing$, $\phi_\infty$, and $\delta$.
\end{corollary}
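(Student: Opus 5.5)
This is a specialization of Theorem~\ref{thm:PR-main} to $a_t=\frac{1}{\sqrt{t+1}\log(t+3)}$. I will substitute this schedule into $C_{\mathrm{fast}}$, $C_{\mathrm{robust}}$ and $S_T$, and then simplify. No new probabilistic argument is needed.

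\textbf{Stepsize facts.} I need three elementary facts about the schedule.

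First, $\sum_{t\ge0}a_t^2\le 3/\log 2$. This is Lemma~\ref{lem:stepsize-lower-bound}(i), already used in Corollary~\ref{cor:bounded-iterates-sqrtlog}. Consequently, $H=\sum_t\eta_t^2\le \frac{3}{\log 2}\,(c\mixing\phiinfs)^{-2}$. Also, $\eta_0 = (c\mixing\phiinfs\log 3)^{-1}\le (c\mixing\phiinfs)^{-1}$.

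Second, I need a lower bound on $S_T$. Since $\log(t+3)$ is increasing, for $1\le t\le T$ we have $\log(t+2)\le\log(T+2)$. Hence
\[
S_T=\sum_{t=1}^T\eta_{t-1}\ \ge\ \frac{1}{c\mixing\phiinfs\log(T+2)}\sum_{t=1}^T\frac{1}{\sqrt t}\ \ge\ \frac{2(\sqrt{T+1}-1)}{c\mixing\phiinfs\log(T+2)},
\]
using the integral comparison $\sum_{t=1}^T t^{-1/2}\ge\int_1^{T+1}x^{-1/2}\,dx$. I expect this to be the content of Lemma~\ref{lem:stepsize-lower-bound}(ii).

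Third, the admissibility conditions hold. With $A_1(\delta),A_2$ as in the statement, namely $\sum a_t^2$ replaced by its upper bound $3/\log2$, we get $2706\cdot 3/\log 2=8118/\log 2$. Exactly as in Corollary~\ref{cor:bounded-iterates-sqrtlog}, the sufficient condition of Theorem~\ref{thm:PR-main} then holds for $c>c_{\min}(\delta)$ with the stated $\rho$. This works because replacing $\sum a_t^2$ by a larger value only strengthens the inequality $4+\rho^2A_1/c+\rho^2A_2/c^2\le\rho^2$. So Theorem~\ref{thm:PR-main} applies with $\rmax=\rho\rbase$.

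\textbf{Plugging in.} Start from the bracket in $C_{\mathrm{fast}}$, with $\mixing\phiinfs$ absorbed using $\eta_0\le(c\mixing\phiinfs)^{-1}$ and $\sqrt H\le\sqrt{3/\log2}\,(c\mixing\phiinfs)^{-1}$. It becomes at most
\[
2+\tfrac{528}{c}+\tfrac{352\sqrt3}{c\sqrt{\log2}}\sqrt{2\log\tfrac8\delta}+\tfrac{576}{c^2\mixing\log2}
= 2\Bigl(1+\tfrac{264}{c}+\tfrac{176\sqrt3}{c\sqrt{\log 2}}\sqrt{2\log\tfrac8\delta}+\tfrac{288}{c^2\mixing\log2}\Bigr).
\]
Squaring this, dividing by $\omega S_T^2$, and inserting the lower bound on $S_T$ gives the stated fast term: the factor $4$ from the squared $2$ cancels the $4$ from $(2(\sqrt{T+1}-1))^2$.

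Similarly, the bracket in $C_{\mathrm{robust}}$ is at most
\[
\tfrac12+\tfrac{576}{c}+\tfrac{384\sqrt3}{c\sqrt{\log2}}\sqrt{2\log\tfrac8\delta}+\tfrac{(672\mixing+4.5)\,3}{c^2\mixing^2\log2}.
\]
Bound $(672\mixing+4.5)/\mixing^2\le 676.5/\mixing$ using $\mixing\ge1$. Then $3\cdot676.5=2029.5$, so the last term is $\tfrac{2029.5}{c^2\mixing\log2}$. Factoring out $\tfrac12$ turns the bracket into $\tfrac12\bigl(1+\tfrac{1152}{c}+\cdots+\tfrac{4059}{c^2\mixing\log2}\bigr)$. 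Dividing by $S_T$ yields $\frac{c\mixing\phiinfs\log(T+2)}{4(\sqrt{T+1}-1)}(\cdots)$, which is the robust term.

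\textbf{Main obstacle.} The only delicate point is the $S_T$ lower bound and its behaviour at small $T$. Since $\sqrt{T+1}-1>0$ for $T\ge1$, the bound is valid throughout. Beyond that, the remaining work is bookkeeping of constants, and I would check each coefficient carefully against the above.
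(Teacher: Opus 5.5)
Your proposal is correct and follows the paper's own proof essentially line by line. You bound $\eta_0\le (c\mixing\phiinfs)^{-1}$ and $H\le 3/(c^2\mixing^2\phi_\infty^4\log 2)$, absorb $(672\mixing+4.5)/\mixing^2\le 676.5/\mixing$ to get the $2029.5$ (hence $4059$) coefficient, use Lemma~\ref{lem:stepsize-lower-bound}(ii) with $p=1$ for $S_T$, and inherit the $(c,\rho)$ condition from Corollary~\ref{cor:bounded-iterates-sqrtlog} with $\sum_t a_t^2$ replaced by $3/\log 2$; your constant bookkeeping for both the fast and robust brackets matches the paper's.
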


\begin{proof}
The condition for $(c,\rho)$ can be derived as in Corollary~\ref{cor:bounded-iterates-sqrtlog}. We now focus on simplifying the constants $C_{\mathrm{robust}}$ and $C_{\mathrm{fast}}$. Lemma~\ref{lem:stepsize-lower-bound}(i) gives $\sum_{t=0}^\infty \eta_t^2\leq \frac{3}{c^2\mixing^2\phi_\infty^4\log 2}$. Thus, we have
\begin{align*}
C_{\mathrm{robust}}
&\leq  \rmaxs \left[\frac12
+ 384 \mixing \phiinfs \sqrt{\frac{3}{c^2\mixing^2\phi_\infty^4\log 2}} \sqrt{2\log\frac{8}{\delta}}
+ 576 \eta_0\mixing\phiinfs \right.\\
&\qquad\qquad
\left.+\left(672\mixing+4.5\right)\phi_\infty^4\frac{3}{c^2\mixing^2\phi_\infty^4\log 2}\right]\\
&\leq \rmaxs\left[\frac12
+ \frac{384\sqrt{3}}{c\sqrt{\log 2}}\sqrt{2\log\frac{8}{\delta}}
+ \frac{576}{c}
+ \frac{2029.5}{c^2\mixing\log 2}\right],
\end{align*}
where we used $\eta_0\le 1/(c\mixing\phiinfs)$ and $\mixing\ge 1$, so that
$\frac{2016}{c^2\mixing\log 2}+\frac{13.5}{c^2\mixing^2\log 2}\le \frac{2029.5}{c^2\mixing\log 2}$.
Similarly,
\begin{align*}
C_{\mathrm{fast}}
&\leq \rmax\left[2+528\eta_0\mixing\phiinfs
+352\mixing\phiinfs\sqrt{\frac{3}{c^2\mixing^2\phi_\infty^4\log 2}}\sqrt{2\log\frac 8 \delta}\right.\\
&\quad\quad\quad\quad\left. +192\mixing\phi_\infty^4 \frac{3}{c^2\mixing^2\phi_\infty^4\log 2}\right]\\
&\leq \rmax\left[2+\frac{528}{c}
+ \frac{352\sqrt{3}}{c\sqrt{\log 2}}\sqrt{2\log\frac 8 \delta}
+ \frac{576}{c^2\mixing\log 2}\right]\,.
\end{align*}
Recalling $\rmax=\rho\rbase$, we obtain
\begin{align*}
C_{\mathrm{fast}}
&\le 2\rho\rbase\left(1+\frac{264}{c}
+\frac{176\sqrt{3}}{c\sqrt{\log 2}}\sqrt{2\log\frac 8 \delta}
+\frac{288}{c^2\mixing\log 2}\right),\\
C_{\mathrm{robust}}
&\le \frac{\rho^2\rbases}{2}\left(1+\frac{1152}{c}
+\frac{768\sqrt{3}}{c\sqrt{\log 2}}\sqrt{2\log\frac 8 \delta}
+\frac{4059}{c^2\mixing\log 2}\right).
\end{align*}
Next, let $S_T\coloneqq \sum_{t=1}^{T}\eta_{t-1}=\sum_{s=0}^{T-1}\eta_s$. By Lemma~\ref{lem:stepsize-lower-bound} with $p=1$ and $\kappa=c\mixing\phiinfs$,
$S_T\ge \frac{2(\sqrt{T+1}-1)}{c\mixing\phiinfs\log(T+2)}$.
Therefore,
\[
\frac{1}{S_T}\le \frac{c\mixing\phiinfs\log(T+2)}{2(\sqrt{T+1}-1)},
\qquad
\frac{1}{S_T^2}\le \frac{c^2\mixing^2\phi_\infty^4\log^2(T+2)}{4(\sqrt{T+1}-1)^2}\,.
\]
Substituting the above bounds into Theorem~\ref{thm:PR-main} yields the claimed result.
\end{proof}

\section{Removing the dependence on $\mixing$ in the stepsize}
\label{sec:no_tau_mixing}
The choice in~\eqref{eq:beststepsize} depends on the unknown mixing parameter $\mixing$, which may make the algorithm impractical. This dependence of the stepsize on $\mixing$ can be avoided by running TD(0) with the modified stepsize schedule
\begin{equation}
\label{eq:mixing-free-stepsize}
\eta_t
= \frac{1}{c\,\phiinfs\sqrt{t+1}\log^2(t+3)},
\qquad \forall t\geq 0,
\end{equation}
for some numerical constant $c>0$.  For $T\geq 1$, let
\begin{equation}
\label{eq:full-PR-average}
S_T\coloneqq \sum_{t=1}^{T}\eta_{t-1},
\qquad
\bar{\btheta}_T
\coloneqq
\frac{1}{S_T}\sum_{t=1}^{T}\eta_{t-1}{\btheta}_{t-1}\,.
\end{equation}
\begin{lemma}[Deterministic bound on the finite prefix]
\label{lem:deterministic-prefix}
Assume the feature and reward bounds in Lemma~\ref{lem:bounded}. For any nonnegative stepsizes $(\eta_t)_{t\ge0}$ and any integer $m\ge0$, the TD(0) iterates satisfy
\[
\max_{0\leq s\leq m}\norm{{\btheta}_s}
\leq \left(\norm{{\btheta}_0}+\frac{\rinf}{(1+\gamma)\phiinf}\right) \prod_{t=0}^{m-1} \left(1+(1+\gamma)\phiinfs\eta_t\right) - \frac{\rinf}{(1+\gamma)\phiinf}\,.
\]
In particular, for the stepsize~\eqref{eq:mixing-free-stepsize}, define
\[
B_m
\coloneqq
\left(\norm{{\btheta}_0}+\frac{\rinf}{(1+\gamma)\phiinf}\right)
\prod_{t=0}^{m-1}
\left(
1+
\frac{1+\gamma}{c\sqrt{t+1}\log^2(t+3)}
\right)
-
\frac{\rinf}{(1+\gamma)\phiinf}\,.
\]
Then $\max_{0\leq s\leq m}\norm{{\btheta}_s}\leq B_m$.  Furthermore,
\[
B_m
\leq
\left(\norm{{\btheta}_0}+\frac{\rinf}{(1+\gamma)\phiinf}\right)
\exp\left(
\frac{2(1+\gamma)\sqrt m}{c\log^2 3}
\right)
-
\frac{\rinf}{(1+\gamma)\phiinf}\,.
\]
\end{lemma}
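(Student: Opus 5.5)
The plan is a one-step growth bound followed by a Grönwall-type unrolling; no probabilistic argument is needed. First I bound the size of a single update. By \eqref{eq:bound_of_g}, with Lemma~\ref{lem:bounded} and the sharper factor $1+\gamma$ in place of $2$, every $z=(s,s')\in\mathcal{Z}$ satisfies
\[
\norm{\bg(\btheta,z)}\le r_\infty\phi_\infty+(1+\gamma)\phi_\infty^2\norm{\btheta}.
\]
Hence $\norm{\btheta_{t+1}}\le \norm{\btheta_t}+\eta_t\norm{\bg_t}\le \bigl(1+(1+\gamma)\phiinfs\eta_t\bigr)\norm{\btheta_t}+\eta_t r_\infty\phi_\infty$.

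Next I remove the affine term by a shift. Set $\kappa:=\frac{r_\infty}{(1+\gamma)\phi_\infty}$ and $x_t:=\norm{\btheta_t}+\kappa$. Since $\eta_t r_\infty\phi_\infty=(1+\gamma)\phiinfs\eta_t\,\kappa$, the recursion becomes
\[
x_{t+1}\le\bigl(1+(1+\gamma)\phiinfs\eta_t\bigr)x_t.
\]
This is the only step that needs care, and it is just a matter of matching constants. Induction on $t$ then gives $x_s\le x_0\prod_{t=0}^{s-1}(1+(1+\gamma)\phiinfs\eta_t)$. The right-hand side is nondecreasing in $s$, because each factor is at least $1$ when $\eta_t\ge 0$. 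So for every $s\le m$ the product up to $s-1$ is bounded by the product up to $m-1$. Subtracting $\kappa$ yields the first claim.

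For the specific stepsize \eqref{eq:mixing-free-stepsize}, substituting gives $(1+\gamma)\phiinfs\eta_t=\frac{1+\gamma}{c\sqrt{t+1}\log^2(t+3)}$, which is exactly the definition of $B_m$. For the exponential bound I use $1+x\le e^x$ together with $\log^2(t+3)\ge\log^2 3$. This reduces the problem to bounding the sum:
\[
\sum_{t=0}^{m-1}\frac{1}{\sqrt{t+1}}\le \int_0^{m}\frac{du}{\sqrt u}=2\sqrt m.
\]
The exponent is therefore at most $\frac{2(1+\gamma)\sqrt m}{c\log^2 3}$. Since $x_0>0$ and the map $y\mapsto x_0y-\kappa$ is increasing, this gives the stated bound on $B_m$.
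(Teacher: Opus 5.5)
Your proposal is correct and follows the paper's proof step for step. Both arguments use the one-step bound $\norm{\btheta_{t+1}}\le(1+(1+\gamma)\phi_\infty^2\eta_t)\norm{\btheta_t}+r_\infty\phi_\infty\eta_t$, then shift by $r_\infty/((1+\gamma)\phi_\infty)$ to get a purely multiplicative recursion, unroll it using monotonicity of the partial products, and finish with $1+x\le e^x$, $\log^2(t+3)\ge\log^2 3$ and $\sum_{t=0}^{m-1}(t+1)^{-1/2}\le 2\sqrt m$; you justify this last sum bound with the integral comparison that the paper leaves implicit.

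One cosmetic nit: $x_0$ can equal $0$ (e.g.\ when $\btheta_0=\bzero$ and $r_\infty=0$). In that case $y\mapsto x_0y-\kappa$ is only nondecreasing rather than increasing, which is all you need.
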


\begin{proof}
The bounds in Lemma~\ref{lem:bounded} imply
\begin{align*}
\norm{\bg({\btheta}_t,Z_t)}
&=
\left|r_t+\gamma\left\langle\bphi(s_{t+1}),{\btheta}_t\right\rangle
-\left\langle\bphi(s_t),{\btheta}_t\right\rangle\right|
\norm{\bphi(s_t)}
\\
&\leq
\rinf\phiinf+(1+\gamma)\phiinfs\norm{{\btheta}_t}\,.
\end{align*}
Therefore,
\[
\norm{{\btheta}_{t+1}}
\leq
(1+(1+\gamma)\phiinfs\eta_t)\norm{{\btheta}_t}
+\rinf\phiinf\eta_t\,.
\]
Let $b\coloneqq \rinf/((1+\gamma)\phiinf)$ and
$u_t\coloneqq \norm{{\btheta}_t}+b$. Then $u_{t+1}\leq(1+(1+\gamma)\phiinfs\eta_t)u_t$.
Iterating this inequality yields, for every $s\leq m$,
\[
u_s
\leq u_0\prod_{t=0}^{s-1}(1+(1+\gamma)\phiinfs\eta_t)
\leq u_0\prod_{t=0}^{m-1}(1+(1+\gamma)\phiinfs\eta_t)\,.
\]
Taking the supremum over $0\leq s\leq m$ and subtracting $b$ proves the first claim.
Substituting~\eqref{eq:mixing-free-stepsize} gives the displayed formula for $B_m$.
Finally, using $1+x\leq e^x$ and
\[
\sum_{t=0}^{m-1}\frac{1}{\sqrt{t+1}\log^2(t+3)}
\leq
\frac{2\sqrt m}{\log^2 3}
\]
gives the closed-form bound.
\end{proof}

\begin{corollary}[Mixing-free stepsize]
\label{cor:mixing-free-via-logt}
Under Assumption~\ref{ass:mrp-ergodic-full-rank}, with
$\phi_\infty,r_\infty,\mixing$ and $\omega$ as in
Lemmas~\ref{lem:bounded}--\ref{lem:pos-stable}, run TD(0) with the stepsize schedule:
\[
\eta_t
=
\frac{1}{c\,\phiinfs\sqrt{t+1}\log^2(t+3)},
\qquad t\ge 0,
\]
for some numerical constant $c>0$, and form the PR average $\bar{\btheta}_T$ as in~\eqref{eq:full-PR-average}.
Let
\[
m^*\coloneqq \min\{m\ge 0:\log(m+3)\ge \mixing\}\,.
\]
For $m\ge0$, define the deterministic prefix bound
\[
B_m
\coloneqq
\left(\norm{{\btheta}_0}+\frac{\rinf}{(1+\gamma)\phiinf}\right)
\prod_{t=0}^{m-1}
\left(
1+
\frac{1+\gamma}{c\sqrt{t+1}\log^2(t+3)}
\right)
-
\frac{\rinf}{(1+\gamma)\phiinf}\,.
\]
Fix any $\delta\in(0,1)$ and set
\[
\rbase
\coloneqq
\max\left\{B_{m^*}+\norm{{\btheta}^*},\ \norm{{\btheta}^*},\ \frac{\rinf}{\phiinf}\right\}\,.
\]
Define
\[
A_1(\delta)
\coloneqq
1536\sqrt{\frac{3}{\log 2}}\sqrt{2\log\frac{8}{\delta}}+2304,
\qquad
A_2
\coloneqq
\frac{8118}{\log 2},
\]
\[
c_{\min}(\delta)
\coloneqq
\frac{A_1(\delta)+\sqrt{A_1^2(\delta)+4A_2}}{2},
\qquad
\rho
\coloneqq
\frac{2c}{\sqrt{c^2-A_1(\delta)c-A_2}},
\qquad
\rmax
\coloneqq
\rho\rbase\,.
\]
Then, provided that $c>c_{\min}(\delta)$, with probability at least $1-\delta$, we have
\[
\sup_{t\ge0} \ \norm{{\btheta}_t}
\le
\rmax,
\]
and for every $T\ge1$, $f(\bar{\btheta}_T)-f({\btheta}^*)$ is upper-bounded by
\begin{align*}
\rmaxs\,
\min&\left\{
\underbrace{
\frac{c^2\phi_\infty^4\log^4(T+2)}
{\omega(\sqrt{T+1}-1)^2}
\left(
1+
\frac{264\mixing}{c\log^2 3}
+
\frac{176\sqrt{3}\mixing}{c\sqrt{\log 2}}\sqrt{2\log\frac{8}{\delta}}
+
\frac{288\mixing}{c^2\log 2}
\right)^2
}_{\text{fast rate}},
\right.
\\
&\quad \left.
\underbrace{
\frac{c\phi_\infty^2\log^2(T+2)}
{4(\sqrt{T+1}-1)}
\left(
1+
\frac{1152\mixing}{c\log^2 3}
+
\frac{768\sqrt{3}\mixing}{c\sqrt{\log 2}}\sqrt{2\log\frac{8}{\delta}}
+
\frac{4032\mixing+27}{c^2\log 2}
\right)
}_{\text{robust rate}}
\right\}\,.
\end{align*}
Equivalently, since $\mixing\ge1$, the horizon dependence is
\[
f(\bar{\btheta}_T)-f({\btheta}^*)
=
\min\left\{
\widetilde{\mathcal O}\!\left(\frac{\rmaxs\mixing^2\phi_\infty^4}{\omega T}\right),
\widetilde{\mathcal O}\!\left(\frac{\rmaxs\mixing\phi_\infty^2}{\sqrt T}\right)
\right\},
\]
where the $\widetilde{\mathcal O}(\cdot)$ notation hides logarithmic factors in $T$ and $1/\delta$, as well as numerical constants depending on $c$.
Moreover, the prefactor depends doubly exponentially on $\mixing$ through its dependence on $\rbase$: using $m^*\le e^{\mixing}$ and Lemma~\ref{lem:deterministic-prefix},
\[
B_{m^*}
\le
\left(\norm{{\btheta}_0}+\frac{\rinf}{(1+\gamma)\phiinf}\right)
\exp\left(
\frac{2(1+\gamma)e^{\mixing/2}}{c\log^2 3}
\right)
-
\frac{\rinf}{(1+\gamma)\phiinf}\,.
\]
Consequently, for fixed $c,\gamma,\phi_\infty,\rinf,{\btheta}_0,{\btheta}^*$, and $\delta$, there is a constant $C_0$ independent of $\mixing$ such that
\[
\rmaxs
\le
C_0
\exp\left(
\frac{4(1+\gamma)e^{\mixing/2}}{c\log^2 3}
\right)
= \exp\left(\mathcal O\bigl(e^{\mixing/2}\bigr)\right)\,.
\]
Thus the total fast and robust prefactors are both of doubly exponential order
$\exp(\mathcal O(e^{\mixing/2}))$ up to polynomial factors in $\mixing$.
\end{corollary}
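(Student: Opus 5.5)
The idea is to split time at the deterministic index $m^*$. For $t\ge m^*$ we have $\log(t+3)\ge\mixing$, so
\[
\eta_t=\frac{1}{c\phiinfs\sqrt{t+1}\log^2(t+3)}\le \frac{1}{c\mixing\phiinfs}\cdot\frac{1}{\sqrt{t+1}\log(t+3)}.
\]
From $m^*$ onward the schedule therefore has the form $\eta_t=\eta_{\mathrm{base}}a_t$ of Theorem~\ref{thm:bounded-iterates-main}. Here $\eta_{\mathrm{base}}=1/(c\mixing\phiinfs)$ and $a_t\le(\sqrt{t+1}\log(t+3))^{-1}$, so $a_t$ is non-increasing, $a_t\le 1$, and $\sum a_t^2\le 3/\log 2$ by Lemma~\ref{lem:stepsize-lower-bound}(i). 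This reproduces the constants $A_1(\delta),A_2$ of Corollary~\ref{cor:bounded-iterates-sqrtlog}.

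\textbf{Prefix.} On $\{0,\dots,m^*\}$ I would use no probability at all. Lemma~\ref{lem:deterministic-prefix} gives $\norm{\btheta_s}\le B_{m^*}$ for $s\le m^*$. Hence $\norm{\btheta_{m^*}-\btheta^*}\le B_{m^*}+\norm{\btheta^*}\le\rbase$, and $\rbase\le\rmax$ since $\rho>2$.

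\textbf{Tail.} Restart the analysis of Theorem~\ref{thm:bounded-iterates-main} at time $m^*$, with initial point $\btheta_{m^*}$ and shifted stepsizes $\eta_{m^*+s}$. The Poisson toolkit (Lemmas~\ref{lem:poisson-solution}, \ref{lem:generic-poisson-growth}, \ref{lem:R-Abel}) and Pinelis' inequality need only a fixed filtration and the Markov property. Starting at the deterministic time $m^*$ from an arbitrary state, with $\btheta_{m^*}$ being $\mathcal F_{m^*-1}$-measurable, is harmless: the stopping-time localization with $\exit$ and the martingale arguments go through verbatim. The bootstrap inequality of Lemma~\ref{lem:bootstrap-ineq} then holds with $\norm{\be_{m^*}}^2\le\rbases$ in place of $\norm{\be_0}^2$. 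Every $\sum\eta_t^2$ and $\eta_{m^*}$ appearing there is bounded by $(c\mixing\phiinfs)^{-2}\cdot 3/\log 2$ and $(c\mixing\phiinfs)^{-1}$ respectively. The same quadratic condition $4+\rho^2A_1/c+\rho^2A_2/c^2\le\rho^2$ therefore closes the induction, and gives $\sup_{t\ge m^*}\norm{\btheta_t}\le\rmax$ with probability $1-\delta/4$. Together with the prefix bound this yields $\sup_t\norm{\btheta_t}\le\rmax$.

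\textbf{Rates.} With bounded iterates in hand, I would rerun the proof of Theorem~\ref{thm:PR-main} on the full average from $t=0$, using the three further events $\mathcal E_2,\mathcal E_3,\mathcal E_4$ and a union bound. Lemmas~\ref{lem:I1-bound}, \ref{lem:I2-bound}, \ref{lem:I3-bound} and the robust decomposition apply to any non-increasing square-summable schedule. Their constants involve $\mixing\phiinfs\sqrt{\sum\eta_t^2}$, $\mixing\phiinfs\eta_0$ and $\mixing\phi_\infty^4\sum\eta_t^2$, which are now bounded using $\sum_t\eta_t^2\le 3/(c^2\phi_\infty^4\log 2\cdot\log^2 3)$-type estimates. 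This is where the extra $\mixing$ factors inside the parentheses come from, since $\mixing$ is no longer absorbed by the stepsize.

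Two points need care. The robust bound uses the identity \eqref{eq:main_decom} over the whole horizon, so the bootstrap-type bound on $B_T$ and on $\sum\eta^2\norm{\bg}^2$ must hold with $\rmax$ from time $0$; I would apply the Poisson decomposition from $t=0$ using $\sup_t\norm{\btheta_t}\le\rmax$, which is legitimate on $\mathcal E_R$ via the same stopping time. The normalization uses $S_T\ge 2(\sqrt{T+1}-1)/(c\phiinfs\log^2(T+2))$, the analogue of Lemma~\ref{lem:stepsize-lower-bound}, giving the $\log^4$ and $\log^2$ factors.

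\textbf{Doubly exponential bound.} Use $m^*\le e^{\mixing}$ together with Lemma~\ref{lem:deterministic-prefix}, and square to bound $\rmaxs$.

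\textbf{Main obstacle.} The main obstacle is the restart at $m^*$. The Pinelis bound and the Abel-summation remainder must be anchored at a deterministic but nonzero time, where the first stepsize $\eta_{m^*}$ and the $\mixing$-dependence enter only through the tail schedule. I would also have to check that the prefix noise does not leak into the tail martingale; it does not, since the forcing functions depend only on the current iterate.
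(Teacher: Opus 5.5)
Your proposal is correct and follows essentially the same route as the paper: prefix control via Lemma~\ref{lem:deterministic-prefix}, a restart at $m^*$ where $\log(t+3)\ge\mixing$ puts the tail schedule in the form $\eta_{m^*+k}=\frac{1}{c\mixing\phiinfs}a'_k$ with non-increasing $a'_k\le 1$ and $\sum_k (a'_k)^2\le 3/\log 2$, then the events $\mathcal{E}_2,\mathcal{E}_3,\mathcal{E}_4$ from time $0$ at radius $\rmax$, a four-way union bound, and Lemma~\ref{lem:stepsize-lower-bound}(ii) with $p=2$. The only difference is cosmetic: the paper conditions on $\mathcal{F}_{m^*}$ and invokes Theorem~\ref{thm:bounded-iterates-main} as a black box on the shifted chain, whereas you rerun its localized bootstrap directly with the deterministic radius $\rho\rbase\ge\rho\norm{\btheta_{m^*}-\btheta^*}$, which is equally valid.
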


\begin{proof}
Let
\[
L_\delta\coloneqq\sqrt{2\log\frac{8}{\delta}},
\qquad
H\coloneqq\sum_{t=0}^\infty \eta_t^2\,.
\]
We first prove the bounded-iterates event.  Consider the shifted process
\[
{\btheta}'_k\coloneqq {\btheta}_{m^*+k},
\qquad
Z'_k\coloneqq Z_{m^*+k},
\qquad
\eta'_k\coloneqq\eta_{m^*+k}\,.
\]
The shifted chain has the same transition kernel, hence the same mixing constant $\mixing$.
Then
\[
\eta'_k
=
\frac{1}{c\mixing\phiinfs}a'_k,
\qquad
a'_k
\coloneqq
\frac{\mixing}{\sqrt{m^*+k+1}\log^2(m^*+k+3)}\,.
\]
By the definition of $m^*$, for every $k\ge0$,
\[
0<a'_k
\le
\frac{1}{\sqrt{m^*+k+1}\log(m^*+k+3)},
\qquad
a'_0\le 1,
\]
and $(a'_k)_{k\ge0}$ is non-increasing.  Moreover, Lemma~\ref{lem:stepsize-lower-bound}(i) gives
\begin{align*}
\sum_{k=0}^\infty (a'_k)^2
&=
\mixing^2\sum_{k=0}^\infty
\frac{1}{(m^*+k+1)\log^4(m^*+k+3)}
\\
&\le
\sum_{k=0}^\infty
\frac{1}{(m^*+k+1)\log^2(m^*+k+3)}
\le
\frac{3}{\log 2}\,.
\end{align*}
Conditioning on $\mathcal{F}_{m^*}=\sigma(Z_0,\dots,Z_{m^*})$ and applying Theorem~\ref{thm:bounded-iterates-main} to the shifted recursion with confidence parameter $\delta/4$, we obtain an event $\mathcal{E}_R^+$ such that
\[
\Pr\{\mathcal{E}_R^+\mid\mathcal F_{m^*}\}\ge1-\frac{\delta}{4}
\quad a.s.,
\]
and on $\mathcal{E}_R^+$,
\[
\sup_{k\ge0} \ \norm{{\btheta}_{m^*+k}}
\le
\rho
\max\left\{
\norm{{\btheta}_{m^*}-{\btheta}^*},
\norm{{\btheta}^*},
\frac{\rinf}{\phiinf}
\right\}
\le
\rho\rbase
=
\rmax\,.
\]
The constants $A_1(\delta)$, $A_2$, and $c_{\min}(\delta)$ are exactly those obtained from the last square-summability bound and the confidence parameter $\delta/4$.
Taking expectations in the preceding conditional probability bound gives $\Pr\{\mathcal{E}_R^+\}\ge1-\delta/4$.
On the finite prefix, Lemma~\ref{lem:deterministic-prefix} gives
$\sup_{0\le s\le m^*} \ \norm{{\btheta}_s}\le B_{m^*}\le\rbase\le\rmax$.
Therefore the event
\[
\mathcal E_R
\coloneqq
\left\{\sup_{t\ge0} \ \norm{{\btheta}_t}\le\rmax\right\}
\]
satisfies
\[
\Pr\{\mathcal E_R\}\ge1-\frac{\delta}{4}\,.
\]
We next follow the proof strategy of Theorem~\ref{thm:PR-main} in Section~\ref{app:proof-PR-main}, spelling out the high-probability events needed for the present mixing-free stepsize.
For every $T\ge1$, the energy decomposition~\eqref{eq:main_decom}, which will be used for the robust-rate bound, gives
\begin{align}
\label{eq:cor20-decom-robust}
\|\be_T\|^2
&=\|\be_0\|^2
+ \sum_{t=1}^T \eta^2_{t-1} \|\bg({\btheta}_{t-1},Z_{t-1})\|^2
+ 2\sum_{t=1}^T \eta_{t-1}\langle\bar \bg({\btheta}_{t-1}),{\btheta}_{t-1}-{\btheta}^*\rangle
+ B_T,
\end{align}
where $\be_t={\btheta}_t-{\btheta}^*$ and
\[
B_T
=
2\sum_{t=1}^T \eta_{t-1}\,
\langle \bg({\btheta}_{t-1},Z_{t-1})-\bar \bg({\btheta}_{t-1}),\,{\btheta}_{t-1}-{\btheta}^*\rangle\,.
\]
Similarly, the PR decomposition~\eqref{eq:PR-decomp-eta}, which will be used for the fast-rate bound, gives
\begin{align}
\label{eq:cor20-decom-fast}
\bA\bar{\be}_T=I_{1,T}+I_{2,T}+I_{3,T},
\end{align}
where
\[
\bar{\be}_T
:=
\frac{1}{S_T}\sum_{t=1}^{T}\eta_{t-1} \be_{t-1},
\]
and
\[
I_{1,T}=\frac{1}{S_T}\sum_{t=1}^T(\be_{t-1}-\be_t),
\qquad
I_{2,T}=\frac{1}{S_T}\sum_{t=1}^T\eta_{t-1}\bxi_{t-1},
\qquad
I_{3,T}=\frac{1}{S_T}\sum_{t=1}^T\eta_{t-1}\bdelta_{t-1}\be_{t-1}\,.
\]
We now introduce three additional high-probability events, $\mathcal E_2$, $\mathcal E_3$, and $\mathcal E_4$, to control the terms appearing in the two decompositions above.
First, applying Lemma~\ref{lem:I2-bound} with confidence parameter $\delta/4$ and using
$\rinf\phiinf+2\phiinfs\norm{{\btheta}^*}\le3\phiinfs\rmax$, we obtain an event $\mathcal E_2$ such that
\[
\Pr\{\mathcal E_2\}\ge1-\frac{\delta}{4},
\]
and, on $\mathcal E_2$, for all $T\ge1$,
\[
\norm{I_{2,T}}
\le \frac{48\mixing\phiinfs\rmax}{S_T} \left(2\sqrt{H}L_\delta+3\eta_0\right)\,.
\]
Next, to control $I_{3,T}$ in~\eqref{eq:cor20-decom-fast} and the Markov-bias term $B_T$ in~\eqref{eq:cor20-decom-robust}, we use the stopped process at radius $\rmax$:
\[
\exit\coloneqq\inf\{t\ge0:\norm{{\btheta}_t}>\rmax\},
\quad
\tilde{\btheta}_t\coloneqq {\btheta}_{t\wedge\exit},
\quad
\tilde \be_t:=\tilde{\btheta}_t-{\btheta}^*,
\quad
\tilde\eta_t\coloneqq \eta_t\indicator\{t<\exit\}\,.
\]
Define the corresponding localized quantities:
\[
\tilde I_{3,T}
:=
\frac{1}{S_T}\sum_{t=1}^{T}\tilde\eta_{t-1}\bdelta_{t-1} \tilde \be_{t-1},
\qquad
\tilde B_T
:=
2\sum_{t=1}^T \tilde\eta_{t-1}\,
\langle \bg(\tilde{\btheta}_{t-1},Z_{t-1})-\bar \bg(\tilde{\btheta}_{t-1}),\,\tilde{\btheta}_{t-1}-{\btheta}^*\rangle\,.
\]
Applying Lemma~\ref{lem:I3-bound} with confidence parameter $\delta/4$ gives an event $\mathcal E_3$ such that
\[
\Pr\{\mathcal E_3\}\ge1-\frac{\delta}{4},
\]
and, on $\mathcal E_3$, for all $T\ge1$,
\[
\norm{\tilde I_{3,T}}
\le \frac{1}{S_T} \left(
256\mixing\phiinfs\rmax\sqrt{H}L_\delta
+384\eta_0\mixing\phiinfs\rmax
+192\mixing\phi_\infty^4\rmax H
\right)\,.
\]
Finally, Lemma~\ref{lem:Btilde-hp-any} with confidence parameter $\delta/4$, together with the deterministic quadratic-term bound used in the proof of Lemma~\ref{lem:bootstrap-ineq}, gives an event $\mathcal E_4$ such that
\[
\Pr\{\mathcal E_4\}\ge1-\frac{\delta}{4},
\]
and, on $\mathcal E_4$, for all $T\ge1$,
\begin{equation}
\label{eq:cor20-energy-control}
\begin{aligned}
\sum_{t=1}^T\tilde\eta_{t-1}^2 \norm{\bg(\tilde{\btheta}_{t-1},Z_{t-1})}^2 + \tilde B_T
&\le 768\mixing\phiinfs\rmaxs\sqrt{H}L_\delta +1152\eta_0\mixing\phiinfs\rmaxs\\
&\quad +1344\mixing\phi_\infty^4\rmaxs H +9\phi_\infty^4\rmaxs H\,.
\end{aligned}
\end{equation}
Define
\[
\mathcal E
\coloneqq \mathcal E_R\cap\mathcal E_2\cap\mathcal E_3\cap\mathcal E_4\,.
\]
Since each of $\mathcal E_R$, $\mathcal E_2$, $\mathcal E_3$, and $\mathcal E_4$ has probability at least $1-\delta/4$, the union bound gives $\Pr\{\mathcal E\}\ge1-\delta$.
On $\mathcal E_R$, we have $\exit=\infty$, so the stopped and original processes coincide. Hence, on $\mathcal E$, $\tilde I_{3,T}=I_{3,T}$ and $\tilde B_T=B_T$ for every $T\ge1$.

We now derive the fast-rate control on $\mathcal E$.  By Lemma~\ref{lem:I1-bound},
\[
\norm{I_{1,T}}\le\frac{2\rmax}{S_T}\,.
\]
Combining~\eqref{eq:cor20-decom-fast} with the preceding bounds on $I_{1,T}$, $I_{2,T}$, and $\tilde I_{3,T}=I_{3,T}$, for all $T\ge1$,
\begin{equation}
\label{eq:cor20-Ae-fast-preconstant}
\norm{\bA\bar{\be}_T}
\le
\frac{\rmax}{S_T}
\left[
2
+528\eta_0\mixing\phiinfs
+352\mixing\phiinfs\sqrt{H}L_\delta
+192\mixing\phi_\infty^4 H
\right]\,.
\end{equation}
For the present stepsize,
\begin{equation}
\label{eq:cor20-eta0-H-bounds}
\eta_0=\frac{1}{c\phiinfs\log^2 3},
\qquad
H
= \frac{1}{c^2\phi_\infty^4}
\sum_{t=0}^\infty\frac{1}{(t+1)\log^4(t+3)}
\le \frac{3}{c^2\phi_\infty^4\log 2},
\end{equation}
where the last inequality follows from $\log(t+3)\ge\log 3>1$ and Lemma~\ref{lem:stepsize-lower-bound}(i).
Hence
\[
\begin{aligned}
&2
+528\eta_0\mixing\phiinfs
+352\mixing\phiinfs\sqrt{H}L_\delta
+192\mixing\phi_\infty^4 H
\\
&\qquad\le 2\left( 1+ \frac{264\mixing}{c\log^2 3} + \frac{176\sqrt{3}\mixing}{c\sqrt{\log 2}}L_\delta + \frac{288\mixing}{c^2\log 2} \right)\,.
\end{aligned}
\]
Using~\eqref{eq:fast_rate_decomp}, \eqref{eq:cor20-Ae-fast-preconstant}, and the preceding simplification,
\begin{equation}
\label{eq:cor20-fast-before-ST}
\begin{aligned}
f(\bar{\btheta}_T)-f({\btheta}^*)
&\le
\frac{4\rmaxs}{\omega S_T^2}
\left(
1+
\frac{264\mixing}{c\log^2 3}
+
\frac{176\sqrt{3}\mixing}{c\sqrt{\log 2}}L_\delta
+
\frac{288\mixing}{c^2\log 2}
\right)^2\,.
\end{aligned}
\end{equation}

We next derive the robust-rate control on $\mathcal E$.
Using the robust rate decomposition~\eqref{eq:cor20-decom-robust}, the identity $\tilde B_T=B_T$ on $\mathcal E$, and~\eqref{eq:cor20-energy-control},
\[
\begin{aligned}
2\sum_{t=1}^T\eta_{t-1}
\Ip{\bar\bg({\btheta}_{t-1})}{{\btheta}^*-{\btheta}_{t-1}}
&\le
\rmaxs
+768\mixing\phiinfs\rmaxs\sqrt{H}L_\delta
+1152\eta_0\mixing\phiinfs\rmaxs
\\
&\quad
+1344\mixing\phi_\infty^4\rmaxs H
+9\phi_\infty^4\rmaxs H \,.
\end{aligned}
\]
By convexity of $f$ and~\eqref{eq:dirichlet-monotone},
\[
\begin{aligned}
f(\bar{\btheta}_T)-f({\btheta}^*)
&\le
\frac{\rmaxs}{S_T}
\left[
\frac12
+384\mixing\phiinfs\sqrt{H}L_\delta
+576\eta_0\mixing\phiinfs
\right.
\\
&\qquad\qquad\left.
+672\mixing\phi_\infty^4H
+\frac92\phi_\infty^4H
\right]\,.
\end{aligned}
\]
Using~\eqref{eq:cor20-eta0-H-bounds},
\[
\begin{aligned}
&\frac12
+384\mixing\phiinfs\sqrt{H}L_\delta
+576\eta_0\mixing\phiinfs
+672\mixing\phi_\infty^4H
+\frac92\phi_\infty^4H
\\
&\qquad\le
\frac12\left(
1+
\frac{1152\mixing}{c\log^2 3}
+
\frac{768\sqrt{3}\mixing}{c\sqrt{\log 2}}L_\delta
+
\frac{4032\mixing+27}{c^2\log 2}
\right)\,.
\end{aligned}
\]
Therefore,
\begin{equation}
\label{eq:cor20-robust-before-ST}
f(\bar{\btheta}_T)-f({\btheta}^*)
\le
\frac{\rmaxs}{2S_T}
\left(
1+
\frac{1152\mixing}{c\log^2 3}
+
\frac{768\sqrt{3}\mixing}{c\sqrt{\log 2}}L_\delta
+
\frac{4032\mixing+27}{c^2\log 2}
\right)\,.
\end{equation}
Finally, Lemma~\ref{lem:stepsize-lower-bound} with $p=2$ and $\kappa=c\phiinfs$ gives
\[
S_T
\ge \frac{2(\sqrt{T+1}-1)}{c\phiinfs\log^2(T+2)},
\]
and hence
\begin{equation}
\label{eq:cor20-ST-inverse-bounds}
\frac{1}{S_T}
\le \frac{c\phiinfs\log^2(T+2)}{2(\sqrt{T+1}-1)},
\qquad
\frac{1}{S_T^2}
\le \frac{c^2\phi_\infty^4\log^4(T+2)}{4(\sqrt{T+1}-1)^2}\,.
\end{equation}
Substituting~\eqref{eq:cor20-ST-inverse-bounds} into~\eqref{eq:cor20-fast-before-ST} and~\eqref{eq:cor20-robust-before-ST} gives the displayed explicit rates.
The big-$\widetilde{\mathcal O}$ statement follows directly from the displayed formula, and the doubly-exponential dependence on $\mixing$ follows from Lemma~\ref{lem:deterministic-prefix} and $m^*\le e^{\mixing}$.
\end{proof}
\section{Technical Lemmas}

\subsection{Pinelis' Inequality}
\begin{lemma}[{\citealt[Thm.~3.5]{pinelis1994optimum}}]
\label{lem:pinelis}
Let $(f_j)_{j\ge 0}$ be a martingale in a $(2,D)$-smooth Banach space $(X,\|\cdot\|)$ with $f_0=0$.
Let $d_j:=f_j-f_{j-1}$, $f^\ast:=\sup_{j\ge 0} \ \|f_j\|$, and $\|d_j\|_\infty:=\operatorname*{ess\,sup}\|d_j\|$.
If $\sum_{j\ge 1}\|d_j\|_\infty^2 \le b_\ast^2$, then for all $r\ge 0$,
\[
\mathbb{P}\!\left(f^\ast \ge r\right)\le 2\exp\!\left(-\frac{r^2}{2D^2 b_\ast^2}\right)\,.
\]
\end{lemma}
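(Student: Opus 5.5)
This is Pinelis' inequality, cited as Theorem~3.5 of \citet{pinelis1994optimum}. A self-contained route goes through the smoothness of the norm and an exponential supermartingale. By $(2,D)$-smoothness of $(X,\|\cdot\|)$ we may assume, after renorming within the smoothness class, that $\psi(x)\coloneqq\|x\|^2$ satisfies
\[
\psi(x+y)\le\psi(x)+\langle\nabla\psi(x),y\rangle+D^2\|y\|^2 .
\]
Define the hyperbolic-cosine potential $\Phi_\lambda(x)\coloneqq\cosh(\lambda\|x\|)$. The plan is to prove the one-step inequality
\[
\E\bigl[\Phi_\lambda(f_{j-1}+d_j)\mid\mathcal F_{j-1}\bigr]\le\Phi_\lambda(f_{j-1})\exp\Bigl(\tfrac{\lambda^2D^2}{2}\|d_j\|_\infty^2\Bigr).
\]
This uses two facts. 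First, $\Phi_\lambda$ is convex with $\Phi_\lambda'(0)=0$, so the first-order term vanishes after conditioning, because $\E[d_j\mid\mathcal F_{j-1}]=0$. Second, the second-order Taylor term is controlled by $D^2\lambda^2\|d_j\|^2\cosh(\lambda\|x\|)$. This second fact is the key lemma in Pinelis' paper, and it is where the main obstacle lies.

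To establish that lemma, we would work in one dimension along the segment $s\mapsto x+sd$. We would compare $g(s)=\cosh(\lambda\|x+sd\|)$ with the function $\cosh(\lambda\|x\|)\cosh(\lambda D s\|d\|)$, using the smoothness bound to control $g''$. This is the standard argument in Pinelis, Theorem~3 of the 1994 paper, and we would follow it.

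Given the one-step inequality, we iterate it. Since the bounds $\|d_j\|_\infty$ are deterministic, the process
\[
G_n\coloneqq\Phi_\lambda(f_n)\exp\Bigl(-\tfrac{\lambda^2D^2}{2}\sum_{j\le n}\|d_j\|_\infty^2\Bigr)
\]
is a nonnegative supermartingale with $G_0=1$, because $f_0=0$. Ville's maximal inequality then gives
\[
\Pr\Bigl(\sup_n\cosh(\lambda\|f_n\|)\ge\cosh(\lambda r)\,e^{-\lambda^2D^2b_*^2/2}\Bigr)\le\cosh(\lambda r)^{-1}\,e^{\lambda^2D^2b_*^2/2}.
\]
Equivalently, since $e^{\lambda^2D^2b_*^2/2}\ge e^{\lambda^2D^2\sum_j\|d_j\|_\infty^2/2}$, we get
\[
\Pr\bigl(f^*\ge r\bigr)\le\frac{e^{\lambda^2D^2b_*^2/2}}{\cosh(\lambda r)}\le 2\,e^{\lambda^2D^2b_*^2/2-\lambda r}.
\]

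Finally we optimize over $\lambda$. Taking $\lambda=r/(D^2b_*^2)$ yields the bound $2\exp\bigl(-r^2/(2D^2b_*^2)\bigr)$, which is the claim. The case $r=0$ is trivial, and the supremum over all $j\ge0$ follows by monotone convergence over finite horizons.
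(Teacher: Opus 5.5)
Your proposal is correct and is the argument behind the citation. The paper gives no proof of its own: it imports the bound verbatim from Pinelis's Theorem~3.5 and only applies it in $\mathbb{R}^d$ with $D=1$. Your $\cosh(\lambda\|\cdot\|)$ supermartingale, combined with Ville's inequality and the choice $\lambda=r/(D^2b_*^2)$, follows Pinelis's route and yields exactly the stated constant.

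A few points need fixing when you write it out:

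\begin{itemize}
\item State the key lemma at the moving point: $g''(s)\le\lambda^2D^2\|d\|^2g(s)$ for $g(s)=\cosh(\lambda\|x+sd\|)$. Your bound by $D^2\lambda^2\|d\|^2\cosh(\lambda\|x\|)$ is false; for $x=0$ and $d=\pm b$ in $\mathbb{R}$ one has $g''(s)=\lambda^2b^2\cosh(\lambda sb)$.
\item Run the comparison with $\cosh(\lambda\|f_{j-1}\|)\cosh(\lambda Ds\|d_j\|_\infty)$ on $s\mapsto\mathbb{E}[\cosh(\lambda\|f_{j-1}+sd_j\|)\mid\mathcal{F}_{j-1}]$, whose derivative vanishes at $s=0$, not pathwise. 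Pathwise, $g'(0)\neq0$ in general.
\item No renorming is needed. The $(2,D)$-smoothness of the given norm already yields your one-sided Taylor bound, whereas renorming would change $f^\ast$ and $D$.
\item Your Ville display should bound the event $\{\sup_n G_n\ge\cdot\}$, not $\{\sup_n\cosh(\lambda\|f_n\|)\ge\cdot\}$. The final inequality you draw from it is nonetheless correct.
\end{itemize}
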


\subsection{Miscellaneous lemmas}

\begin{lemma}[Dirichlet matrix]
\label{lem:L-dir-properties}
The Dirichlet matrix
\[
\bL_{\mathrm{Dir}}
:=
\bD-\frac{1}{2}\bigl(\bD\bP^\mu+(\bP^\mu)^\top\bD\bigr)
\]
is symmetric and positive semidefinite. More precisely, if
\[
\bP_{\mathrm{sym}}
:=
\frac{1}{2}\Bigl(\bP^\mu+\bD^{-1}(\bP^\mu)^\top\bD\Bigr),
\]
then, for every \(\bx\in\mathbb{R}^n\),
\[
\bx^\top \bL_{\mathrm{Dir}} \bx
=
\frac{1}{2}\sum_{i,j}\pi(i)\bP_{\mathrm{sym}}(i,j)(x_i-x_j)^2
=
\frac{1}{2}\sum_{i,j}\pi(i)P^\mu(i,j)(x_i-x_j)^2
\ge 0\,.
\]
\end{lemma}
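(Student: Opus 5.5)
The proof is a direct computation of the quadratic form. The two facts it needs are that $\bP^\mu$ is row-stochastic and that $\pi$ is stationary, i.e.\ $\sum_i\pi(i)P^\mu(i,j)=\pi(j)$.

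\emph{Symmetry.} $\bD$ is diagonal, hence symmetric. Since $(\bD\bP^\mu)^\top=(\bP^\mu)^\top\bD$, the matrix $\frac12(\bD\bP^\mu+(\bP^\mu)^\top\bD)$ is symmetric. Therefore $\bL_{\mathrm{Dir}}$ is symmetric.

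\emph{Reducing the form.} A matrix and its symmetrization have the same quadratic form, so $\bx^\top\bL_{\mathrm{Dir}}\bx=\bx^\top\bD\bx-\bx^\top\bD\bP^\mu\bx$. I will write both terms as double sums with weights $\pi(i)P^\mu(i,j)$.
\begin{itemize}
\item The cross term is $\bx^\top\bD\bP^\mu\bx=\sum_{i,j}\pi(i)P^\mu(i,j)x_ix_j$.
\item Because rows sum to one, $\bx^\top\bD\bx=\sum_i\pi(i)x_i^2=\sum_{i,j}\pi(i)P^\mu(i,j)x_i^2$.
\item Stationarity gives $\sum_i\pi(i)P^\mu(i,j)=\pi(j)$, so the same quantity also equals $\sum_{i,j}\pi(i)P^\mu(i,j)x_j^2$.
\end{itemize}
Averaging the last two expressions for $\bx^\top\bD\bx$ and subtracting the cross term gives
\[
\bx^\top\bL_{\mathrm{Dir}}\bx=\tfrac12\sum_{i,j}\pi(i)P^\mu(i,j)\bigl(x_i^2+x_j^2-2x_ix_j\bigr)=\tfrac12\sum_{i,j}\pi(i)P^\mu(i,j)(x_i-x_j)^2 .
\]
Every weight $\pi(i)P^\mu(i,j)$ is nonnegative, so the right-hand side is $\ge0$. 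This proves positive semidefiniteness.

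\emph{The $\bP_{\mathrm{sym}}$ form.} From the definition, $\pi(i)\bP_{\mathrm{sym}}(i,j)=\frac12\bigl(\pi(i)P^\mu(i,j)+\pi(j)P^\mu(j,i)\bigr)$. Swap $i$ and $j$ in the second piece and use that $(x_i-x_j)^2$ is symmetric in $(i,j)$. The sum $\frac12\sum_{i,j}\pi(i)\bP_{\mathrm{sym}}(i,j)(x_i-x_j)^2$ then equals the $P^\mu$-weighted sum above.

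The only step that is not bookkeeping is the stationarity rewrite of $\bx^\top\bD\bx$. It is what turns the expression into a sum of squares; the rest is algebra.
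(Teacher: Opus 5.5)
Your proof is correct and rests on the same computation as the paper, which uses the same ``average the two expressions for $\bx^\top\bD\bx$'' trick. The paper applies that trick to the reversible kernel $\bP_{\mathrm{sym}}$, using its row-stochasticity and detailed balance, and then swaps indices to reach the $P^\mu$ form. You apply it directly to $P^\mu$ using stationarity and recover the $\bP_{\mathrm{sym}}$ form by the same index swap, so your version never needs the reversal kernel (or $\pi>0$) for the positive-semidefiniteness claim itself.
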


\begin{proof}
First, \(\bL_{\mathrm{Dir}}\) is symmetric because \(\bD^\top=\bD\):
\[
\bL_{\mathrm{Dir}}^\top
=
\left(\bD-\frac{1}{2}\bigl(\bD\bP^\mu+(\bP^\mu)^\top\bD\bigr)\right)^\top
=
\bD-\frac{1}{2}\bigl((\bP^\mu)^\top\bD+\bD\bP^\mu\bigr)
=
\bL_{\mathrm{Dir}}\,.
\]

Define the time-reversal kernel
\[
\bP_{\mathrm{rev}}^\mu
:=
\bD^{-1}(\bP^\mu)^\top\bD,
\qquad
\bP_{\mathrm{rev}}^\mu(i,j)
=
\frac{\pi(j)P^\mu(j,i)}{\pi(i)}\,.
\]
Since \(\pi^\top\bP^\mu=\pi^\top\), \(\bP_{\mathrm{rev}}^\mu\) is row-stochastic:
\[
\sum_j \bP_{\mathrm{rev}}^\mu(i,j)
=
\frac{1}{\pi(i)}\sum_j\pi(j)P^\mu(j,i)
=
1\,.
\]
Hence \(\bP_{\mathrm{sym}}=\frac12(\bP^\mu+\bP_{\mathrm{rev}}^\mu)\) is also row-stochastic and satisfies detailed balance:
\[
\pi(i)\bP_{\mathrm{sym}}(i,j)
=
\frac12\bigl(\pi(i)P^\mu(i,j)+\pi(j)P^\mu(j,i)\bigr)
=
\pi(j)\bP_{\mathrm{sym}}(j,i)\,.
\]
Moreover,
\[
\bD\bP_{\mathrm{sym}}
=
\frac12\bigl(\bD\bP^\mu+(\bP^\mu)^\top\bD\bigr),
\]
so \(\bL_{\mathrm{Dir}}=\bD-\bD\bP_{\mathrm{sym}}\). Therefore, for any \(\bx\in\mathbb{R}^n\),
\[
\begin{aligned}
\bx^\top\bL_{\mathrm{Dir}}\bx
&=\sum_i\pi(i)x_i^2-\sum_{i,j}\pi(i)\bP_{\mathrm{sym}}(i,j)x_i x_j \\
&=\frac12\sum_{i,j}\pi(i)\bP_{\mathrm{sym}}(i,j)(x_i-x_j)^2
\ge0,
\end{aligned}
\]
where the second equality uses that \(\bP_{\mathrm{sym}}\) is row-stochastic and satisfies detailed balance.
Finally, using the displayed identity for \(\pi(i)\bP_{\mathrm{sym}}(i,j)\) and swapping indices in the second term gives
\[
\frac12\sum_{i,j}\pi(i)\bP_{\mathrm{sym}}(i,j)(x_i-x_j)^2
=
\frac12\sum_{i,j}\pi(i)P^\mu(i,j)(x_i-x_j)^2\,.
\]
This proves the claimed identity and \(\bL_{\mathrm{Dir}}\succeq0\).
\end{proof}

\begin{lemma}[Logarithmic stepsize estimates]
\label{lem:stepsize-lower-bound}
\label{lem:stepsize-square-summable}
The following estimates hold, where $\log(\cdot)$ denotes the natural logarithm.

\emph{(i) Square summability.} Let $(a_t)_{t\ge 0}$ be defined by
\[
a_t
= \frac{1}{\sqrt{t+1}\,\log(t+3)} \qquad \forall t \ge 0\,.
\]
Define the tail sums
\[
Q_T \;:=\; \sum_{t=T}^{\infty} a_t^2, \qquad T\in\mathbb{N}\cup\{0\}\,.
\]
Then, for all integers $T\ge 0$,
\begin{equation}\label{eq:eta-square-tail}
Q_T
\;\le\; \frac{3}{\log(T+2)}\,.
\end{equation}
In particular,
\[
\sum_{t=0}^{\infty}a_t^2
\;\le\;
\frac{3}{\log 2}
\;<\;\infty\,.
\]

\emph{(ii) Partial-sum lower bound.} Let $p\ge 0$, $\kappa>0$, and let $(\eta_t)_{t\ge0}$ be defined by
\[
\eta_t
= \frac{1}{\kappa\sqrt{t+1}\,\log^p(t+3)}, \qquad t\ge0\,.
\]
For every integer $T\ge1$, define
\[
S_T\coloneqq \sum_{t=1}^{T}\eta_{t-1}\,.
\]
Then
\[
S_T
\ge \frac{2(\sqrt{T+1}-1)}{\kappa\log^p(T+2)}\,.
\]
\end{lemma}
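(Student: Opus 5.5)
For part (i), I would compare the tail sum with an integral of $x\mapsto \frac{1}{(x+1)\log^2(x+1)}$, whose antiderivative is $-1/\log(x+1)$. The comparison has to be set up so that no singularity appears at $x=0$. So I would peel off the first term of $Q_T$ and compare only the remaining tail $Q_{T+1}$ with the integral.

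Concretely, fix $t\ge T+1$ and $x\in[t-1,t]$. Then $x+1\le t+1$ and $\log(x+1)\le \log(t+1)\le\log(t+3)$. Since $t+1\le 2t$ and $t\ge 1$, also $\frac{1}{t+1}\le\frac{1}{t}$, which gives
\[
a_t^2=\frac{1}{(t+1)\log^2(t+3)}\le \frac{1}{(t+1)\log^2(t+1)}\le\int_{t-1}^{t}\frac{dx}{(x+1)\log^2(x+1)}.
\]
This is valid for $t\ge2$, where $\log(t)>0$ on the interval. To keep a uniform factor of safety, I would instead use the cruder estimate $a_t^2\le \frac{2}{(t+2)\log^2(t+2)}\le 2\int_t^{t+1}\frac{dx}{(x+1)\log^2(x+1)}$. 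This holds because $\frac{t+2}{t+1}\le 2$, $\log(t+3)\ge\log(t+2)$, and the integrand is decreasing. Summing over $t\ge T+1$ gives $Q_{T+1}\le 2\int_{T+1}^\infty\frac{dx}{(x+1)\log^2(x+1)} = \frac{2}{\log(T+2)}$.

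It then remains to show $a_T^2\le \frac{1}{\log(T+2)}$, that is, $\log(T+2)\le (T+1)\log^2(T+3)$. This follows from $\log(T+2)\le T+1$ and $\log(T+3)\ge\log 3>1$. Together these give $Q_T\le 3/\log(T+2)$, and taking $T=0$ yields $\sum_t a_t^2\le 3/\log 2$.

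For part (ii), I would lower bound each term uniformly in the logarithm. For $1\le t\le T$,
\[
\eta_{t-1}=\frac{1}{\kappa\sqrt t\,\log^p(t+2)}\ge \frac{1}{\kappa\sqrt t\,\log^p(T+2)},
\]
using $p\ge0$ and $\log(t+2)\ge\log 3>0$. Then $\sum_{t=1}^T t^{-1/2}\ge\int_1^{T+1}x^{-1/2}\,dx=2(\sqrt{T+1}-1)$, since $t^{-1/2}\ge\int_t^{t+1}x^{-1/2}\,dx$. This gives the claim.

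The only delicate point is the boundary behaviour in (i): the naive integral comparison starting at $x=T$ produces $1/\log(T+1)$, which blows up at $T=0$. Peeling off one term, as above, is what fixes this and produces the constant $3$.
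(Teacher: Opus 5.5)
Your proof is correct and follows the paper's approach: part (ii) is identical (bound $\log^p(t+2)\le\log^p(T+2)$, then use $\sum_{t=1}^T t^{-1/2}\ge 2(\sqrt{T+1}-1)$), and part (i) rests on the same integral comparison with antiderivative $-1/\log$. The only difference is how the boundary is handled. The paper gets the factor $3$ in one step from $t+1\ge (t+3)/3$ and then compares $\sum_{n\ge T+3}\frac{1}{n\log^2 n}$ with $\int_{T+2}^\infty\frac{dx}{x\log^2 x}=\frac{1}{\log(T+2)}$. You instead peel off $a_T^2\le 1/\log(T+2)$ and pay a factor $2$ on the remaining tail, which works equally well.
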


\begin{proof}
For (i), by definition,
\[
a_t^2
= \frac{1}{(t+1)\,\log^2(t+3)}\,.
\]
For every $t\ge 0$, we have $t+1 \ge \frac{t+3}{3}$, hence
\[
\frac{1}{(t+1)\log^2(t+3)}
\le \frac{3}{(t+3)\log^2(t+3)}\,.
\]
Therefore,
\[
Q_T
\le \sum_{t=T}^{\infty}\frac{3}{(t+3)\log^2(t+3)}\,.
\]
Let $n=t+3$. Then
\[
\sum_{t=T}^{\infty}\frac{1}{(t+3)\log^2(t+3)}
= \sum_{n=T+3}^{\infty}\frac{1}{n\log^2 n}\,.
\]
Now define $f(x)=\frac{1}{x\log^2 x}$ for $x>1$. Since $f$ is decreasing on $(1,\infty)$,
the integral test yields, for all $N\ge 3$,
\[
\sum_{n=N}^{\infty} f(n)
\le
\int_{N-1}^{\infty} f(x)\,dx
=
\int_{N-1}^{\infty}\frac{1}{x\log^2 x}\,dx
=
\frac{1}{\log(N-1)}\,.
\]
Applying this with $N=T+3$ gives
\[
\sum_{n=T+3}^{\infty}\frac{1}{n\log^2 n}
\le \frac{1}{\log(T+2)}\,.
\]
Substituting back proves \eqref{eq:eta-square-tail}. Taking $T=0$ gives the claimed finite upper bound on
$\sum_{t=0}^{\infty}a_t^2$.

For (ii), let $s=t$. Then
\[
S_T
= \frac{1}{\kappa}\sum_{s=1}^{T}\frac{1}{\sqrt{s}\,\log^p(s+2)}\,.
\]
Since $p\ge0$, $\log(\cdot)$ is increasing, and $s+2\le T+2$ for all $1\le s\le T$,
\[
\frac{1}{\sqrt{s}\,\log^p(s+2)}
\ge \frac{1}{\sqrt{s}\,\log^p(T+2)}\,.
\]
Therefore,
\[
S_T
\ge \frac{1}{\kappa\log^p(T+2)} \sum_{s=1}^{T}\frac{1}{\sqrt{s}}\,.
\]
The function $x\mapsto x^{-1/2}$ is decreasing on $[1,\infty)$, hence
\[
\sum_{s=1}^{T}\frac{1}{\sqrt{s}}
\ge \int_{1}^{T+1}\frac{1}{\sqrt{x}}\,dx
= 2(\sqrt{T+1}-1)\,.
\]
Combining the last two displays proves the claim.
\end{proof}

\end{document}